\documentclass[12pt,final]{colt2019} 

\usepackage{xargs}
\usepackage{shortcuts_OPT,bm,stmaryrd}
\usepackage{bbm}
\usepackage[textwidth=1cm,textsize=footnotesize]{todonotes}

\usepackage{mdframed}
\newmdtheoremenv{theo}{Theorem}
\newmdtheoremenv{coro}{Corollary}




\title[Non-asymptotic Analysis of Biased Stochastic Approximation Scheme]{Non-asymptotic Analysis of Biased Stochastic Approximation Scheme}
\usepackage{times}



\coltauthor{\Name{Belhal Karimi} \Email{belhal.karimi@polytechnique.edu}\\
\addr CMAP, \'{E}cole Polytechnique, Palaiseau, France.\\
\Name{Blazej Miasojedow} \Email{bmiasojedow@gmail.com}\\
\addr Faculty of Mathematics, Informatics and Mechanics, University of Warsaw, Poland.\\
\Name{Eric Moulines} \Email{eric.moulines@polytechnique.edu}\\
\addr CMAP, \'{E}cole Polytechnique, Palaiseau, France.\\
\Name{Hoi-To Wai} \Email{htwai@se.cuhk.edu.hk}\\
 \addr Department of SEEM, The Chinese University of Hong Kong, Hong Kong.\thanks{Authors listed in alphabetical order.}} 


\newcommandx\sequence[3][2=,3=]
{\ifthenelse{\equal{#3}{}}{\ensuremath{\{ #1_{#2}\}}}{\ensuremath{\{ #1_{#2}, \eqsp #2 \in #3 \}}}}
\newcommandx\sequencePar[3][2=,3=]
{\ifthenelse{\equal{#3}{}}{\ensuremath{\{ #1({#2})\}}}{\ensuremath{\{ #1({#2}), \eqsp #2 \in #3 \}}}}

\newtheorem{Lemma}{Lemma}
\newtheorem{Prop}{Proposition}


\newtheorem{assumption}{A\!\!}

\newtheorem*{Lemma*}{Lemma}
\newtheorem*{Prop*}{Proposition}

\newcommand{\blue}{\color{black}}



\begin{document}

\maketitle

\begin{abstract}
Stochastic approximation (SA) is a key method used in statistical learning. Recently, its non-asymptotic convergence analysis has been considered in many papers.
However, most of the prior analyses are made under restrictive assumptions such as unbiased gradient estimates and convex objective function, which significantly limit their applications to sophisticated tasks such as online and reinforcement learning. These restrictions are all essentially relaxed in this work.
In particular, we analyze a general SA scheme to minimize a non-convex, smooth objective function. 
We consider update procedure whose drift term depends on a state-dependent Markov chain and the mean field is not necessarily of gradient type, covering approximate second-order method and allowing asymptotic bias for the one-step updates.
We illustrate these settings with the online EM algorithm and the policy-gradient method for average reward maximization in reinforcement learning.
\end{abstract}

\begin{keywords}
biased stochastic approximation, state-dependent Markov chain, non-convex optimization, policy gradient, online expectation-maximization
\end{keywords}


\section{Introduction}
Stochastic Approximation (SA) schemes are sequential (online) methods for finding a zero of a function when only noisy observations of the function values are available. Consider the recursion:\vspace{-.15cm}
\beq \label{eq:sa}
\prm_{n+1} = \prm_n - \gamma_{n+1} \HX{\prm_n}{\State_{n+1}}, \quad n \in \nset \vspace{-.15cm}
\eeq
where $\prm_n \in \Prm \subset \rset^d$ denotes the $n$th iterate, $\gamma_n > 0$ is
the step size and $\HX{\prm_n}{\State_{n+1}}$ is the $n$th \emph{stochastic} update (a.k.a.~drift term) depending on
a random element $\State_{n+1}$ taking its values in a measurable space $\Xset$.
In the simplest setting, $\sequence{\State}[n][\nset]$ is an \iid\ sequence of random vectors and $\HX{\prm_n}{\State_{n+1}}$ is a conditionally \emph{unbiased} estimate of the so-called mean-field $h(\prm_n)$, \ie\
$\CPE{\HX{\prm_n}{X_{n+1}}}{\mcf_n}= h(\prm_n)$ where $\mcf_n$ denotes the filtration generated by the random variables $(\prm_0, \{\State_m \}_{m \leq n})$. In such case, $\be_{n+1} = \HX{\prm_n}{\State_{n+1}} - h(\prm_n)$ is a \emph{martingale difference}. In more sophisticated settings, $\sequence{X}[n][\nset]$ is a \emph{state-dependent} (or controlled) Markov chain, \ie\  for any bounded measurable
function $f : \Xset \rightarrow \rset$,\vspace{-.15cm}
\beq \label{eq:markov} 
\CPE{f ( \State_{n+1})}{ \mcf_n} =  \PX{n} f (\State_n ) = \int f(\state) \PX{n} ( \State_n, \rmd \state ) \eqsp,\vspace{-.15cm}
\eeq
where $\PX{} : \Xset \times \Xsigma \rightarrow \rset_+$ is a Markov kernel such that, for each $\prm \in \Prm$, $\PX{}$ has a unique stationary distribution $\pi_{\prm}$. In such case, the mean field for the SA is defined as: \vspace{-.1cm}
\beq \label{eq:meanfield}  \textstyle
h (\prm) = \int \HX{\prm}{\state} \pi_{\prm} ( \rmd \state ) \eqsp, \vspace{-.1cm}
\eeq
where we have assumed that $\int \|\HX{\prm}{\state} \| \pi_{\prm} ( \rmd \state ) < \infty$.

Throughout this paper, we assume that the mean field $h$ is `related' (to be defined precisely later) to a smooth Lyapunov function $V : \rset^d \rightarrow \rset$, {\blue where $V(\prm) > -\infty$}. The aim of the SA scheme \eqref{eq:sa} is to find a minimizer or stationary point of {\blue the possibly non-convex Lyapunov function} $V$. 

Though more than 60 years old \citep{robbins1985stochastic}, SA is now of renewed interest as it covers a wide range of applications at the heart of many successes with statistical learning.
This includes in particular the stochastic gradient (SG) method and its variants as surveyed in
\citep{bottou1998online,bottou2018optimization}, 
but also in  reinforcement learning \citep{williams1992simple,peters2008natural,barto2018reinforcement}.
Most convergence analyses assume that $\sequence{\prm}[n][\nset]$ is bounded with probability one or visits a prescribed compact set infinitely often. Under such global stability or recurrence conditions [and appropriate regularity conditions on the mean field $h$], the SA sequences might be seen as approximation of the ordinary differential equation  $\dot{\prm} = h(\prm)$. Most results available as of today [see for example \citep{benveniste1990adaptive}, \citep[Chapter~5, Theorem~2.1]{kushner2003stochastic} or \citep{borkar2009stochastic}] have an asymptotic  flavor. The focus is to establish that the stationary point of the sequence  $\sequence{\prm}[n][\nset]$ belongs to a stable attractor of its limiting ODE. 

To gain insights on the difference among statistical learning algorithms, non-asymptotic analysis of SA scheme has been considered only recently.
In particular, SG methods whose mean field is the gradient of the objective function, \ie $h(\prm) = \grd V(\prm)$, are  considered by \cite{moulines2011non} for strongly convex function $V$ and martingale difference noise; see \citep{bottou2018optimization} for a recent survey on the topic.
Extensions to stationary dependent noise have been considered in \citep{duchi2012ergodic,agarwal2013generalization}.
Meanwhile, many machine learning models can lead to non-convex optimization problems. To this end, SG methods for non-convex, smooth objective function $V$ have been first studied in  \citep{ghadimi2013stochastic} with martingale noise (see \cite[Section~4]{bottou2018optimization}), and it was extended in \citep{NIPS2018_8195} to the case where $\sequence{X}[n][\nset]$ is a state-independent Markov chain, \ie the Markov kernel in \eqref{eq:markov} does not depend on $\prm$.

Of course, SA schemes go far beyond SG methods. In fact, in many important applications,  the drift term of the SA is \emph{not} a noisy version of the gradient, \ie the mean field $h$ is not the gradient of $V$. 
Obvious examples include second-order methods, which aim at combatting the adverse effects of high non-linearity and ill-conditioning of the objective function through stochastic quasi-Newton algorithms. Another closely related example is the online Expectation Maximization (EM) algorithm introduced by \citet{cappe2009line} and is further developed in \citep{balakrishnan2017statistical,chen2018stochastic}. 
In many cases, the mean field of the drift term may even be asymptotically biased with the random element $\sequence{X}[n][\nset]$ drawn from a Markov chain with \emph{state-dependent} transition probability. Examples for this situation are common in reinforcement learning such as Q-learning \citep{jaakkola1994convergence}, policy gradient \citep{baxter2001infinite} and temporal difference learning \citep{bhandari2018finite, lakshminarayanan2018linear, dalal2018finite, dalal2017finite}. 
  
Surprisingly  enough, we are not aware of non-asymptotic convergence results of {\blue the general SA \eqref{eq:sa}} comparable to \citep{ghadimi2013stochastic} and \citep[Section~4,5]{bottou2018optimization} when  {\sf (a)} the drift term $\HX{\prm}{x}$ in \eqref{eq:sa} is not the noisy gradient of the objective function $V$ and is potentially biased, and/or {\sf (b)} the sequence $\sequence{X}[n][\nset]$ is a \emph{state-dependent} Markov chain. 
To this end, the main objective of this work is to fill this gap in the literature by establishing non-asymptotic convergence of SA under the above settings. 
Our main assumption is the existence of a smooth function $V$ satisfying  for all $\prm \in \Prm$, $c_0 + c_1 \pscal{ \grd V( \prm )}{ h( \prm) } \geq \| h( \prm ) \|^2$  there exists $c_1 > 0, c_0 \geq 0$; see Section~\ref{sec:sa} and A\ref{ass:SA2}. If $c_0=0$, then $\pscal{\grd V(\prm)}{h(\prm)} > 0$ as soon as $h(\prm) \neq \bm{0}$ in which case $V$ is a Lyapunov function for the ODE $\dot{\prm}= h(\prm)$. Assuming $c_0 > 0$ allows us to consider situations in which the estimate of the mean field is biased, a situation which has been first studied in \cite{tadic2017asymptotic}. 
To summarize, our contributions are two-fold:\vspace{-.2cm}
\begin{enumerate}
\item We provide \emph{non-asymptotic} convergence analysis for \eqref{eq:sa} with a potentially biased mean field $h$ under two cases --- {\sf (Case 1)} $\sequence{\State}[n][\nset]$ is an \iid\ sequence; {\sf (Case 2)} $\sequence{\State}[n][\nset]$ is a \emph{state-dependent} Markov chain. 
For these two cases, we provide non asymptotic bounds such that for all $n \in \nset$,  $\EE [ \| h( \prm_{N} ) \|^2 ] = {\cal O}(c_0 + \log(n) /\sqrt{n})$, for some random index  $N \in \{1,\dots,n\}$ and $c_0 \geq 0$ characterizes the (potential) bias of the mean field $h$.
\item We illustrate our findings by analyzing popular statistical learning algorithms such as the online expectation maximization (EM) algorithm \citep{cappe2009line} and the average-cost policy-gradient method \citep{barto2018reinforcement}. Our findings  provide new insights into the non-asymptotic convergence behavior of these algorithms.
\end{enumerate}
Our theory significantly extends the results reported in \cite[Sections~4,5]{bottou2018optimization} and  \cite[Theorem~2.1]{ghadimi2013stochastic}. When focused on the Markov noise setting, our result is a nontrivial relaxation of \citep{NIPS2018_8195}, which considers Markov noise that is \emph{not state dependent} and the mean field satisfies $h( \prm ) = \grd V( \prm )$; and  of \citep{tadic2017asymptotic} which shows asymptotic convergence of \eqref{eq:sa} under the uniform boundedness assumption on iterates.

\paragraph{Notation}
Let $(\Xset,\Xsigma)$ be a measurable space.
A Markov kernel $R$ on $\Xset \times \Xsigma$ is a mapping $R: \Xset \times \Xsigma \to \ccint{0,1}$ satisfying the following conditions: {\sf (a)} for every $\state \in \Xset$, $R(\state,\cdot): A \mapsto R(\state,\ A)$ is a probability measure on $\Xsigma$ {\sf (b)} for every $\borelA \in \Xsigma$, $R(\cdot,A): \state \mapsto  R(\state,A)$ is a measurable function.
For any probability measure $\lambda$ on $(\Xset,\Xsigma)$, we define $\lambda R$  by $\lambda\Rker(\borelA) = \int_{\Xset} \lambda(\rmd \state) R(\state,\borelA)$.
For all $k\in\nset^*$, we define the Markov kernel $R^k$ recursively by $\Rker^1 = \Rker$ and for all $\state \in\Xset$ and $\borelA\in\Xsigma$,
$  R^{k+1}(\state,\borelA) = \int_{\Xset} R^k(\state, \rmd \state') R(\state', \borelA) $.
A probability measure $\pibar$  is invariant for $R$ if $\pibar R = \pibar$.
$\| \cdot \|$ denotes the standard Euclidean norm (for vectors) or the operator norm (for matrices).


\vspace{-.3cm}
\section{Stochastic Approximation Schemes and Their Convergence} \label{sec:sa}\vspace{-.1cm}
Consider the following assumptions:\vspace{-.1cm}
\begin{assumption} \label{ass:SA2}
For all $\prm \in \Prm$, there exists $c_0 \geq 0, c_1 > 0$ such that
$c_0  + c_1 \pscal{\grd V( \prm )}{ h( \prm ) } \geq \| h( \prm ) \|^2$.\vspace{-.1cm}
\end{assumption}
\begin{assumption} \label{ass:SA2b}
For all $\prm \in \Prm$,
there exists $d_0 \geq 0 ,d_1 > 0$ such that
$d_0 + d_1 \| h ( \prm ) \| \geq \| \grd V( \prm ) \|$.\vspace{-.1cm}
\end{assumption}
\begin{assumption} \label{ass:L}
Lyapunov function $V$ is $L$-smooth. For all $(\prm, \prm') \in \Prm^2$,
$\| \grd V( \prm ) - \grd V( \prm' )  \| \leq L \| \prm - \prm' \|$.\vspace{-.1cm}
\end{assumption}
A\ref{ass:SA2},A\ref{ass:SA2b} assume that the mean field  $h(\prm)$ [cf.~\eqref{eq:markov}]
is indirectly related to the Lyapunov function $V(\prm)$ where it needs not be the same as $\grd V(\prm)$. 
In particular, the constants $c_0, d_0$ characterize  the `bias' between the mean field and the gradient of the Lyapunov function. From an optimization perspective, we note that the Lyapunov function $V$ can be \emph{non-convex} under A\ref{ass:L}.
In light of A\ref{ass:SA2}, A\ref{ass:SA2b}, we study the convergence of the non-negative quantity
$\| h( \prm_n ) \|^2$,
where $\prm_n$ is produced by \eqref{eq:sa}. If $c_0=d_0=0$ in A\ref{ass:SA2},A\ref{ass:SA2b}, then $h(\prm_*) = 0$ implies that $\| \grd V(\prm_*) \| = 0$, \ie the point $\prm_*$ is a stationary point of the deterministic recursion $\bar{\prm}_n= \bar{\prm}_{n} - \gamma_{n+1} h(\bar{\prm}_n)$.
As a convention, for any $\epsilon \geq 0$, we say that $\prm_*$ is an \emph{$\epsilon$-quasi-stationary point} if $\| h( \prm_* ) \|^2 \leq \epsilon$.

As a common step in analyzing SA scheme for smooth but non-convex Lyapunov function (e.g., \citep{ghadimi2013stochastic}), we shall adopt a randomized stopping rule. For any $n \geq 1$, let $N \in \{0,\dots,n\}$ be a discrete random variable (independent of $\sequence{\mcf}[n][\nset]$) with\vspace{-.2cm}
\beq \label{eq:prob_stop} 
\PP( N = \ell ) \eqdef \big( {\textstyle \sum_{k=0}^n \gamma_{k+1}} \big)^{-1} \gamma_{\ell+1} \eqsp,\vspace{-.2cm}
\eeq
where $N$ serves as the terminating iteration for \eqref{eq:sa}. Throughout this paper, we focus on analyzing $\EE[ \| \grd h(\prm_N) \|^2]$ where the expectation is taken over $N$ and the stochastic updates in SA.
We consider two settings for the noise in SA scheme. Define the following noise vector:\vspace{-.2cm}
\beq
\bm{e}_{n+1} \eqdef \HX{\prm_n}{\State_{n+1}} - h ( \prm_n )  \eqsp,\vspace{-.2cm}
\eeq
where $h ( \prm_n )$ was defined in \eqref{eq:meanfield}.
Our settings and convergence results are in order.
\paragraph{{\sf Case 1.}~$\{ {\bm e}_n \}_{n \geq 1}$ is a Martingale Difference Sequence.} We first consider a case similar to the classical SG method analyzed by \citet{ghadimi2013stochastic}. In particular,
\begin{assumption} \label{ass:iid1}
The sequence of noise vectors is a Martingale difference sequence with, for any $n \in \nset$, $\CPE{ {\bm e}_{n+1}}{\mcf_n} = {\bm 0}$, $\CPE{  \| {\bm e}_{n+1}\|^2}{ \mcf_n } \leq \sigma_0^2  + \sigma_1^2 \| h( \prm_n ) \|^2$ with $\sigma_0^2,\sigma_1^2 \in [0,\infty)$.
\end{assumption}
As a concrete example, A\ref{ass:iid1} can be satisfied when $\HX{\prm_n}{\State_{n+1}} = h( \prm_n ) + \State_{n+1}$ where $\State_{n+1}$ is an i.i.d., zero-mean random vector with bounded variance.
We show:
\begin{theo} \label{thm:iid}
Let A\ref{ass:SA2}, A\ref{ass:L}, A\ref{ass:iid1} hold and $\gamma_{n+1} \leq (2 c_1 L (1+\sigma_1^2) )^{-1}$ for all $n \geq 0$. 
We have\vspace{-.2cm}
\beq \label{eq:simple} 
\EE[ \| h( \prm_N ) \|^2 ] \leq \frac{2 c_1 \big( V_{0,n} + \sigma_0^2 L  \sum_{k=0}^n \gamma_{k+1}^2 \big) }{\sum_{k=0}^n \gamma_{k+1}}+ 2 c_0 \eqsp, \vspace{-.15cm}
\eeq
where $N$ is distributed according to \eqref{eq:prob_stop} and we have defined $V_{0,n} \eqdef \EE[ V(\prm_0) - V(\prm_{n+1})]$.
\end{theo}
If we set $\gamma_k = (2 c_1 L (1+\sigma_1^2) \sqrt{k})^{-1}$ for all $k \geq 1$, then the right hand side in \eqref{eq:simple} evaluates to ${\cal O}( c_0+\log n / \sqrt{n} )$ for any $n \geq 1$. Therefore, the SA scheme \eqref{eq:sa} finds an ${\cal O}( c_0+ \log n /\sqrt{n})$ quasi-stationary point within $n$ iterations. 

\paragraph{{\sf Case 2.}~$\{ {\bm e}_n \}_{n \geq 1}$ is State-dependent Markov Noise.}
Next, we consider a general scenario when $\State_{n+1}$ is drawn from a state-dependent Markov process. For any bounded measurable function $\varphi$ and $n \in \nset$, we have $\CPE{\varphi(X_{n+1})}{\mcf_n}= \PX{n} \varphi(\State_n)$, where  $\PX{}{}$ is a Markov kernel on $\Xset \times \Xsigma$. We assume that for each $\prm \in \Prm$, $\PX{}{}$ has a unique stationary distribution $\pi_{\prm}$, \ie\ $\pi_{\prm} \PX{}{}= \pi_{\prm}$. In addition, for each $\prm \in \Prm$, we have $\int \| \HX{\prm}{\state} \| \pi_{\prm}(\rmd \state) < \infty$ and $h(\prm) = \int \HX{\prm}{\state} \pi_{\prm}(\rmd \state)$.
Consider a set of assumptions that are similar to \citep[Section 3]{tadic2017asymptotic}:
\begin{assumption} \label{ass:existence-poisson}
There exists a Borel measurable function $\hat{H}: \Prm \times \Xset \to \Prm$ where for each $\prm \in \Prm$, $\state \in \Xset$,\vspace{-.1cm}
\beq \label{eq:poisson-equation}
\hHX{\prm}{\state} - \PX{}{} \hHX{\prm}{\state} = \HX{\prm}{\state} - h( \prm ) \,. \vspace{-.3cm}
\eeq
\end{assumption}

\begin{assumption} \label{ass:properties-poisson}
There exists $L^{(0)}_{PH} < \infty$ and  $L^{(1)}_{PH} < \infty$ such that, for all $\prm \in \Prm$ and $\state \in \Xset$, one has $\| \hHX{\prm}{\state} \|  \leq L^{(0)}_{PH}, \| \PX{} \hHX{\prm}{\state} \| \leq L_{PH}^{(0)}$.
Moreover, for $(\prm, \prm') \in \Prm^2$,\vspace{-.2cm}
\begin{equation} \label{eq:bound_a2}
\textstyle \sup_{\state \in \Xset} \| \PX{} \hHX{\prm}{\state} - {P}_{\prm'} \hHX{\prm'}{\state}  \|  \leq  L_{PH}^{(1)} \| \prm - \prm' \| \eqsp. \vspace{-.2cm}
\end{equation}
\end{assumption}
\begin{assumption} \label{ass:MC3}
The  stochastic update is bounded, \ie $\sup_{\prm \in \Prm, \state \in \Xset}  \| \HX{\prm}{\state} - h(\prm) \| \leq \sigma$.
\end{assumption}
Assumption A\ref{ass:existence-poisson}  requires that for each $\prm \in \Prm$, the  Poisson equation associated with the Markov kernel $\PX{}{}$ and the function $\HX{\prm}{\cdot}$ has a solution. Assumption A\ref{ass:properties-poisson} implies that for each $\state \in \Xset$, the function $\prm \mapsto \HX{\prm}{\state}$ is Lipshitz and that the Lipshitz constant is uniformly bounded in $\state \in \Xset$. We provide in Appendix~\ref{sec:existence-properties-Poisson} conditions upon which these assumptions hold. 
Lastly, Assumption A\ref{ass:MC3} assumes that the drift terms are bounded uniformly.
Our main result reads as follows:\vspace{-.2cm}
\begin{theo} \label{thm:markov}
Let A\ref{ass:SA2}--A\ref{ass:L}, A\ref{ass:existence-poisson}--A\ref{ass:MC3} hold. Suppose that
the step sizes satisfy\vspace{-.2cm}
\beq \label{eq:stepsize_markov}
\gamma_{n+1} \leq \gamma_n, ~\gamma_n \leq a \gamma_{n+1},~ \gamma_n - \gamma_{n+1} \leq a' \gamma_n^2,~\gamma_1 \leq 0.5 \big( c_1(L+C_h) \big)^{-1} \eqsp, \vspace{-.1cm}
\eeq
for some $a, a' > 0$ and all $n \geq 0$. We have \vspace{-.1cm}
\beq
\label{eq:notsimple}
 \EE [ h( \prm_N ) \|^2 ] \leq
 \frac{ 2c_1 \big( V_{0,n} + C_{0,n} + \big( \sigma^2 L + C_\gamma \big) \sum_{k=0}^{n}
 \gamma_{k+1}^2 \big) }{ \sum_{k=0}^n \gamma_{k+1} } + 2 c_0 \eqsp,\vspace{-.1cm}
\eeq
where $N$ is distributed according to \eqref{eq:prob_stop}, $V_{0,n} \eqdef \EE[ V(\prm_0) - V(\prm_{n+1})]$, and the constants are:\vspace{-.2cm}
\begin{align}
\label{eq:definition-Ch}
C_h      &\eqdef \big( L_{PH}^{(1)} (d_0+\frac{d_1}{2}(a+1) + a d_1 \sigma) + L_{PH}^{(0)} \big(  L + d_1\{1 + a'\} \big) \big) \eqsp, \\
\label{eq:defintion-Cgamma}
C_\gamma &\eqdef  L_{PH}^{(1)} (d_0+d_0 \sigma + d_1 \sigma) + L L_{PH}^{(0)} (1+\sigma) \eqsp, \\
\label{eq:defintion-C0n}
C_{0,n}  &\eqdef L_{PH}^{(0)} \big( (1+d_0)(\gamma_1 - \gamma_{n+1}) + d_0 (\gamma_1 + \gamma_{n+1})+2 d_1 \big) \eqsp.
\end{align}
\end{theo}\vspace{-.2cm}
Similar to the case with Martingale difference noise, if we set $\gamma_k = (2 c_1 L (1+C_h) \sqrt{k})^{-1}$ for all $k \geq 1$, then the step size satisfies \eqref{eq:stepsize_markov} with $a=\sqrt{2}$ and $a' = \frac{\sqrt{2}-1}{\sqrt{2}} (2c_1L(1+C_h))$, and the right hand side in \eqref{eq:notsimple} evaluates to ${\cal O}( c_0 + \log n / \sqrt{n} )$ for any $n \geq 1$. We obtain a similar convergence rate as in Theorem~\ref{thm:iid}.
In fact, if we consider a special case  when for all $\prm \in \Prm$ and $\state \in \Xset$, $\PX{}(\state,\cdot)= \pi_{\prm}(\cdot)$, we have $L_{PH}^{(0)} = L_{PH}^{(1)} = 0$. The constants evaluates to $C_h = C_\gamma = C_{0,n} = 0$ and our Theorem~\ref{thm:markov} can be reduced into Theorem~\ref{thm:iid}.
We remark that Theorem~\ref{thm:markov} cannot be treated as a strict generalization of Theorem~\ref{thm:iid} as A\ref{ass:iid1} does not imply the uniform boundedness A\ref{ass:MC3}.
Our analysis [cf.~Lemma~\ref{lem:prod}] relies on a new decomposition of the error terms, which controls the growth of $\EE[ \|h( \prm_n )\|^2 ]$ without explicitly assuming that $\{ \prm_n \}_{n \geq 0}$ is bounded. 

In Appendix~\ref{app:lowerbd}, we provide a lower bound on the rate of SA scheme \eqref{eq:sa}, \eqref{eq:prob_stop} such that $\EE[ \| h( \prm_N ) \|^2 ] = \Omega( \log n / \sqrt{n} )$. This shows that our analysis in Theorem~\ref{thm:iid}, \ref{thm:markov} is tight. \vspace{.2cm} 

\noindent \textbf{Related Studies}~~~~Non-asymptotic analysis of biased SA schemes can be found in the literature on temporal difference (TD) learning \citep{bhandari2018finite, lakshminarayanan2018linear, dalal2018finite, dalal2017finite}, which analyzed a special case of linear SA. Their assumptions can essentially be satisfied by our A\ref{ass:SA2}--A\ref{ass:L} with $V(\prm) = \| \prm - \prm^\star \|_\Phi^2$, e.g., \citep[Lemma 3]{bhandari2018finite}  shows that the TD learning has a mean field which satisfies A\ref{ass:SA2}. Furthermore, the above mentioned analysis requires a strongly convex Lyapunov function, which is not needed in our results.

For {\sf Case 1}, our results generalizes \cite[Theorem~2.1]{ghadimi2013stochastic} by accounting for biased SA updates. In fact we recover the latter result with $h(\prm) = \grd V(\prm)$,  A\ref{ass:SA2} [$c_0 = 0, c_1=1$].

For {\sf Case 2}, our assumptions A\ref{ass:SA2}--A\ref{ass:L}, A\ref{ass:existence-poisson}--A\ref{ass:MC3} are similar to  \citep[Section 3]{tadic2017asymptotic}. The   exception is A\ref{ass:MC3} which is used in place of the assumption  $\sup_{n \in \NN} \| \prm_n \| < \infty$ in  \citep{tadic2017asymptotic}. We note that the two conditions are neither stronger nor weaker than the other.\vspace{-.1cm}


\subsection{Convergence Analysis}
The detailed proofs in this section are in Appendix~\ref{app:sa}.
To simplify notations, we denote $h_n \eqdef \| h( \prm_n ) \|^2$ from now on.
We first describe an intermediate result that holds under just A\ref{ass:SA2}, A\ref{ass:L}:
\begin{Lemma} \label{prop:conv}
Let A\ref{ass:SA2}, A\ref{ass:L} hold. It holds for all $n \geq 1$ that:
\beq \label{eq:fin1}
\begin{split}
& \textstyle \sum_{k=0}^n \frac{\gamma_{k+1}}{c_1} \big( 1 - c_1 L \gamma_{k+1} \big) h_k
\\[.0cm] & \textstyle \leq V( \prm_0 ) - V(\prm_{n+1} ) + L  \sum_{k=0}^{n}
 \gamma_{k+1}^2 \| \bm{e}_{k+1} \|^2 + \sum_{k=0}^n \gamma_{k+1} \big( \frac{c_0}{c_1} - \pscal{ \grd V( \prm_k ) }{ \bm{e}_{k+1} } \big).
\end{split}
\eeq
\end{Lemma}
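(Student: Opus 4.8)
The plan is to derive the claimed bound from a single pathwise expansion of the Lyapunov function and then sum over the iterations. First I would invoke the $L$-smoothness of $V$ (A\ref{ass:L}), which yields the standard descent inequality $V(\prm_{k+1}) \leq V(\prm_k) + \pscal{\grd V(\prm_k)}{\prm_{k+1}-\prm_k} + \frac{L}{2}\|\prm_{k+1}-\prm_k\|^2$. Substituting the recursion \eqref{eq:sa} together with the noise decomposition $\HX{\prm_k}{\State_{k+1}} = h(\prm_k) + \bm{e}_{k+1}$ turns the increment into $\prm_{k+1}-\prm_k = -\gamma_{k+1}(h(\prm_k) + \bm{e}_{k+1})$, so the linear term becomes $-\gamma_{k+1}\pscal{\grd V(\prm_k)}{h(\prm_k)} - \gamma_{k+1}\pscal{\grd V(\prm_k)}{\bm{e}_{k+1}}$ and the quadratic term becomes $\frac{L}{2}\gamma_{k+1}^2 \|h(\prm_k)+\bm{e}_{k+1}\|^2$.

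The key manipulation is to split the quadratic term via $\|a+b\|^2 \leq 2\|a\|^2 + 2\|b\|^2$, which bounds $\frac{L}{2}\gamma_{k+1}^2\|h(\prm_k)+\bm{e}_{k+1}\|^2$ by $L\gamma_{k+1}^2 h_k + L\gamma_{k+1}^2\|\bm{e}_{k+1}\|^2$; this is precisely what produces the noise contribution $L\gamma_{k+1}^2\|\bm{e}_{k+1}\|^2$ on the right-hand side of \eqref{eq:fin1}. Next I would use A\ref{ass:SA2} rearranged as $\pscal{\grd V(\prm_k)}{h(\prm_k)} \geq (h_k - c_0)/c_1$ to lower-bound the mean-field inner product, so that $-\gamma_{k+1}\pscal{\grd V(\prm_k)}{h(\prm_k)} \leq -\frac{\gamma_{k+1}}{c_1} h_k + \frac{\gamma_{k+1}}{c_1} c_0$.

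Collecting these estimates gives the one-step bound $\frac{\gamma_{k+1}}{c_1}(1 - c_1 L \gamma_{k+1}) h_k \leq V(\prm_k) - V(\prm_{k+1}) + \gamma_{k+1}(\frac{c_0}{c_1} - \pscal{\grd V(\prm_k)}{\bm{e}_{k+1}}) + L\gamma_{k+1}^2\|\bm{e}_{k+1}\|^2$, where the coefficient on the left arises from moving the $L\gamma_{k+1}^2 h_k$ term across and factoring out $\gamma_{k+1}/c_1$. Summing this inequality over $k = 0,\dots,n$ and telescoping $\sum_{k=0}^n (V(\prm_k)-V(\prm_{k+1})) = V(\prm_0) - V(\prm_{n+1})$ then yields \eqref{eq:fin1} directly.

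I do not expect a genuine obstacle here: the argument is a purely deterministic, pathwise manipulation using only smoothness and A\ref{ass:SA2}, with no expectation or Markov structure invoked at this stage (those enter later when the noise energy $\|\bm{e}_{k+1}\|^2$ and the martingale-type term $\pscal{\grd V(\prm_k)}{\bm{e}_{k+1}}$ are controlled in Theorems~\ref{thm:iid} and \ref{thm:markov}). The only point demanding care is the precise splitting of the quadratic term: using $\|a+b\|^2\le 2\|a\|^2+2\|b\|^2$ rather than keeping the cross term $2\pscal{h(\prm_k)}{\bm{e}_{k+1}}$ is what both eliminates the awkward coupling between the mean field and the noise and reproduces the exact factor $L$ (instead of $L/2$) in front of $\gamma_{k+1}^2\|\bm{e}_{k+1}\|^2$.
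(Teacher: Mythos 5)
Your proposal is correct and follows essentially the same route as the paper's proof: the $L$-smoothness descent inequality, the split $\tfrac{L}{2}\gamma_{k+1}^2\|h(\prm_k)+\bm{e}_{k+1}\|^2 \le L\gamma_{k+1}^2(h_k + \|\bm{e}_{k+1}\|^2)$, the rearrangement of A1 as $\pscal{\grd V(\prm_k)}{h(\prm_k)} \ge (h_k - c_0)/c_1$, and a telescoping sum. The one-step bound you state is exactly the paper's intermediate inequality, so there is nothing to add.
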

Having established Lemma~\ref{prop:conv}, our main convergence results can be obtained as follows.\vspace{-.1cm}

\paragraph{Proof of Theorem~\ref{thm:iid}} 
With Martingale difference noise, the expected value of $ \pscal{ \grd V( \prm_k ) }{ {\bm e}_{k+1} }$ is zero when conditioned on $\mcf_k$. Therefore, taking total expectation on both sides of \eqref{eq:fin1} yields:\vspace{-.1cm}
\beq \notag
\begin{split} \textstyle
 \sum_{k=0}^n \frac{\gamma_{k+1}}{c_1} \big( 1 - c_1 L \gamma_{k+1} \big) \EE[ h_k ] & \textstyle \leq
V_{0,n} + L  \sum_{k=0}^{n} \big( \gamma_{k+1}^2 \EE[\| \bm{e}_{k+1} \|^2] + \gamma_{k+1} \frac{c_0}{c_1} \big) \\
 & \leq \textstyle
V_{0,n} + L  \sigma_0^2 \sum_{k=0}^{n} \gamma_{k+1}^2  + L \sigma_1^2 \sum_{k=0}^n \gamma_{k+1} \EE[ h_k ] )  + \gamma_{k+1} \frac{c_0}{c_1} \big) \eqsp,
 \end{split}
\eeq
where the last inequality is due to A\ref{ass:iid1}. Rearranging terms yields:\vspace{-.1cm}
\beq \textstyle
 \sum_{k=0}^n \frac{\gamma_{k+1}}{c_1} \big( 1 - c_1 L  (1+\sigma_1^2) \gamma_{k+1} \big) \EE[ h_k ]
 \leq V_{0,n} + \sigma_0^2 L  \sum_{k=0}^{n} \gamma_{k+1}^2 + \frac{c_0}{c_1} \sum_{k=0}^n \gamma_{k+1} \eqsp.\vspace{-.1cm}
\eeq
Consequently, using \eqref{eq:prob_stop} and noting that 
$1 - c_1 L  (1+\sigma_1^2) \gamma_{n+1} \geq \frac{1}{2}$,
we obtain\vspace{-.1cm}
\beq \label{eq:simple_case_expect}
\EE[ h_N ] = \sum_{n'=0}^n \frac{ \gamma_{n' +1}  \EE[ h_{n'}] } {\sum_{k=0}^n  \gamma_{k+1} }
\leq \frac{2c_1 \big( V_{0,n} + \sigma_0^2 L  \sum_{k=0}^{n} \gamma_{k+1}^2 \big)}{ \sum_{k=0}^n \gamma_{k+1}} + 2c_0 \eqsp. \vspace{-.1cm}
\eeq

\paragraph{Proof of Theorem~\ref{thm:markov}} In the case with state-dependent Markovian noise. Under A\ref{ass:MC3}, one has\vspace{-.1cm}
\beq \label{eq:sq_term} \textstyle
\sum_{k=0}^{n}
 \gamma_{k+1}^2 \EE[ \| \bm{e}_{k+1} \|^2]  \leq \sum_{k=0}^{n} \gamma_{k+1}^2  \sigma^2 \eqsp.\vspace{-.1cm}
\eeq
Unlike  in Theorem~\ref{thm:iid},
the expected value of the inner product $ \pscal{ \grd V( \prm_k ) }{ {\bm e}_{k+1} }$ is non-zero in general. Fortunately, as we show next in Lemma~\ref{lem:prod}, this issue can be mitigated.
\begin{Lemma} \label{lem:prod}
Let A\ref{ass:SA2}--A\ref{ass:L},A\ref{ass:existence-poisson}--A\ref{ass:MC3} hold and the step sizes satisfy \eqref{eq:stepsize_markov}. It holds:\vspace{-.1cm}
\beq \label{eq:mc_fin} \textstyle
\EE \big[ - \sum_{k=0}^n \gamma_{k+1} \pscal{ \grd V( \prm_k ) }{ \bm{e}_{k+1} } \big] \leq
C_h \sum_{k=0}^n \gamma_{k+1}^2  \EE[\| h( \prm_k ) \|^2] + C_{\gamma} \sum_{k=0}^n \gamma_{k+1}^2 + C_{0,n} \eqsp,\vspace{-.1cm}
\eeq
where $C_h$, $C_{\gamma}$ and $C_{0,n}$ are defined in \eqref{eq:definition-Ch}, \eqref{eq:defintion-Cgamma}, \eqref{eq:defintion-C0n}.
\end{Lemma}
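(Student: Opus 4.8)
The plan is to exploit the Poisson equation A\ref{ass:existence-poisson} to trade the bias of the Markovian drift for a telescoping sum whose increments are controlled by the Lipschitz regularity of A\ref{ass:properties-poisson}. First I would use \eqref{eq:poisson-equation} at $\prm = \prm_k$, $\state = \State_{k+1}$ to rewrite the noise as $\bm{e}_{k+1} = \hHX{\prm_k}{\State_{k+1}} - \PX{k}\hHX{\prm_k}{\State_{k+1}}$, and then add and subtract $\PX{k}\hHX{\prm_k}{\State_k}$ to split it as $\bm{e}_{k+1} = \bm{\xi}_{k+1} + \bm{r}_{k+1}$, where $\bm{\xi}_{k+1} \eqdef \hHX{\prm_k}{\State_{k+1}} - \PX{k}\hHX{\prm_k}{\State_k}$ and $\bm{r}_{k+1} \eqdef \PX{k}\hHX{\prm_k}{\State_k} - \PX{k}\hHX{\prm_k}{\State_{k+1}}$. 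By the Markov property \eqref{eq:markov}, $\CPE{\bm{\xi}_{k+1}}{\mcf_k} = \bm{0}$, so $\bm{\xi}_{k+1}$ is a martingale-difference sequence; since $\grd V(\prm_k)$ is $\mcf_k$-measurable, the tower property yields $\EE[\pscal{\grd V(\prm_k)}{\bm{\xi}_{k+1}}] = 0$ and the entire $\bm{\xi}$-contribution drops out. It then remains to bound $\EE[-\sum_{k=0}^n \gamma_{k+1}\pscal{\grd V(\prm_k)}{\bm{r}_{k+1}}]$.

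Next, writing $g_k \eqdef \PX{k}\hHX{\prm_k}{\State_k}$ (a vector with $\| g_k \| \le L_{PH}^{(0)}$ by A\ref{ass:properties-poisson}), I would further decompose $\bm{r}_{k+1} = (g_k - g_{k+1}) + (g_{k+1} - \PX{k}\hHX{\prm_k}{\State_{k+1}})$ and handle the two pieces separately. For the first piece I perform a summation by parts on $\sum_k \pscal{\gamma_{k+1}\grd V(\prm_k)}{g_k - g_{k+1}}$, keeping the weight $\gamma_{k+1}\grd V(\prm_k)$ attached at index $k$; this produces boundary terms at $k=0$ and $k=n$ together with a sum of increments $\pscal{\gamma_{k+1}\grd V(\prm_k) - \gamma_k \grd V(\prm_{k-1})}{g_k}$. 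The increment splits into a gradient difference $\gamma_{k+1}(\grd V(\prm_k) - \grd V(\prm_{k-1}))$, controlled by A\ref{ass:L}, and a step-size difference $(\gamma_{k+1}-\gamma_k)\grd V(\prm_{k-1})$, controlled by $\gamma_k - \gamma_{k+1} \le a'\gamma_k^2$ from \eqref{eq:stepsize_markov}. For the second piece, $g_{k+1} - \PX{k}\hHX{\prm_k}{\State_{k+1}} = \PX{k+1}\hHX{\prm_{k+1}}{\State_{k+1}} - \PX{k}\hHX{\prm_k}{\State_{k+1}}$ is exactly the quantity bounded by $L_{PH}^{(1)}\|\prm_{k+1}-\prm_k\|$ in \eqref{eq:bound_a2}.

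The common mechanism in every surviving term is that each increment carries a factor $\|\prm_{k+1} - \prm_k\| = \gamma_{k+1}\|\HX{\prm_k}{\State_{k+1}}\| \le \gamma_{k+1}(\|h(\prm_k)\| + \sigma)$ by \eqref{eq:sa} and A\ref{ass:MC3}; combined with the outer $\gamma_{k+1}$ this upgrades one power of the step size to $\gamma_{k+1}^2$. I would then bound the gradient norms through A\ref{ass:SA2b}, $\|\grd V(\prm_k)\| \le d_0 + d_1\|h(\prm_k)\|$, and use the comparisons $\gamma_k \le a\gamma_{k+1}$ and $\gamma_{k+1} \le \gamma_k$ to realign all indices onto $\gamma_{k+1}^2$ and onto $\|h(\prm_k)\|$ with $k \in \{0,\dots,n\}$. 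Applying Young's inequality to each remaining first power of $\|h\|$ (e.g.\ $\gamma_{k+1}^2\|h(\prm_k)\| \le \frac{1}{2}\gamma_{k+1}^2(1 + \|h(\prm_k)\|^2)$, and $\gamma_{k+1}\|h(\prm_k)\| \le \frac{1}{2}(1 + \gamma_{k+1}^2\|h(\prm_k)\|^2)$ for the boundary terms) converts everything into the three advertised groups: the coefficients of $\sum_k \gamma_{k+1}^2\|h(\prm_k)\|^2$ assemble into $C_h$, the constant multiples of $\sum_k \gamma_{k+1}^2$ into $C_\gamma$, and the boundary contributions, together with the constants produced by Young, into $C_{0,n}$.

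The main obstacle is precisely that, unlike the martingale-difference case, $\CPE{\pscal{\grd V(\prm_k)}{\bm{e}_{k+1}}}{\mcf_k}$ is nonzero, so there is a genuine bias to cancel; the Poisson-equation decomposition is the device that exposes it as a telescoping sum, but the price is a remainder that couples consecutive iterates through $\|\grd V(\prm_{k+1})\|\,\|\prm_{k+1}-\prm_k\|$ and therefore generates the quadratic terms $\|h(\prm_k)\|^2$. The delicate bookkeeping will be to carry out the summation by parts so that the boundary indices remain inside $\{0,\dots,n\}$ --- which is exactly why the weight $\gamma_{k+1}\grd V(\prm_k)$, rather than $\grd V(\prm_{k+1})$, is telescoped --- so that the resulting $\gamma^2\|h\|^2$ contributions stay within the range of the sum multiplying $C_h$, and to verify that the step-size ratios $a,a'$ and the Young constants aggregate into exactly the stated $C_h$, $C_\gamma$, $C_{0,n}$, all without ever invoking boundedness of $\{\prm_n\}_{n\ge 0}$.
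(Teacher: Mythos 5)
Your proposal is correct and follows essentially the same route as the paper: the Poisson equation splits $\bm{e}_{k+1}$ into a martingale-difference part (which vanishes in expectation) plus a remainder that is handled by Abel summation, with the increments controlled via \eqref{eq:bound_a2}, A\ref{ass:L}, A\ref{ass:SA2b}, A\ref{ass:MC3} and the step-size conditions, and Young's inequality assembling the terms into $C_h$, $C_\gamma$, $C_{0,n}$. The paper's decomposition into the five terms $A_1,\dots,A_5$ is precisely what your two-stage splitting produces, up to a minor reindexing of which weight $\gamma_{k+1}\grd V(\prm_k)$ is attached where.
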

Finally, to prove the theorem, we combine Lemma~\ref{prop:conv}, \eqref{eq:sq_term} and Lemma~\ref{lem:prod} to obtain:\vspace{-.1cm}
\beq
\begin{split}
& \textstyle \sum_{k=0}^n \frac{\gamma_{k+1}}{c_1} \big( 1 - c_1 L \gamma_{k+1} \big) \EE [h_k] \\
& \textstyle \leq V_{0,n} + C_{0,n} + \Big( \sigma^2 L + C_\gamma \Big) \sum_{k=0}^{n}
 \gamma_{k+1}^2 + C_h \sum_{k=0}^n \gamma_{k+1}^2 \EE [ h_k ] + \frac{c_0}{c_1} \sum_{k=0}^n \gamma_{k+1} \eqsp.\\[-.3cm]
\end{split}
\eeq
Repeating a similar argument as in \eqref{eq:simple_case_expect} using the distribution \eqref{eq:prob_stop} shows the desired bound \eqref{eq:notsimple}.


\vspace{-.2cm}
\section{Applications} \label{sec:application}\vspace{-.1cm}
We present several applications and provide new non-asymptotic convergence rate for them.\vspace{-.2cm}
\subsection{Regularized Online Expectation Maximization}
Expectation-Maximization (EM) \citep{dempster1977maximum}  is a powerful tool for learning latent variable models, which can be inefficient due to the high storage cost. 
 This has motivated the development of online version of the EM which makes it possible to estimate the parameters of
latent variables model without storing the data; the online EM algorithm analyzed below was introduced in \citep{cappe2009line} and later developed by many authors: see for example \citep{chen2018stochastic} and the references therein. The online EM algorithm sticks closely to the
principles of the batch-mode EM algorithm. Each iteration of the online EM algorithm is decomposed into two steps, where the first one is a stochastic approximation version of
the E-step aimed at incorporating the information brought by the newly available observation, and, the second step consists in the maximization program that appears in the M-step of the traditional EM algorithm.

The latent variable statistical model postulates the existence of a latent variable $\State$ distributed under $f(\state;\param)$ where $\{ f(\state; \param); \param \in \Param\}$ is a parametric family of probability density functions and $\Param$ is an open convex subset of $\rset^d$. The observation $Y \in \Yset$ is a deterministic function of $\State$. We denote by $g(y;\param)$ the (observed) likelihood function. The notations $\EE_{\param}[\cdot]$ and $\CPE[\param]{\cdot}{Y}$ are used to denote the expectation and conditional expectation under the statistical model $\{ f(\state; \param); \param \in \Param\}$. We denote by $\pi$ the probability density function of the observation $Y$:  the model might be misspecified, that is, the "true" distribution of the observations may not belong to the family $\{ g(y;\param), \param \in \Param \}$. The notations $\PE_\pi$ is used below to denote the expectation under the actual distribution of the observations.    Let $\Sset$ be a convex open subset of $\rset^m$ and $S: \Xset \to \Sset$ be a measurable function.
We assume that the complete data-likelihood function belongs to the curved exponential family\vspace{-.1cm}
\begin{equation}
\label{eq:exp_fam}
f(\state;\param)= h(\state) \exp\left( \pscal{ S(\state) }{ \phi( \param ) } - \psi ( \param ) \right) \eqsp,\vspace{-.1cm}
\end{equation}
where $\psi: \Param \to \rset$ is  twice differentiable and convex and $\phi: \Param \to \Sset \subset \rset^m$ is concave and differentiable.
In this setting, $S$ is the complete data sufficient statistics.
For any $\param \in \Param$ and $y \in \Yset$, we assume that the conditional expectation\vspace{-.1cm}
\begin{equation}
\label{eq:estep}
\overline{\bm{s}}(y;\param)= \CPE[\param]{S(X)}{Y=y}\vspace{-.1cm}
\end{equation}
is well-defined and belongs to  $\Sset$. For any $\bm{s} \in \Sset$, we consider the penalized negated complete data log-likelihood:\vspace{-.15cm}
\begin{equation}
\label{eq:negated:pernalized}
\ell( \bm{s}; \param ) \eqdef \psi( \param ) + \Pen ( \param ) - \pscal{ {\bm s} }{ \phi( \param )} \eqsp,\vspace{-.1cm}
\end{equation}
where $\Pen: \Param \mapsto \rset$ is a penalization term assumed to be twice differentiable. This penalty term is used to enforce constraints on the
estimated parameter.
If $\kappa: \Param \to \rset^m$ is a differentiable function, we denote by $\jacob{\kappa}{\param}{\param'} \in \rset^{m \times d}$ the Jacobian of the map $\kappa$ with respect to $\param$ at $\param'$.
Consider:
\begin{assumption} \label{ass:reg}
For all $\bm{s} \in \Sset$, the function $\param \mapsto \ell( \bm{s}; \param )$ admits a unique global minimum $\mstep{\bm{s}}$ in the interior of $\Param$, characterized by\vspace{-.4cm}
\beq \label{eq:a9_cond}
\grd \psi( \mstep{\bm s} ) + \grd \Pen( \mstep{\bm s} ) - \jacob{\phi}{\param}{ \mstep{\bm s} }^\top {\bm s} = {\bm 0} \eqsp.\vspace{-.15cm}
\eeq
In addition,  for any $\bm{s} \in \Sset$, $\jacob{\phi}{\param}{ \mstep{\bm s} }$ is invertible and  the map $\bm{s} \mapsto \mstep{\bm{s}}$ is differentiable on $\Sset$.
\end{assumption}
The \emph{regularized} version of the online EM (ro-EM) method is an iterative procedure which alternatively updates an estimate of the sufficient statistics and the estimated parameters as:
\beq \label{eq:oem}
\hat{\bm s}_{n+1} = \hat{\bm s}_n + \gamma_{n+1} \big( \overline{\bm s}( Y_{n+1}; \hat{\param}_n ) - \hat{\bm s}_n \big),~~\hat{\param}_{n+1} = \overline{\param} ( \hat{\bm s}_{n+1} )\eqsp.
\eeq
In the following, we show that our \emph{non-asymptotic} convergence result holds for  the ro-EM.
We establish convergence of the online method to a stationary point of the Lyapunov function defined as a regularized Kullback-Leibler (KL) divergence between $\pi$ and $g_\param$. Precisely, we set
\beq \label{eq:lya}
V( {\bm s}) \eqdef \KL{\pi}{g( \cdot; \mstep{\bm s} \big)} + \Pen( \mstep{\bm s}  ), \quad \KL{\pi}{ g( \cdot; \param )} \eqdef \EE_{ \pi } \left[ \log(\pi ( Y))/g(Y; \theta) \right] \eqsp.
\eeq
We establish a few key results that relate the ro-EM method to an SA scheme seeking for a stationary point of $V({\bm s})$.
Denote by  $\mcf_n$  the filtration generated by the random variables $\{  \hat{\bm s}_{0}, Y_{k} \}_{ {k} \leq n}$. From \eqref{eq:oem} we can identify the drift term and its mean field respectively as\vspace{-.1cm}
\beq \label{eq:hs}
\begin{split}
& H_{ \hat{\bm s}_n } (Y_{n+1}) = \hat{\bm s}_n -  \overline{\bm s}(Y_{n+1}; \mstep{ \hat{\bm s}_n } ) \eqsp, \\[.1cm]
& h( \hat{\bm s}_n ) = \EE_{ \pi } \big[ H_{ \hat{\bm s}_n } (Y_{n+1}) | {\cal F}_n \big] = \hat{\bm s}_n - \EE_\pi \big[  \overline{\bm s}(Y_{n+1}; \mstep{ \hat{\bm s}_n } ) \big] \eqsp.
\end{split}\vspace{-.1cm}
\eeq
and ${\bm e}_{n+1} \eqdef H_{ \hat{\bm s}_n } (Y_{n+1}) - h( \hat{\bm s}_n )$.
Define by $\hess{\ell}{\param}$  the Hessian of the function $\ell$ with respect to $\param$. Our results are summarized by the following propositions, which proofs can be found in Appendix~\ref{app:roem}:
\begin{Prop} \label{prop:hstar}
Assume A\ref{ass:reg}. The following holds:\vspace{-.1cm}
\begin{itemize}
\item If $h( {\bm s}^\star ) = {\bm 0}$ for some $\bm{s}^\star \in \Sset$, then $\grd_{\param} \KL{\pi}{g_{\param^\star}}  + \grd_{ \param } \Pen( \param^\star )  = {\bm 0}$ with  $\param^\star \eqdef \mstep {\bm{s}^\star}$.\vspace{-.1cm}
\item If $\grd_{\param} \KL{\pi}{g_{\param^\star}}  + \grd_{ \param } \Pen( \param^\star )  = {\bm 0}$ for some $\param^\star \in \Param$ then $\bm{s}^\star= \PE_\pi[S(Y,\param^\star)]$.
\end{itemize}
\end{Prop}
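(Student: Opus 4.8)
The plan is to reduce both implications to a single algebraic identity relating the gradient of the regularized KL objective to the mean field $h$ through the Jacobian of $\phi$. Precisely, for $\param^\star = \mstep{\bm{s}^\star}$ I claim
\beq
\grd_\param \KL{\pi}{g_{\param^\star}} + \grd_\param \Pen( \param^\star ) = \jacob{\phi}{\param}{\param^\star}^\top h(\bm{s}^\star) \eqsp.
\eeq
Granting this, the first bullet is immediate, since $h(\bm{s}^\star) = \bm 0$ forces the right-hand side to vanish. For the second bullet, A\ref{ass:reg} guarantees that $\jacob{\phi}{\param}{\param^\star}$ is invertible, so the right-hand side vanishes if and only if $h(\bm{s}^\star) = \bm 0$, which by \eqref{eq:hs} is exactly $\bm{s}^\star = \EE_\pi[\overline{\bm s}(Y;\param^\star)]$; given a stationary $\param^\star$ one recovers the matching $\bm{s}^\star$ by inverting the M-step relation \eqref{eq:a9_cond}.

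To derive the identity I would proceed in three steps. First, differentiating the complete-data log-likelihood \eqref{eq:exp_fam} gives $\grd_\param \log f(\state;\param) = \jacob{\phi}{\param}{\param}^\top S(\state) - \grd\psi(\param)$. Second --- and this is the analytic crux --- I invoke Fisher's identity $\grd_\param \log g(y;\param) = \CPE[\param]{\grd_\param \log f(X;\param)}{Y=y}$, so that combining with the first step and the definition \eqref{eq:estep} of the E-step statistic yields $\grd_\param \log g(y;\param) = \jacob{\phi}{\param}{\param}^\top \overline{\bm s}(y;\param) - \grd\psi(\param)$. Third, since $\KL{\pi}{g_\param}$ depends on $\param$ only through $-\EE_\pi[\log g(Y;\param)]$, taking expectations under $\pi$ and pulling the ($Y$-independent) Jacobian out of the expectation gives
\beq
\grd_\param \KL{\pi}{g_\param} = \grd\psi(\param) - \jacob{\phi}{\param}{\param}^\top \EE_\pi[\overline{\bm s}(Y;\param)] \eqsp.
\eeq

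It then remains to evaluate at $\param^\star = \mstep{\bm{s}^\star}$ and to substitute the M-step optimality condition \eqref{eq:a9_cond}, which there reads $\grd\psi(\param^\star) + \grd\Pen(\param^\star) = \jacob{\phi}{\param}{\param^\star}^\top \bm{s}^\star$. Adding $\grd\Pen(\param^\star)$ to the previous display and substituting cancels the $\grd\psi$ contribution and leaves $\jacob{\phi}{\param}{\param^\star}^\top\big(\bm{s}^\star - \EE_\pi[\overline{\bm s}(Y;\param^\star)]\big)$, which equals $\jacob{\phi}{\param}{\param^\star}^\top h(\bm{s}^\star)$ by \eqref{eq:hs}. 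This is the claimed identity, and both bullets follow as indicated above.

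I expect the main obstacle to be the rigorous justification of Fisher's identity, i.e.\ the interchange of $\grd_\param$ with the marginalization that produces $g(y;\param)$ from $f(\state;\param)$; this is where the differentiability of $\psi,\phi$ and a dominated-convergence argument for the exponential family \eqref{eq:exp_fam} enter, and it is the only step that is not pure linear algebra. The secondary point to handle with care is the invertibility of $\jacob{\phi}{\param}{\param^\star}$ supplied by A\ref{ass:reg}: it is exactly what promotes the one-sided implication into the equivalence needed for the second bullet.
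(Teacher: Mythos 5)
Your proposal is correct and follows essentially the same route as the paper's proof: both derive the key identity $\grd_\param \KL{\pi}{g_{\param^\star}} + \grd_\param \Pen(\param^\star) = \jacob{\phi}{\param}{\param^\star}^\top h(\bm{s}^\star)$ by combining Fisher's identity with the M-step stationarity condition \eqref{eq:a9_cond}, and then conclude via the invertibility of $\jacob{\phi}{\param}{\param^\star}$ guaranteed by A\ref{ass:reg}. Your version is in fact slightly more careful about the transpose on the Jacobian (consistent with \eqref{eq:a9_cond}) and about the interchange of differentiation and integration, which the paper only notes in passing.
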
\vspace{-.3cm}
\begin{Prop} \label{prop:asat}
Assume A\ref{ass:reg}. We have $
\grd_{ {\bm s} } V( {\bm s} ) = \jacob{ \phi }{ \param }{ \mstep{\bm s}} \big( \hess{\ell}{\param}( {\bm s}; \param )  \big)^{-1} \jacob{ \phi }{ \param }{ \mstep{\bm s}}^\top \!~ h ( {\bm s})$ for ${\bm s} \in \Sset$.
\end{Prop}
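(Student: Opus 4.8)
The plan is to write $V$ as a composition with the M-step map and differentiate by the chain rule. Set $W(\param) \eqdef \KL{\pi}{g(\cdot;\param)} + \Pen(\param)$, so that by \eqref{eq:lya} we have $V({\bm s}) = W(\mstep{\bm s})$. Writing $J({\bm s})$ for the $d \times m$ Jacobian of the M-step map ${\bm s} \mapsto \mstep{\bm s}$ (which exists by A\ref{ass:reg}), the chain rule gives $\grd_{\bm s} V({\bm s}) = J({\bm s})^\top \grd_\param W(\mstep{\bm s})$. It then suffices to compute the two factors separately and recombine them.

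First I would compute $\grd_\param W$. Splitting $\KL{\pi}{g(\cdot;\param)} = \EE_\pi[\log \pi(Y)] - \EE_\pi[\log g(Y;\param)]$, only the second term depends on $\param$, so $\grd_\param W(\param) = -\grd_\param \EE_\pi[\log g(Y;\param)] + \grd\Pen(\param)$. The key tool is Fisher's identity $\grd_\param \log g(y;\param) = \CPE[\param]{\grd_\param \log f(X;\param)}{Y=y}$; combined with the exponential-family form \eqref{eq:exp_fam} (the base term $h(\state)$ being independent of $\param$), one has $\grd_\param \log f(x;\param) = \jacob{\phi}{\param}{\param}^\top S(x) - \grd\psi(\param)$, and the definition \eqref{eq:estep} of $\overline{\bm s}$ then yields $\grd_\param \log g(y;\param) = \jacob{\phi}{\param}{\param}^\top \overline{\bm s}(y;\param) - \grd\psi(\param)$. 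Taking $\EE_\pi$ and substituting back gives
\[ \grd_\param W(\param) = \grd\psi(\param) + \grd\Pen(\param) - \jacob{\phi}{\param}{\param}^\top \EE_\pi[\overline{\bm s}(Y;\param)] \eqsp. \]
Evaluating at $\param = \mstep{\bm s}$ and invoking the M-step optimality condition \eqref{eq:a9_cond}, namely $\grd\psi(\mstep{\bm s}) + \grd\Pen(\mstep{\bm s}) = \jacob{\phi}{\param}{\mstep{\bm s}}^\top {\bm s}$, collapses this to $\grd_\param W(\mstep{\bm s}) = \jacob{\phi}{\param}{\mstep{\bm s}}^\top\big({\bm s} - \EE_\pi[\overline{\bm s}(Y;\mstep{\bm s})]\big) = \jacob{\phi}{\param}{\mstep{\bm s}}^\top h({\bm s})$, where the last equality is the definition of the mean field in \eqref{eq:hs}.

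Next I would obtain $J({\bm s})$ by implicit differentiation of the stationarity condition. Note that \eqref{eq:a9_cond} is exactly $\grd_\param \ell({\bm s};\mstep{\bm s}) = {\bm 0}$, with $\grd_\param \ell({\bm s};\param) = \grd\psi(\param) + \grd\Pen(\param) - \jacob{\phi}{\param}{\param}^\top {\bm s}$. Differentiating the identity $\grd_\param \ell({\bm s};\mstep{\bm s}) \equiv {\bm 0}$ in ${\bm s}$ and using $\partial_\param \grd_\param \ell = \hess{\ell}{\param}$ and $\partial_{\bm s} \grd_\param \ell = -\jacob{\phi}{\param}{\param}^\top$ gives $-\jacob{\phi}{\param}{\mstep{\bm s}}^\top + \hess{\ell}{\param}({\bm s};\mstep{\bm s}) J({\bm s}) = {\bm 0}$, whence $J({\bm s}) = \big(\hess{\ell}{\param}({\bm s};\mstep{\bm s})\big)^{-1} \jacob{\phi}{\param}{\mstep{\bm s}}^\top$.

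Finally, substituting into the chain-rule expression and using that $\hess{\ell}{\param}$ is symmetric, so its inverse is symmetric and $J({\bm s})^\top = \jacob{\phi}{\param}{\mstep{\bm s}}\big(\hess{\ell}{\param}({\bm s};\mstep{\bm s})\big)^{-1}$, I obtain
\[ \grd_{\bm s} V({\bm s}) = \jacob{\phi}{\param}{\mstep{\bm s}} \big(\hess{\ell}{\param}({\bm s};\mstep{\bm s})\big)^{-1} \jacob{\phi}{\param}{\mstep{\bm s}}^\top h({\bm s}) \eqsp, \]
which is the claim. I expect the main obstacle to be the regularity making these manipulations rigorous: justifying the interchange of differentiation and integration underlying Fisher's identity, so that $\grd_\param$ may be pushed inside $\EE_\pi[\log g(Y;\param)]$, and ensuring the Hessian $\hess{\ell}{\param}({\bm s};\mstep{\bm s})$ is invertible so that both $J({\bm s})$ and the final closed form are well defined — the latter being consistent with $\mstep{\bm s}$ being an \emph{interior} global minimizer of $\ell({\bm s};\cdot)$ and with the invertibility and differentiability already granted by A\ref{ass:reg}.
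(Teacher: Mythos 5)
Your proposal is correct and follows essentially the same route as the paper's proof: the chain rule through the M-step map, the Fisher's-identity/exponential-family computation combined with the stationarity condition \eqref{eq:a9_cond} to reduce $\grd_\param\big(\KL{\pi}{g(\cdot;\mstep{\bm s})}+\Pen(\mstep{\bm s})\big)$ to $\jacob{\phi}{\param}{\mstep{\bm s}}^\top h({\bm s})$ (the paper derives this as identity \eqref{eq:KLgrad} inside the proof of Proposition~\ref{prop:hstar} and reuses it), and implicit differentiation of \eqref{eq:a9_cond} to identify $\jacob{\overline{\param}}{\bm s}{\bm s}$ with $\big(\hess{\ell}{\param}({\bm s};\mstep{\bm s})\big)^{-1}\jacob{\phi}{\param}{\mstep{\bm s}}^\top$. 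Your handling of the transposes and the symmetry of the Hessian in the final recombination is if anything slightly more careful than the paper's.
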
\vspace{-.1cm}
Proposition \ref{prop:hstar} relates the root(s) of the mean field $h( {\bm s} )$ to the stationary condition of the regularized KL divergence. Moreover, if $\lambda_{\sf min} \big( \jacob{\phi}{\param}{\mstep{\bm s}} \big( \hess{\ell}{\param} ( {\bm s}; \mstep{\bm s})   \big)^{-1} \jacob{\phi}{\param}{\mstep{\bm s}}^\top \big) \geq \upsilon > 0$  for all ${\bm s} \in \Sset$, then 
Proposition \ref{prop:asat} shows that the mean field of the stochastic update in \eqref{eq:hs} satisfies A\ref{ass:SA2} with $c_0 = 0$ and $c_1 = 1 / \upsilon$. If we assume that the Lyapunov function in \eqref{eq:lya}, and the stochastic update in \eqref{eq:hs} satisfy the assumptions in Case 1 [\ie A\ref{ass:iid1}], then these results show that Theorem~\ref{thm:iid} applies.
To further illustrate the above principles, we look at an example with Gaussian mixture model (GMM).

\paragraph{Example:~GMM Inference}\hspace{-.2cm}
Consider the inference problem of a mixture of $M$ Gaussian distributions, each with a unit variance from an observation stream $Y_1,Y_2,\ldots$. The likelihood is:
\beq \label{eq:like} \textstyle
g(y ; \param) \propto   \Big(1-\sum_{m=1}^{M-1} \omega_m\Big) \exp\left(- \frac{(y - \mu_M)^2}{2} \right)  + \sum_{m=1}^{M-1}
\omega_m \exp\left(- \frac{(y - \mu_m)^2}{2} \right) \eqsp.
\eeq
The parameters are denoted by $\param \eqdef (\omega_1, \dots,\omega_{M-1}, \mu_1, \dots, \mu_{M-1}, \mu_M) \in \Cset$ where the parameter set is defined as $\Cset = \Delta_{M-1} \times \rset^M$ with $\Delta_{M-1} \eqdef \{ ( \omega_1, \cdots, \omega_{M-1}) \in \rset^{M-1}, \omega_m \geq 0,\sum_{m=1}^{M-1} \omega_m \leq 1 \} $.
To apply the ro-EM method, we augment the $n$th data $Y_n$ with the latent variable $Z_n \in \{1,\dots,M\}$. The log likelihood of the complete data tuple is
\beq \label{eq:comp_like} \textstyle
{\cal L}( {\bm x}; \param) =
\indiacc{z=M} \left[ \log(1 - \sum_{m=1}^{M-1}\omega_m) - \frac{(y - \mu_M)^2}{2} \right] + \sum_{m=1}^{M-1} \indiacc{z=m} \left[ \log(\omega_m)-\frac{(y - \mu_m)^2}{2} \right] \eqsp.
\eeq
The above can be written in the standard curved exponential family form \eqref{eq:exp_fam}. In particular, we partition the sufficient statistics as
$S( {\bm x} ) = ( S^{(1)}( {\bm x} )^\top , S^{(2)}( {\bm x} )^\top, S^{(3)}({\bm x}) )^\top \in \rset^{M-1} \times \rset^{M-1} \times \rset$, and partition $\phi( \param ) = ( \phi^{(1)}( \param )^\top ,\phi^{(2)}( \param )^\top,\phi^{(3)}( \param ) )^\top \in \rset^{M-1} \times \rset^{M-1} \times \rset$. Using the fact that $\indiacc{z=M} = 1 - \sum_{m=1}^{M-1} \indiacc{z=m}$,   \eqref{eq:comp_like} can be expressed in the standard form as \eqref{eq:exp_fam} with \vspace{-.3cm}
\beq \label{eq:gmm_exp}
\begin{split}
& \textstyle s_m^{(1)} = \indiacc{z=m}, \quad
   \phi_m^{(1)}(\param) =   \left\{\log(\omega_m) -\frac{\mu_m^2}{2}\right\} - \left\{\log(1 - \sum_{j=1}^{M-1} \omega_j) - \frac{\mu_M^2}{2}\right\} \eqsp,\\
  & s_m^{(2)} =   \indiacc{z=m} y, \quad \phi^{(2)}_m(\param) =  {\mu_m},\quad m = 1,\dots,M-1, \quad s^{(3)} = y, \quad \phi^{(3)}(\param) = \mu_M \eqsp, 
\end{split}
\eeq
and $\psi(\param) =   - \left\{\log(1 - \sum_{j=1}^{M-1} \omega_j) - \frac{\mu_M^2}{2 \sigma^2}\right\}$.

We apply the ro-EM method to the above model. Following the partition of sufficient statistics and parameters in the above, we define $\hat{\bm s}_n = ( (\hat{\bm s}_n^{(1)})^\top,(\hat{\bm s}_n^{(2)})^\top, \hat{s}^{(3)} )^\top \in \rset^{M-1} \times \rset^{M-1} \times \rset$, and  $\hat{\param}_n = ( \hat{\bm \omega}_n^\top, \hat{\bm \mu}_n^\top, \hat{\mu}_M )^\top \in \rset^{M-1} \times \rset^{M-1} \times \rset$. Also, define the conditional expected value:
\beq \label{eq:cexp}
\widetilde{\omega}_m ( Y_{n+1} ; \hat{\param}_n ) \eqdef \EE_{\hat{\param}_n}[ \indiacc{z=m} | Y= Y_{n+1} ]
= \frac{ \hat{\omega}_{m,n} \!~ {\rm exp}(-\frac{1}{2}( Y_{n+1} - \hat{\mu}_{m,n} )^2) }{  \sum_{j=1}^{M}{ \hat{\omega}_{j,n} \!~ \exp(-\frac{1}{2}( Y_{n+1} - \hat{\mu}_{j,n} )^2)} } \eqsp.
\eeq
With the above notations, the {\sf E-step}'s update in \eqref{eq:estep} can be described with
\beq \label{eq:estep_gmm}
\overline{s} ( Y_{n+1} ; \hat{\param}_n )
= \left(
\begin{array}{c}
 \big( \widetilde{\omega}_1 ( Y_{n+1}; \hat{\param}_n ), \dots, \widetilde{\omega}_{M-1} ( Y_{n+1} ; \hat{\param}_n ) \big)^\top \\
  \big( Y_{n+1} \widetilde{\omega}_1 ( Y_{n+1}; \hat{\param}_n ), \dots, Y_{n+1}  \widetilde{\omega}_{M-1} ( Y_{n+1} ; \hat{\param}_n ) \big)^\top \\
  Y_{n+1}
\end{array}
\right)
= \left(
\begin{array}{c}
\overline{\bm s}_n^{(1)} \\
\overline{\bm s}_n^{(2)} \\
\overline{s}_n^{(3)}
\end{array}
\right) \eqsp.
\eeq
For the {\sf M}-step, let $\epsilon > 0$ be a user designed parameter, we consider
the following regularizer:
\beq \textstyle \label{eq:regu}
\Pen( \param ) =  \epsilon \sum_{m=1}^M \big\{ \mu_m^2 / 2 - \log ( \omega_m ) \big\} - \epsilon \log \big( 1 - \sum_{m=1}^{M-1} \omega_m \big) \eqsp,
\eeq
For any ${\bm s}$ with ${\bm s}^{(1)} \geq {\bm 0}$,
it can be shown that the regularized {\sf M-step} in \eqref{eq:oem} evaluates to
\beq \label{eq:mstep_gmm}
\overline{\param} ( {\bm s} )
= \left(
\begin{array}{c}
( 1+\epsilon M )^{-1} \big( {s}_1^{(1)} + \epsilon, \dots,  {s}_{M-1}^{(1)} + \epsilon \big)^\top \vspace{.2cm}\\
 \big( ({s}_1^{(1)} + \epsilon)^{-1} {s}_1^{(2)}  , \dots, ({s}_{M-1}^{(1)} + \epsilon)^{-1} {s}_{M-1}^{(2)}  \big)^\top \vspace{.2cm} \\
  \big(1 - \sum_{m=1}^{M-1}s_m^{(1)} + \epsilon\big)^{-1} \big( s^{(3)} - \sum_{m=1}^{M-1} s_m^{(2)} \big)
\end{array}
\right)
= \left(
\begin{array}{c}
\overline{\bm{\omega}} ( {\bm s}) \\
\overline{\bm{\mu}} ( {\bm s}) \\
\overline{\mu}_M ( {\bm s})
\end{array}
\right) \eqsp.
\eeq
Note that, as opposed to an unregularized solution (\ie with $\epsilon=0$), the regularized solution is numerically stable as it avoids issues such as division by zero.

To analyze the convergence of ro-EM, we verify that \eqref{eq:oem}, \eqref{eq:estep_gmm}, \eqref{eq:mstep_gmm} yield a special case of an SA scheme on $\hat{\bm s}_n$ which satisfies A\ref{ass:SA2}, A\ref{ass:L}, A\ref{ass:iid1}. Assume the following on the observations $\{Y_n\}_{n \geq 0}$
\begin{assumption} \label{ass:bdd}
Each observed sample $Y_n$ is drawn i.i.d.~and they are bounded as $|Y_n| \leq \overline{Y}$ for any $n \geq 0$.
\end{assumption}
The ro-EM method can be initialized by setting $\hat{\bm s}_1 = ( {\bm 0} , {\bm 0}, 0)^\top$ and begun with the {\sf M}-step.
Note that under A\ref{ass:bdd}, the sufficient statistics $\hat{\bm s}_n$ lie in the compact set $\Sset = \Delta_{M-1} \times [- \overline{Y} , \overline{Y} ]^{M}$  for all $n \geq 1$, where $\Delta_{M-1} \eqdef \{ s_1,\dots,s_{M-1} : s_m \geq 0,~\sum_{m=1}^{M-1} s_m \leq 1\}$.
We observe the following propositions that are proven in Appendix~\ref{app:roem}:
\begin{Prop} \label{prop:variance}
Under A\ref{ass:bdd},
it holds that $\EE[ \| \overline{s} ( Y_{n+1} ; \hat{\param}_n ) - \hat{\bm s}_n \|^2 | {\cal F}_n ] \leq 2M \overline{Y}^2$ for all $n \geq 0$ \eqsp.
\end{Prop}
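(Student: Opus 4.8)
The plan is to derive an almost-sure, \emph{pointwise} bound on the integrand that is uniform over the admissible values of $Y_{n+1}$, and then pass to the conditional expectation for free. Indeed, given $\mcf_n$ the quantities $\hat{\bm s}_n$ and $\hat{\param}_n$ are deterministic, whereas $Y_{n+1}$ is drawn i.i.d.\ and independent of $\mcf_n$ with $|Y_{n+1}| \le \overline{Y}$ by A\ref{ass:bdd}. Hence it suffices to show that $\| \overline{s}(y; \hat{\param}_n) - \hat{\bm s}_n \|^2 \le 2M \overline{Y}^2$ for every $|y| \le \overline{Y}$ and every admissible pair $(\hat{\bm s}_n, \hat{\param}_n)$; the bound on $\CPE{\cdot}{\mcf_n}$ then follows immediately. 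The two ingredients I would use are the explicit block form of $\overline{s}$ in \eqref{eq:estep_gmm}--\eqref{eq:cexp} and the fact, noted just before the statement, that $\hat{\bm s}_n$ stays in the compact set $\Sset = \Delta_{M-1} \times [-\overline{Y},\overline{Y}]^{M}$.

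I would split $\overline{s}(Y_{n+1};\hat{\param}_n) - \hat{\bm s}_n$ along the three blocks of \eqref{eq:gmm_exp}. For the responsibility block, each coordinate $\widetilde{\omega}_m(Y_{n+1};\hat{\param}_n)$ lies in $[0,1]$ with $\sum_{m=1}^{M}\widetilde{\omega}_m = 1$, so both $\overline{\bm s}^{(1)}$ and $\hat{\bm s}_n^{(1)}$ are sub-stochastic vectors in $\Delta_{M-1}$ and the contribution of this block is at most $2$. For the second block, each coordinate equals $Y_{n+1}\widetilde{\omega}_m$; using $|Y_{n+1}| \le \overline{Y}$, the inequality $\|a-b\|^2 \le 2\|a\|^2 + 2\|b\|^2$, and $\sum_m \widetilde{\omega}_m^2 \le \big(\sum_m \widetilde{\omega}_m\big)^2 \le 1$ together with $\hat{\bm s}_n^{(2)} \in [-\overline{Y},\overline{Y}]^{M-1}$, this block is bounded by $2\overline{Y}^2 + 2(M-1)\overline{Y}^2 = 2M\overline{Y}^2$, which supplies the dominant term. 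The scalar third block $Y_{n+1} - \hat{s}^{(3)}$ is bounded by $(2\overline{Y})^2$. Summing the three contributions produces a bound of the stated order $M\overline{Y}^2$, matching the proposition.

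A cleaner variant I would consider is to first invoke conditional Jensen: since $\overline{s}(Y_{n+1};\hat{\param}_n) = \CPE[\hat{\param}_n]{S(X_{n+1})}{Y=Y_{n+1}}$ and $\hat{\bm s}_n$ is $\mcf_n$-measurable, one has $\| \overline{s} - \hat{\bm s}_n \|^2 \le \CPE[\hat{\param}_n]{\|S(X_{n+1}) - \hat{\bm s}_n\|^2}{Y=Y_{n+1}}$, which reduces everything to bounding $\|S(\bm x) - \hat{\bm s}_n\|^2$ for the hard statistic $S(\bm x)$, whose components are indicators times $1$ or $Y_{n+1}$. Because exactly one indicator $\indiacc{z=m}$ is active, $\|S(\bm x)\|^2 \le 1 + 2\overline{Y}^2$, and the resulting estimate becomes transparent.

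The step that needs the most care is the invariance $\hat{\bm s}_n \in \Sset$: the recursion \eqref{eq:oem} writes $\hat{\bm s}_{n+1}$ as the convex combination $(1-\gamma_{n+1})\hat{\bm s}_n + \gamma_{n+1}\overline{s}(Y_{n+1};\hat{\param}_n)$, so one must check $\gamma_{n+1}\le 1$ and that each E-step output $\overline{s}(Y_{n+1};\hat{\param}_n)$ itself lies in $\Sset$ (responsibilities in $\Delta_{M-1}$, and the remaining coordinates in $[-\overline{Y},\overline{Y}]$ since $|Y_{n+1}|\le\overline{Y}$), whence $\Sset$ is preserved from the initialization $\hat{\bm s}_1 = (\bm 0,\bm 0,0)$. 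Beyond this invariance and the bookkeeping of combining the box bound $|Y_{n+1}|\le\overline{Y}$ with the probability-simplex constraint on the responsibilities in the mixed second block, the argument is an elementary deterministic estimate, so I expect no substantial obstacle.
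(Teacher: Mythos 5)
Your proof is correct in substance, but it takes a genuinely different route from the paper's, and the two arguments in fact bound different (though related) quantities. The paper's proof never uses $\hat{\bm s}_n$ at all: it works with the noise ${\bm e}_{n+1} = H_{\hat{\bm s}_n}(Y_{n+1}) - h(\hat{\bm s}_n)$, i.e.\ the deviation of $\overline{s}(Y_{n+1};\hat{\param}_n)$ from its \emph{conditional mean}, and bounds the conditional variance of each coordinate via $\mathrm{Var}(Z)\le \EE[Z^2]$ together with $|\widetilde{\omega}_m|\le 1$ and $|Y_{n+1}\widetilde{\omega}_m|\le \overline{Y}$, arriving at $M-1+M\overline{Y}^2$. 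You instead bound the drift $\overline{s}(Y_{n+1};\hat{\param}_n)-\hat{\bm s}_n$ pointwise, centering at $\hat{\bm s}_n$ rather than at the mean field; this is what the proposition literally asserts, and it is the stronger statement, since the conditional variance is dominated by the conditional second moment about any $\mcf_n$-measurable point, so the verification of A\ref{ass:iid1} follows from your bound as well. The price is that your argument genuinely needs the invariance $\hat{\bm s}_n\in\Sset$ (which the paper asserts just before the proposition but does not use in its proof), and you rightly flag that this requires $\gamma_{n+1}\le 1$ and that each E-step output lies in $\Sset$. One small caveat on constants: your three blocks sum to roughly $2+2M\overline{Y}^2+4\overline{Y}^2$, which exceeds the stated $2M\overline{Y}^2$; but the paper's own proof lands on $M-1+M\overline{Y}^2$, which is also not below $2M\overline{Y}^2$ unless $\overline{Y}^2\ge 1-1/M$, so neither argument recovers the advertised constant exactly --- only the order $M\overline{Y}^2$, which is all that is used downstream (a finite $\sigma_0^2$ in A\ref{ass:iid1}).
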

\begin{Prop} \label{prop:big}
Under A\ref{ass:bdd} and the regularizer \eqref{eq:regu} set with $\epsilon > 0$, then for all $({\bm s},{\bm s}') \in \Sset^2$, there exists positive constants $\upsilon, \Upsilon, \Psi$ such that:
\beq \label{eq:big_res}
\pscal{ \grd V( {\bm s} ) }{ h( {\bm s} ) } \geq \upsilon \!~ \| h( {\bm s} ) \|^2,~~
\| \grd V( {\bm s} ) - \grd V({\bm s}') \| \leq \Psi \| {\bm s} - {\bm s}' \| \eqsp.
\eeq
\end{Prop}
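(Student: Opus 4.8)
The plan is to reduce both claims to uniform spectral estimates on a single matrix field, built from the gradient representation of Proposition~\ref{prop:asat}. Writing $J({\bm s}) \eqdef \jacob{\phi}{\param}{\mstep{\bm s}}$ and $H({\bm s}) \eqdef \hess{\ell}{\param}({\bm s}; \mstep{\bm s})$, that proposition gives $\grd V({\bm s}) = B({\bm s})\, h({\bm s})$ with $B({\bm s}) \eqdef J({\bm s}) H({\bm s})^{-1} J({\bm s})^\top$. In the GMM model the sufficient statistic and the parameter both live in $\rset^{2M-1}$, so $J({\bm s})$ is \emph{square} and invertible by A\ref{ass:reg}, while $H({\bm s})$ is symmetric positive definite: the regularizer \eqref{eq:regu} with $\epsilon > 0$ renders $\ell({\bm s};\cdot)$ strongly convex, so its Hessian at the minimizer is positive definite. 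Hence $B({\bm s})$ is symmetric positive definite, and everything comes down to bounding its spectrum uniformly over the compact set $\Sset$ and to controlling the variation of ${\bm s} \mapsto B({\bm s}) h({\bm s})$.

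For the first inequality I would bound $\lambda_{\min}(B({\bm s}))$ from below. Setting $u = J({\bm s})^\top h({\bm s})$ gives $\pscal{\grd V({\bm s})}{h({\bm s})} = u^\top H({\bm s})^{-1} u \geq \lambda_{\max}(H({\bm s}))^{-1} \|u\|^2 \geq \sigma_{\min}(J({\bm s}))^2\, \lambda_{\max}(H({\bm s}))^{-1} \|h({\bm s})\|^2$, so it suffices to find a uniform positive lower bound for $\sigma_{\min}(J(\cdot))$ and a uniform finite upper bound for $\lambda_{\max}(H(\cdot))$ on $\Sset$, then set $\upsilon$ to their ratio. This is exactly where A\ref{ass:bdd} and $\epsilon>0$ enter: the barrier terms $-\epsilon \log \omega_m$ and $-\epsilon \log(1 - \sum_m \omega_m)$ in \eqref{eq:regu} force the M-step map \eqref{eq:mstep_gmm} to keep every mixing weight bounded away from $0$ and from the simplex boundary, while the $\mu_m^2/2$ terms together with $|Y| \leq \overline{Y}$ keep the means in a fixed bounded range; thus $\mstep{\bm s}$ ranges over a compact subset of the interior of $\Param$ as ${\bm s}$ ranges over $\Sset$. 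Since $J(\cdot)$ and $H(\cdot)$ are continuous on $\Sset$, the strictly positive continuous function $\sigma_{\min}(J(\cdot))$ attains a positive minimum and $\lambda_{\max}(H(\cdot))$ attains a finite maximum. The symmetric bounds on $\|J(\cdot)\|$ and $\lambda_{\min}(H(\cdot))$ give $\Upsilon \eqdef \sup_{{\bm s}\in\Sset}\lambda_{\max}(B({\bm s})) < \infty$, yielding the companion estimate $\|\grd V({\bm s})\| \leq \Upsilon \|h({\bm s})\|$ (A\ref{ass:SA2b} with $d_0 = 0$).

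For the Lipschitz bound I would show that both factors of $\grd V = B h$ are bounded and Lipschitz on $\Sset$. The map ${\bm s} \mapsto \mstep{\bm s}$ is continuously differentiable by A\ref{ass:reg} and, from the explicit form \eqref{eq:mstep_gmm}, is a smooth function of ${\bm s}$ whose image stays in a compact interior set; composing with the smooth maps $\phi$, its Jacobian, the Hessian $\hess{\ell}{\param}$, and matrix inversion (smooth because $H({\bm s})$ has spectrum uniformly bounded and bounded away from zero) shows $B(\cdot)$ is bounded and Lipschitz on $\Sset$. Likewise $h({\bm s}) = {\bm s} - \EE_\pi[\overline{\bm s}(Y; \mstep{\bm s})]$ is bounded and Lipschitz, since $\overline{\bm s}$ in \eqref{eq:estep_gmm} is a smooth dominated function of $(\param, Y)$ and $|Y| \leq \overline{Y}$ justifies differentiation under the expectation. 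A product of two bounded Lipschitz matrix/vector fields is Lipschitz, which delivers $\Psi$.

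The main obstacle will be the book-keeping that certifies $\mstep{\bm s}$ stays in a \emph{fixed compact subset of the interior} of $\Param$ uniformly in ${\bm s} \in \Sset$. This single fact simultaneously guarantees the invertibility and uniform conditioning of $J({\bm s})$ and $H({\bm s})$ and the smoothness of every composition above, and it is precisely the step that fails when $\epsilon = 0$: without the barrier a mixing weight could approach $0$, collapsing $\sigma_{\min}(J({\bm s}))$ or blowing up the spectrum of $H({\bm s})$, and destroying both the constant $\upsilon$ and the Lipschitz constant $\Psi$.
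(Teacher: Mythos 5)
Your proposal is correct and follows essentially the same route as the paper's proof: both start from the representation $\grd V({\bm s}) = J_\phi H_\ell^{-1} J_\phi^\top h({\bm s})$ of Proposition~\ref{prop:asat}, use the $\epsilon$-barrier in \eqref{eq:regu} together with A\ref{ass:bdd} to confine $\mstep{\bm s}$ to a compact interior subset of the parameter set so that the middle matrix is uniformly positive definite and bounded, and then obtain the Lipschitz bound by writing $\grd V$ as a product of bounded Lipschitz factors. The only difference is stylistic: the paper verifies boundedness and full rank by writing out the Jacobian and Hessian explicitly and factoring the product as $\bm{\mathcal{J}}({\bm s})\bm{\mathcal{J}}({\bm s})^\top$, whereas you invoke a continuity-plus-compactness argument on $\sigma_{\min}(J_\phi)$ and $\lambda_{\max}(H_\ell)$, which yields the same constants.
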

The above propositions show that the ro-EM method applied to GMM is a special case of the SA scheme with Martingale difference noise, for which A\ref{ass:SA2} [with $c_0 = 0$, $c_1 = \upsilon^{-1}$], and A\ref{ass:L} [with $L= \Psi$], A\ref{ass:iid1} [with $\sigma_0^2 = 2M \overline{Y}^2$, $\sigma_1^2 = 0$] are satisfied. As such, applying Theorem~\ref{thm:iid} shows that 

\begin{coro} \label{cor:roem}
Under A\ref{ass:bdd} and set $\gamma_k = (2 c_1 L (1+\sigma_1^2) \sqrt{k})^{-1}$.
For any $n \in \NN$, let $N \in \{0,...,n\}$ be an independent discrete r.v.~distributed according to \eqref{eq:prob_stop}. 
The ro-EM method for GMM \eqref{eq:oem}, \eqref{eq:estep_gmm}, \eqref{eq:mstep_gmm} finds $\hat{\bm s}_N$ such that
\beq
\EE [ \| \grd V( \hat{\bm s}_N ) \|^2 ] = {\cal O}( \log n / \sqrt{n} )
\eeq
where $V(\cdot)$ is defined in \eqref{eq:lya}. The expectation is taken \wrt $N$ and the observation law $\pi$.
\end{coro}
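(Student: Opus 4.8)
The plan is to recognize the ro-EM recursion \eqref{eq:oem} specialized to the GMM through \eqref{eq:estep_gmm}, \eqref{eq:mstep_gmm} as an instance of the SA scheme \eqref{eq:sa} driven by Martingale difference noise, and then to invoke Theorem~\ref{thm:iid}. First I would make the identification explicit following \eqref{eq:hs}: the iterate is $\prm_n = \hat{\bm s}_n$, the drift term is $H_{\hat{\bm s}_n}(Y_{n+1}) = \hat{\bm s}_n - \overline{\bm s}(Y_{n+1};\mstep{\hat{\bm s}_n})$, and the mean field is $h(\hat{\bm s}_n) = \hat{\bm s}_n - \PE_\pi[\overline{\bm s}(Y_{n+1};\mstep{\hat{\bm s}_n})]$. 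Since the samples $\{Y_n\}$ are drawn i.i.d.\ under A\ref{ass:bdd}, the noise ${\bm e}_{n+1} = H_{\hat{\bm s}_n}(Y_{n+1}) - h(\hat{\bm s}_n)$ is a Martingale difference adapted to $\mcf_n$, placing us in \textsf{Case~1}.

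Next I would verify the three hypotheses of Theorem~\ref{thm:iid}. Proposition~\ref{prop:variance} supplies the conditional second-moment bound $\EE[\|{\bm e}_{n+1}\|^2\mid\mcf_n] \leq 2M\overline{Y}^2$, so A\ref{ass:iid1} holds with $\sigma_0^2 = 2M\overline{Y}^2$ and $\sigma_1^2 = 0$. Proposition~\ref{prop:big} furnishes both the variational inequality $\pscal{\grd V(\bm s)}{h(\bm s)} \geq \upsilon\|h(\bm s)\|^2$, which is exactly A\ref{ass:SA2} with $c_0 = 0$ and $c_1 = \upsilon^{-1}$, and the global Lipschitz bound on $\grd V$, which is A\ref{ass:L} with $L = \Psi$. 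With $\sigma_1^2 = 0$ the prescribed step size $\gamma_k = (2c_1 L\sqrt{k})^{-1} \leq (2c_1 L)^{-1}$ for $k \geq 1$ meets the admissibility condition of Theorem~\ref{thm:iid}, so the bound \eqref{eq:simple} with $c_0 = 0$ yields $\EE[\|h(\hat{\bm s}_N)\|^2] = \mathcal{O}(\log n/\sqrt{n})$.

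The remaining, and only genuinely new, step is to convert this bound on $\|h\|^2$ into the advertised bound on $\|\grd V\|^2$. Here I would use the exact identity of Proposition~\ref{prop:asat}, namely $\grd_{\bm s}V(\bm s) = A(\bm s)\,h(\bm s)$ with $A(\bm s) \eqdef \jacob{\phi}{\param}{\mstep{\bm s}}\big(\hess{\ell}{\param}(\bm s;\mstep{\bm s})\big)^{-1}\jacob{\phi}{\param}{\mstep{\bm s}}^\top$, which gives $\|\grd V(\bm s)\|^2 \leq \|A(\bm s)\|^2\,\|h(\bm s)\|^2$. It then suffices to show $\sup_{\bm s\in\Sset}\|A(\bm s)\| < \infty$, equivalently that A\ref{ass:SA2b} holds with $d_0 = 0$ and $d_1 = \sup_{\bm s\in\Sset}\|A(\bm s)\|$; combining this with the previous paragraph delivers $\EE[\|\grd V(\hat{\bm s}_N)\|^2] \leq d_1^2\,\EE[\|h(\hat{\bm s}_N)\|^2] = \mathcal{O}(\log n/\sqrt{n})$. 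Note that the lower bound from Proposition~\ref{prop:big} alone is insufficient, since via Cauchy--Schwarz it only yields $\|\grd V\| \geq \upsilon\|h\|$, the wrong direction; an upper bound on $\|A(\bm s)\|$ is genuinely required.

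I expect this uniform-boundedness claim to be the crux. Under A\ref{ass:bdd} the iterates remain in the compact set $\Sset = \Delta_{M-1}\times[-\overline{Y},\overline{Y}]^M$, and the role of the strictly positive regularization $\epsilon > 0$ in \eqref{eq:regu} is precisely to keep the M-step map $\bm s\mapsto\mstep{\bm s}$ in \eqref{eq:mstep_gmm} bounded away from the boundary singularities (division by zero, vanishing mixture weights). This should render both $\jacob{\phi}{\param}{\mstep{\bm s}}$ and the inverse Hessian $\big(\hess{\ell}{\param}\big)^{-1}$ continuous, hence bounded, over the compact $\Sset$, yielding the finite supremum of $\|A(\bm s)\|$. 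Quantifying this dependence on $\epsilon$ and $\overline{Y}$ --- in particular lower-bounding the smallest eigenvalue of $\hess{\ell}{\param}$ uniformly over $\Sset$ --- is the main technical obstacle, whereas the rest of the argument is a direct specialization of Theorem~\ref{thm:iid}.
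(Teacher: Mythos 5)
Your proposal is correct and follows essentially the same route as the paper: verify A\ref{ass:iid1}, A\ref{ass:SA2}, A\ref{ass:L} via Propositions~\ref{prop:variance} and \ref{prop:big}, apply Theorem~\ref{thm:iid} with $c_0=0$, $\sigma_1^2=0$ to bound $\EE[\|h(\hat{\bm s}_N)\|^2]$, and then convert to $\EE[\|\grd V(\hat{\bm s}_N)\|^2]$ via the uniform boundedness of $\bm{\mathcal{J}}({\bm s})\bm{\mathcal{J}}({\bm s})^\top$ over the compact set $\Sset$ (enabled by the regularization $\epsilon>0$). The bridging inequality $\|\grd V({\bm s})\|\le \Upsilon\|h({\bm s})\|$ that you correctly single out as the crux is precisely the middle claim established in the appendix version of Proposition~\ref{prop:big} (it is dropped from the main-text statement of \eqref{eq:big_res}), so both the gap you identify and your resolution of it coincide with the paper's argument.
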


\paragraph{Related Studies} 
Convergence analysis for the EM method in batch mode has been the focus of  the classical work by \citet{dempster1977maximum, wu1983convergence}, in which  asymptotic convergence has been established; also see the recent work by \citet{wang2015high,xu2016global}.
Several work has studied the convergence of stochastic EM  with \emph{fixed data}, e.g., \citet{mairal2015incremental} studied the asymptotic convergence to a stationary point, \citet{chen2018stochastic} studied the local linear convergence of a variance reduced method by assuming that the iterates are bounded.
On the other hand, the online EM method considered here, where a fresh sample is drawn at each iteration, has only been considered by a few work. Particularly, \citet{cappe2009line} showed the asymptotic convergence of the online EM method to a stationary point; \citet{balakrishnan2017statistical} analyzed non-asymptotic convergence for a variant of online EM method which requires a-priori the initial radius $\| \param_0 - \param^\star\|$, where $\param^\star$ is the optimal parameter.
To our best knowledge, the rate results in Corollary~\ref{cor:roem} is new.

\subsection{Policy Gradient for Average Reward over Infinite Horizon} \label{sec:policy_grad}
There has been a growing interest in policy-gradient methods for  model-free planning in  Markov decision process; see \citep{barto2018reinforcement} and the references therein. Consider a finite  Markov Decision Process (MDP)  $( \Sset, \Aset, \Reward, \transMDP )$, where $\Sset$ is a finite set of  spaces (state-space), $\Aset$ is a finite set of action (action-space), $\Reward: \Sset \times \Aset \to [0,\Reward_{\max}]$ is a reward function and $\transMDP$ is the transition model, \ie given an action $a \in \Aset$,  $\transMDP^a =\{ \transMDP^a_{s,s'}\}$  is a matrix, $\transMDP_{s,s'}^a$ is the probability of transiting from the $s$th state to the $s'$th state upon taking action $a$.
The agent's decision is characterized by a parametric family of policies  $\{ \bm{\Policy}_{\prm} \}_{\prm \in \Prm}$: $\Policy_{\prm}( a ; s )$  which is the  probability of taking action $a$ when the current state is $s$ 
 (a semi-column is used to distinguish the random variables from parameters of the distribution).
The state-action sequence $\{ (S_t,A_t) \}_{t \geq 1}$ forms an MC with the transition matrix:
\beq \label{eq:rprm} \
Q_{\prm} ( (s,a) ; (s',a') ) \eqdef \Policy_{\prm}( a'; s') \transMDP_{s,s'}^a \eqsp,
\eeq
where the above corresponds to the $(s,a)$th row, $(s',a')$th column of the matrix $\bm{Q}_{\prm}$, and it denotes the transition probability from $(s,a)\in \Sset \times \Aset$ to $(s',a') \in \Sset \times \Aset$.  

We assume that for each $\prm \in \Prm$, the policy $\Policy_{\prm}$ is ergodic, \ie\ $\bm{Q}_{\prm}$ has a unique stationary distribution $\invarMDP$. Under this assumption, the \emph{average reward} (or undiscounted reward) is given by
\beq \label{eq:mdp_avg} \textstyle
J( \prm ) \eqdef \sum_{s,a} \invarMDP(s,a) \Reward(s,a) \eqsp.
\eeq
The goal of the agent is to find a policy that maximizes the average reward over the class $\{ \bm{\Policy}_{\prm} \}_{\prm \in \Prm}$.
It can be verified \citep{barto2018reinforcement} that  the gradient is evaluated by the limit:
\beq \label{eq:policy_exact} \textstyle
\grd J( \prm ) = \lim_{T \rightarrow \infty} \EE_{\prm} \big[ \Reward( S_T, A_T ) \sum_{i=0}^{T-1} \grd  \log \Policy_{\prm} ( A_{T-i};S_{T-i}) \big] \eqsp.
\eeq
To approximate \eqref{eq:policy_exact} with a numerically stable estimator, \citep{baxter2001infinite} proposed the following gradient estimator. Let $\lambda \in [0,1)$ be a discount factor and $T$ be sufficiently large, one has
\beq \label{eq:policy_grad_approx} \textstyle
\widehat{\grd}_T J( \prm ) \eqdef \Reward( S_T, A_T ) \sum_{i=0}^{T-1} \lambda^{i} \!~ \grd  \log \Policy_{\prm} ( A_{T-i}; S_{T-i}) \approx \grd J(\prm) \eqsp,
\eeq
where $(S_1,A_1,\dots,S_T,A_T)$ is a realization of state-action sequence generated by the policy $\Policy_{\prm}$.
This gradient estimator is \emph{biased} and its bias is of order $O(1-\lambda)$ as the discount factor $\lambda \uparrow 1$.
The approximation above leads to the following policy gradient method \citep{baxter2001infinite}:\vspace{-.1cm}
\begin{subequations} \label{eq:policy_grad}
\begin{align}
G_{n+1} & = \lambda G_n + \grd \log \Policy_{\prm_n}( A_{n+1}; S_{n+1}) \eqsp, \label{eq:pg_G} \\
\prm_{n+1} & = \prm_n + \gamma_{n+1} G_{n+1} \Reward( S_{n+1}, A_{n+1} ) \eqsp. \label{eq:policy_grad_b} 
\end{align}
\end{subequations}
We focus on a linear parameterization of the policy in the exponential family (or soft-max):
\beq \textstyle \label{eq:softmax_policy}
\Policy_{\prm} ( a ; s) = \big\{ \sum_{a' \in \Aset} \exp \big( \pscal{\prm}{ {\bm x}(s,a') - {\bm x}(s,a) } \big) \big\}^{-1} \eqsp,
\eeq
where ${\bm x}(s,a) \in \rset^d$ is a known feature vector.
We make the following assumptions:
\begin{assumption} \label{ass:bdd_feat}
For all $s \in \Sset$, $a \in \Aset$, the feature vector ${\bm x}(s,a)$ and reward $\Reward(s,a)$ are bounded with $\|{\bm x}(s,a)\| \leq \overline{b}, | \Reward(s,a) | \leq \Reward_{\max}$.
\end{assumption}
\begin{assumption} \label{ass:markov}
For all $\prm \in \Prm$, the MC $\{ (S_t,A_t) \}_{t \geq 1}$, as governed by the transition matrix $\bm{Q}_{\prm}$ [cf.~\eqref{eq:rprm}], is uniformly geometrically ergodic: there exists $\rho \in [0,1)$, $K_R < \infty$  such that, for all $n \geq 0$,
\beq \label{eq:uniform_ergodic}
\| \bm{Q}_{\prm}^n - {\bm 1} {\bm{\upsilon}}_{\prm}^\top \| \leq \rho^n K_R \eqsp,
\eeq
where ${\bm{\upsilon}}_{\prm} \in \rset_+^{|\Sset||\Aset|}$ is the stationary distribution of $\{ (S_t,A_t) \}_{t \geq 1}$. Moreover, 
there exists $L_Q , L_{\upsilon} < \infty$ such that for any $(\prm, \prm') \in \Prm^2$,
\beq \label{eq:lipschitz_stat}
\| \bm{\upsilon}_{\prm} - \bm{\upsilon}_{\prm'} \| \leq L_Q \| \prm - \prm' \| \eqsp, \quad
\|  \jacob{\bm{\upsilon}_{\prm}}{\prm}{\prm} - \jacob{\bm{\upsilon}_{\prm}}{\prm}{\prm'} \| \leq L_{\upsilon} \| \prm - \prm' \| \eqsp,
\eeq
where $\jacob{\bm{\upsilon}_{\prm}}{\prm}{\prm}$ denotes the Jacobian of $\bm{\upsilon}_{\prm}$ \wrt $\prm$.
\end{assumption}
Both A\ref{ass:bdd_feat} and A\ref{ass:markov} are regularity conditions on the MDP model that essentially hold as we focus on the finite state/action spaces setting. 
Under the uniform ergodicity assumption \eqref{eq:uniform_ergodic}, the Lipschitz continuity conditions \eqref{eq:lipschitz_stat} can be implied using \citep{fort2011convergence, tadic2017asymptotic}.

Our task is to verify that the policy gradient method \eqref{eq:policy_grad} is an SA scheme with state-dependent Markovian noise [cf.~Case 2 in Section~\ref{sec:sa}]. To this end, we denote the joint state of this SA scheme as $\State_{n} = ( S_n, A_n, G_n )\in {\sf X} \eqdef \Sset \times \Aset \times \rset^d $, and notice that $\{ \State_{n} \}_{n \geq 1}$ is a Markov chain.
Adopting the same notation as in Section~\ref{sec:sa}, the drift term and its mean field can be written as
\beq \label{eq:SA_policy}
\HX{\prm_n}{\State_{n+1}} = G_{n+1} \Reward(S_{n+1},A_{n+1}) \quad \text{with} \quad
h( \prm ) = \lim_{T \rightarrow \infty} \EE_{\tau_T \sim \Policy_{\prm},~S_1 \sim \overline{\Policy}_{\prm} } \big[ \widehat{\grd}_T J( \prm ) \big] \eqsp,
\eeq
where $\widehat{\grd}_T J( \prm )$ is defined in \eqref{eq:policy_grad_approx}.
Moreover, we let $\PX{}: {\sf X} \times {\cal X} \rightarrow \rset_+$ to  be the Markov kernel associated with the MC $\{ \State_{n} \}_{n \geq 1}$.
Observe that
\begin{Prop} \label{prop:bdd_grad}
Under A\ref{ass:bdd_feat}, it holds for any $(\prm, \prm') \in \Prm^2$, $(s,a) \in \Sset \times \Aset$,
\beq \label{eq:bdd_policy_grad}
\| \grd \log \Policy_{\prm}( a ; s) \| \leq 2 \overline{b},~~\| \grd \log \Policy_{\prm}( a ; s) - \grd \log \Policy_{\prm'}( a ; s) \| \leq 8 \overline{b}^2 \| \prm - \prm' \| \eqsp.
\eeq
\end{Prop}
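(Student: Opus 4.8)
The plan is to first put the policy into the standard soft-max form and read off a clean expression for the score function. Multiplying numerator and denominator of \eqref{eq:softmax_policy} by $\exp(\pscal{\prm}{{\bm x}(s,a)})$ gives $\Policy_{\prm}(a;s) = \exp(\pscal{\prm}{{\bm x}(s,a)}) / \sum_{a' \in \Aset} \exp(\pscal{\prm}{{\bm x}(s,a')})$, hence $\log \Policy_{\prm}(a;s) = \pscal{\prm}{{\bm x}(s,a)} - \log \sum_{a' \in \Aset} \exp(\pscal{\prm}{{\bm x}(s,a')})$. Differentiating in $\prm$ yields the score function $\grd \log \Policy_{\prm}(a;s) = {\bm x}(s,a) - \sum_{a' \in \Aset} \Policy_{\prm}(a';s) {\bm x}(s,a')$, i.e.\ the feature minus its expectation under the current policy. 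The boundedness claim is then immediate: the subtracted term is a convex combination of the vectors $\{ {\bm x}(s,a') \}_{a'}$, each of norm at most $\overline{b}$ by A\ref{ass:bdd_feat}, so it has norm at most $\overline{b}$, and the triangle inequality gives $\| \grd \log \Policy_{\prm}(a;s) \| \leq 2\overline{b}$.

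For the Lipschitz bound I would exploit that only the expectation term depends on $\prm$, so that $\grd \log \Policy_{\prm}(a;s) - \grd \log \Policy_{\prm'}(a;s) = - \sum_{a'}( \Policy_{\prm}(a';s) - \Policy_{\prm'}(a';s)) {\bm x}(s,a')$. I would control this along one of two routes. Route (i): differentiate once more, observing that the Jacobian of the score in $\prm$ is the negative covariance $\Sigma_\prm \eqdef \sum_{a'} \Policy_{\prm}(a';s) {\bm x}(s,a') {\bm x}(s,a')^\top - m_\prm m_\prm^\top$ with $m_\prm \eqdef \sum_{a'} \Policy_{\prm}(a';s){\bm x}(s,a')$, a PSD matrix whose operator norm is bounded by $\sum_{a'}\Policy_{\prm}(a';s)\|{\bm x}(s,a') - m_\prm\|^2 \leq (2\overline{b})^2$ using the first part; the mean value inequality along the segment joining $\prm'$ to $\prm$ then finishes. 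Route (ii): bound directly the change in the soft-max weights, noting $\grd_\prm \Policy_{\prm}(a';s) = \Policy_{\prm}(a';s)({\bm x}(s,a') - m_\prm)$ has norm at most $2\overline{b}\, \Policy_{\prm}(a';s)$, so integrating along $\prm(t) = \prm' + t(\prm - \prm')$ gives $\sum_{a'} | \Policy_{\prm}(a';s) - \Policy_{\prm'}(a';s)| \leq 2\overline{b}\|\prm - \prm'\|$; combining with $\| {\bm x}(s,a')\| \leq \overline{b}$ yields the estimate with a constant of order $\overline{b}^2$.

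Both routes reduce the Lipschitz constant to the operator norm of a soft-max covariance (route (i)) or to a total-variation-type bound on the soft-max weights (route (ii)); the stated constant $8\overline{b}^2$ is a convenient (slightly loose) upper bound that either route produces after crude triangle-inequality steps. The only genuinely delicate point is that the quantity being differentiated, namely the expected feature $m_\prm$, is itself defined through the soft-max, so the estimate is mildly self-referential: controlling $\grd_\prm \Policy_{\prm}$ reuses the very score bound proved in the first part. This is harmless, since that bound is \emph{uniform} in $(\prm,s,a)$ and no fixed-point argument is needed; I expect the main work to be the bookkeeping of the covariance or weight-difference estimate, which is the most error-prone step rather than a conceptual obstacle.
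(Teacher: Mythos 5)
Your proposal is correct, and Route (i) is essentially the paper's own argument: the paper likewise writes the score as the feature minus its policy-expectation (in the $\Delta {\bm x}(a,b) = {\bm x}(s,a) - {\bm x}(s,b)$ parameterization), bounds it by $2\overline{b}$, and then bounds the operator norm of the Hessian by splitting it into a second-moment term and a mean-outer-product term, each at most $4\overline{b}^2$, which is where the stated constant $8\overline{b}^2$ comes from. Your observation that the Hessian is a softmax covariance of operator norm at most $(2\overline{b})^2$ (and Route (ii)'s weight-difference bound giving $2\overline{b}^2$) merely sharpens the constant; no gap.
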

Using the recursive update of \eqref{eq:pg_G}, we show that
\beq
\label{eq:bdd_Gn}
\| G_n \| = \| \lambda G_{n-1} + \grd \log \Policy_{\prm} ( A_n ; S_n) \| \leq \lambda \| G_{n-1} \| + 2 \overline{b} = {\cal O}( 2 \overline{b} \|G_0\| / (1-\lambda) ) \eqsp,
\eeq
for any $n \geq 1$, which then implies that
the stochastic update $\HX{\prm_n}{\State_{n+1}}$ in \eqref{eq:policy_grad} is bounded since the reward is bounded using A\ref{ass:bdd_feat}.
The above proposition also implies that $h( \prm )$ is bounded for all $\prm \in \Prm$. Therefore, the assumption A\ref{ass:MC3} is satisfied.

Next, with a slight abuse of notation, we shall consider the compact state space $\Xset = \Sset \times \Aset \times {\sf G}$, with ${\sf G} = \{ g \in \rset^d : \| g \| \leq C_0 \overline{b} / (1-\lambda) \}$ and $C_0 \in [1,\infty)$, and analyze the policy gradient algorithm accordingly where  $\{ X_{n+1} \}_{n \geq 0}$ is in $\Xset$.
%
Consider the following propositions whose proofs are adapted from \citep{fort2011convergence, tadic2017asymptotic} and can be found in Appendix~\ref{app:pg}:
\begin{Prop} \label{prop:a4a5}
Under A\ref{ass:bdd_feat}, A\ref{ass:markov}, the following function is well-defined:
\beq \label{eq:poisson_def_policy} \textstyle
\hHX{\prm}{x} = \sum_{t=0}^\infty \big\{ \PPX{t}{} \HX{\prm}{x} - h(\prm) \big\} \eqsp,
\eeq
and satisfies Eq.~\eqref{eq:poisson-equation}.
For all $\state \in \Xset$, $(\prm, \prm') \in \Prm^2$, there exists constants $L_{PH}^{(0)}$, $L_{PH}^{(1)}$ where
\beq \label{eq:lipschitz_policy}
\max\{ \| \PX{} \hHX{\prm}{x} \|, \| \hHX{\prm}{x} \| \} \leq L_{PH}^{(0)},~~ \| \PX{} \hHX{\prm}{\state} - {P}_{\prm'} \hHX{\prm'}{\state} \Big\|  \leq  L_{PH}^{(1)} \| \prm - \prm' \| \eqsp.
\end{equation}
Moreover, the constants are in the order of $L_{PH}^{(0)} = {\cal O}( \frac{1}{1- \max\{\rho,\lambda\}} )$, $L_{PH}^{(1)} = {\cal O}( \frac{1}{1- \max\{\rho,\lambda\}} )$.
\end{Prop}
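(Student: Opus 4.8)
The plan is to realise $\hHX{\prm}{x}$ through the resolvent series \eqref{eq:poisson_def_policy} and to verify its three properties---summability, the Poisson identity \eqref{eq:poisson-equation}, and the regularity bounds \eqref{eq:lipschitz_policy}---by exploiting two independent sources of geometric forgetting. The $(S,A)$-marginal mixes at rate $\rho$ by the uniform ergodicity \eqref{eq:uniform_ergodic}, while the auxiliary coordinate $G$ forgets its initialisation at rate $\lambda$ through the discounting in \eqref{eq:pg_G}; the effective rate is therefore $r \eqdef \max\{\rho,\lambda\}$. A useful structural remark is that, given $x=(s,a,g) \in \Xset$, the drift $\HX{\prm}{x} = g\,\Reward(s,a)$ carries no explicit dependence on $\prm$, so that all of the $\prm$-dependence resides in the kernel $\PX{}$ and in the mean field $h(\prm)$.

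First I would record boundedness. On the truncated space $\Xset = \Sset\times\Aset\times{\sf G}$ one has $\|g\| \leq C_0\overline{b}/(1-\lambda)$, so A\ref{ass:bdd_feat} gives $\sup_{x\in\Xset}\|\HX{\prm}{x}\| \leq C_0\overline{b}\Reward_{\max}/(1-\lambda) \eqdef B$ and hence $\sup_x\|\HX{\prm}{x}-h(\prm)\| \leq 2B$. The central estimate is the geometric decay $\sup_x\|\PPX{t}{}\HX{\prm}{x}-h(\prm)\| \leq C\,r^t$ with $C={\cal O}(B)$, which I would prove by a synchronous coupling of two trajectories, one started at $x$ and one at a $\pi_\prm$-draw: their $(S,A)$-parts couple at rate $\rho$ by \eqref{eq:uniform_ergodic}, and once the $(S,A)$-parts agree the two $G$-coordinates differ by $\lambda^t$ times their initial gap, by \eqref{eq:pg_G}. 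Summability of \eqref{eq:poisson_def_policy} and the bound $\|\hHX{\prm}{x}\|,\|\PX{}\hHX{\prm}{x}\| \leq L_{PH}^{(0)} = {\cal O}(B/(1-r))$ then follow from $\sum_t r^t = (1-r)^{-1}$, and the Poisson equation is immediate by telescoping: setting $u_\prm \eqdef \HX{\prm}{\cdot}-h(\prm)$, and using that the $t=0$ term of the series is $u_\prm(x)$, one gets $\PX{}\hHX{\prm}{x} = \sum_{t\geq1}\PPX{t}{}u_\prm(x) = \hHX{\prm}{x}-u_\prm(x)$.

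The Lipschitz bound \eqref{eq:lipschitz_policy} is the hard part. Writing $\PX{}\hHX{\prm}{x}-{P}_{\prm'}\hHX{\prm'}{x} = \sum_{t\geq1}\big\{[\PPX{t}{}-{P}_{\prm'}^t]\HX{\prm}{x}-[h(\prm)-h(\prm')]\big\}$, I would expand the kernel difference by the telescoping identity $\PPX{t}{}-{P}_{\prm'}^t = \sum_{j=0}^{t-1}\PPX{j}{}(\PX{}-{P}_{\prm'}){P}_{\prm'}^{t-1-j}$. Two one-step estimates feed this: a kernel-perturbation bound $\|(\PX{}-{P}_{\prm'})\varphi\|_\infty \leq C'\,\mathrm{osc}(\varphi)\,\|\prm-\prm'\|$, which follows from the Lipschitz continuity of $\grd\log\Policy_\prm$ (Proposition~\ref{prop:bdd_grad}) driving both the $(S,A)$-transition and the $G$-update; and the stationary-measure bound $\|\pi_\prm-\pi_{\prm'}\| \leq L_Q\|\prm-\prm'\|$ from \eqref{eq:lipschitz_stat}, controlling $h(\prm)-h(\prm')$. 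The crux is that $(\PX{}-{P}_{\prm'})$ annihilates constants, so each telescope term acts only on the decaying remainder ${P}_{\prm'}^{t-1-j}\HX{\prm}{\cdot}-h(\prm')$; the $\pi_\prm$-averages of these terms themselves telescope and reproduce $+[h(\prm)-h(\prm')]$, which \emph{exactly cancels} the constant correction in the summand. What survives is proportional to $\|\prm-\prm'\|$ and decays like $r^t$ and $t\,r^{t-1}$, so summing the geometric series $\sum_t r^t$ and $\sum_t t\,r^{t-1}$ produces the Lipschitz constant $L_{PH}^{(1)}$ of the claimed order in $(1-r)^{-1}$.

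The main obstacle is thus twofold, and both parts sit in the Lipschitz step. First, A\ref{ass:markov} supplies ergodicity and regularity only for the $(S,A)$-marginal, whereas $\PX{}$ acts on the full space $\Xset$ whose $G$-coordinate is continuous; this is resolved by regarding $G$ as a deterministic, exponentially damped functional of the $(S,A)$-path, which lets both the contraction and the one-step perturbation bound factor into an $(S,A)$-part (rate $\rho$, via \eqref{eq:uniform_ergodic}) and a $G$-part (rate $\lambda$, via \eqref{eq:pg_G}). Second, one must arrange the telescoping so that the non-decaying contribution of the kernel difference is precisely absorbed by the mean-field correction $-[h(\prm)-h(\prm')]$, since otherwise the sum over $t$ fails to converge. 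This cancellation, together with the bookkeeping that merges the two rates into $r$ and tracks the powers of $(1-r)^{-1}$, follows the template of \citep{fort2011convergence,tadic2017asymptotic}.
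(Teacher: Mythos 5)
Your proposal is correct in its main lines but follows a genuinely different route from the paper. The paper does not argue at the operator level: it first proves Lemma~\ref{lem:lemma81} (Lipschitz continuity of $\bm{Q}_{\prm}^t$ and $\tilde{\bm Q}_{\prm}^t$ in $\prm$, with the bound $C_1 t\rho^t\|\prm-\prm'\|$ obtained from the telescoping identity $\tilde{\bm Q}_{\prm}^{t+1}-\tilde{\bm Q}_{\prm'}^{t+1}=\sum_{\tau}\tilde{\bm Q}_{\prm}^{\tau}(\tilde{\bm Q}_{\prm}-\tilde{\bm Q}_{\prm'})\tilde{\bm Q}_{\prm'}^{t-\tau}$), and then, in Lemma~\ref{lem:lemma83}, computes $\PPX{t}{}\HX{\prm}{x}$ \emph{explicitly} as a finite sum of matrix products involving $\bm{Q}_{\prm}^i$, ${\rm Diag}(\grd_j\bm{\Pi}_{\prm})$ and the reward vector; the cancellation you obtain by "telescoping the $\pi_{\prm}$-averages" is realized there concretely through the decomposition $\bm{Q}_{\prm}^t=\tilde{\bm Q}_{\prm}^t+{\bf 1}\bm{\upsilon}_{\prm}^\top$ together with the identity $\bm{\upsilon}_{\prm}^\top{\rm Diag}(\grd_j\bm{\Pi}_{\prm}){\bf 1}=0$. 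This yields the term-wise bounds $\|\PPX{t}{}\HX{\prm}{x}-h(\prm)\|\le C_2\,t\,\delta^t(1+\|g\|)$ and the analogous Lipschitz bound with $\delta=\max\{\rho,\lambda\}$, after which the proposition follows by summing the series exactly as you do. Your approach is essentially the generic Poisson-equation perturbation argument of Appendix~\ref{sec:existence-properties-Poisson}, adapted to the joint chain; the paper deliberately avoids that lemma here because the joint chain on $\Sset\times\Aset\times{\sf G}$ is neither uniformly ergodic in total variation nor TV-Lipschitz in $\prm$ (the $G$-update is deterministic), and instead exploits the finite state/action space to do everything by hand. What your route buys is generality and a cleaner conceptual picture of the cancellation; what the paper's route buys is that all the $g$-coordinate subtleties are handled automatically by the explicit formula.

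Two points in your sketch need tightening. First, the per-term rate cannot be the pure geometric $r^t$: the coupling argument convolves a rate-$\rho$ coupling time with a rate-$\lambda$ forgetting of $G$, giving $\sum_{i\le t}\rho^{i}\lambda^{t-i}\asymp t\,\delta^t$ (and the Lipschitz summand picks up a further polynomial factor); this is harmless for summability but is why the paper's Lemma~\ref{lem:lemma83} carries the factor $t\,\delta^t$. Second, the one-step perturbation bound cannot be stated in terms of the oscillation of $\varphi$ alone, because $\PX{}-{P}_{\prm'}$ also \emph{shifts the $g$-coordinate} through $\grd\log\Policy_{\prm}$ versus $\grd\log\Policy_{\prm'}$; you need the test functions ${P}_{\prm'}^{k}(\HX{\prm'}{\cdot}-h(\prm'))$ to be Lipschitz in $g$ and must track that this Lipschitz seminorm contracts like $\lambda^{k}$. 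You flag this as the main obstacle and your resolution (factor the bound into an $(S,A)$-part and a damped $G$-part) is the right one, but as written the claimed inequality is not quite the one you can prove. Neither issue breaks the argument, and the final orders of $L_{PH}^{(0)},L_{PH}^{(1)}$ you obtain match the statement to the same (loose) accuracy as the paper's own bookkeeping.
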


\begin{Prop} \label{prop:a1a2}
Under A\ref{ass:bdd_feat}, A\ref{ass:markov}, the gradient $\grd J(\prm)$ is $\Upsilon$-Lipschitz continuous, where we defined $\Upsilon \eqdef \Reward_{\max} \!~ |{\cal S}| |{\cal A}|$. Moreover, for any $\prm \in \Prm$ and let $\Gamma \eqdef 2 \overline{b} \!~ \Reward_{\max}  K_R \frac{ 1 }{ (1 - \rho)^2}$, it holds that
\beq
 (1 - \lambda)^2 \Gamma^2 + 2 \pscal{ \grd J(\prm) }{ h( \prm )} \geq \| h( \prm ) \|^2,~\| \grd J( \prm ) \| \leq \| h( \prm ) \| + (1-\lambda) \Gamma \eqsp.
\eeq
\end{Prop}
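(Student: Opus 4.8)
The plan is to recognize that $h(\prm)$ and $\grd J(\prm)$ are two geometrically reweighted copies of the \emph{same} stationary lag-correlations, and to control their difference through the mixing rate $\rho$. Starting the chain from stationarity (as in the definition \eqref{eq:SA_policy}, where $S_1$ is drawn from the stationary law), I would define for each lag $i \geq 0$ the correlation $c_i(\prm) \eqdef \EE_{\bm{\upsilon}_\prm}[ \Reward(S_i,A_i)\, \grd \log \Policy_\prm(A_0;S_0) ]$. By stationarity each summand of \eqref{eq:policy_exact} and \eqref{eq:policy_grad_approx} depends only on the lag, so the $T \to \infty$ limits reduce to $\grd J(\prm) = \sum_{i=0}^\infty c_i(\prm)$ and $h(\prm) = \sum_{i=0}^\infty \lambda^i c_i(\prm)$.

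The key estimate is $\| c_i(\prm) \| \leq 2 \overline{b}\, \Reward_{\max} K_R \rho^i$. To obtain it I would invoke the score-function identity $\sum_a \Policy_\prm(a;s)\, \grd \log \Policy_\prm(a;s) = \bm{0}$, which gives $\EE_{\bm{\upsilon}_\prm}[ \grd \log \Policy_\prm(A_0;S_0) ] = \bm{0}$ and lets me center the reward by $J(\prm)$, writing $c_i(\prm) = \EE_{\bm{\upsilon}_\prm}[ ( \bm{Q}_\prm^i \Reward - J(\prm) )(S_0,A_0)\, \grd \log \Policy_\prm(A_0;S_0) ]$. Applying the uniform ergodicity \eqref{eq:uniform_ergodic}, which bounds $|(\bm{Q}_\prm^i \Reward)(s,a) - J(\prm)| \leq \Reward_{\max} K_R \rho^i$ since $J(\prm) = \pscal{\bm{\upsilon}_\prm}{\Reward}$, together with the score bound $\| \grd \log \Policy_\prm(a;s) \| \leq 2\overline{b}$ from Proposition~\ref{prop:bdd_grad}, yields the claimed geometric decay and simultaneously justifies the convergence of both series.

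With this in hand the bias is $\grd J(\prm) - h(\prm) = \sum_{i=0}^\infty (1 - \lambda^i) c_i(\prm)$; factoring $1 - \lambda^i = (1-\lambda)\sum_{j=0}^{i-1}\lambda^j$ and summing the resulting geometric series gives $\| \grd J(\prm) - h(\prm) \| \leq (1-\lambda)\Gamma$ with $\Gamma = 2\overline{b}\,\Reward_{\max} K_R (1-\rho)^{-2}$ as stated. The two displayed inequalities then follow algebraically: writing $\bm{b}(\prm) \eqdef h(\prm) - \grd J(\prm)$, the triangle inequality gives $\| \grd J(\prm) \| \leq \| h(\prm) \| + \| \bm{b}(\prm) \| \leq \| h(\prm) \| + (1-\lambda)\Gamma$, while $\| h(\prm) \|^2 - 2 \pscal{ \grd J(\prm) }{ h(\prm) } = 2 \pscal{ \bm{b}(\prm) }{ h(\prm) } - \| h(\prm) \|^2 \leq \| \bm{b}(\prm) \|^2 \leq (1-\lambda)^2 \Gamma^2$ by Cauchy--Schwarz and $2xy \leq x^2 + y^2$.

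For the $\Upsilon$-Lipschitz continuity of $\grd J$ I would bypass the correlation series and instead use $J(\prm) = \pscal{ \bm{\upsilon}_\prm }{ \Reward }$, so that $\grd J(\prm) = \jacob{ \bm{\upsilon}_\prm }{ \prm }{ \prm }^\top \Reward$; Lipschitz continuity of the Jacobian of $\prm \mapsto \bm{\upsilon}_\prm$ from \eqref{eq:lipschitz_stat}, combined with $|\Reward(s,a)| \leq \Reward_{\max}$ summed over the $|\Sset|\,|\Aset|$ state--action pairs, produces the constant $\Upsilon = \Reward_{\max} |\Sset|\, |\Aset|$. I expect the main obstacle to be the lag-correlation estimate: the bound $\| c_i \| \leq 2\overline{b}\,\Reward_{\max} K_R \rho^i$ hinges on correctly injecting the centering $J(\prm)$ through the score identity \emph{before} invoking \eqref{eq:uniform_ergodic}, and on carefully justifying the reduction of the $T \to \infty$ limits in \eqref{eq:policy_exact}--\eqref{eq:policy_grad_approx} to stationary lag-correlations.
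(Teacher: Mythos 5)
Your proof is correct and takes essentially the same route as the paper's: the paper (via Lemma~\ref{lem:lemma82}) also writes $h(\prm)$ and $\grd J(\prm)$ as the $\lambda$-weighted and unweighted sums of the same centered lag terms $\bm{\upsilon}_{\prm}^\top {\rm Diag}(\grd_i \bm{\Pi}_{\prm})\tilde{\bm Q}_{\prm}^t {\bm r}$, bounds their difference by $2\overline{b}\Reward_{\max}K_R\sum_t(1-\lambda^t)\rho^t \leq (1-\lambda)\Gamma$, and closes with the identical Young/triangle-inequality algebra; your lag-correlations $c_i$ and score-identity centering are the probabilistic restatement of the paper's replacement of $\bm{Q}_{\prm}^t$ by $\tilde{\bm Q}_{\prm}^t$. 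The only cosmetic discrepancy (shared with the paper itself, whose Lemma~\ref{lem:lemma82} actually yields the constant $\Reward_{\max}|\Sset||\Aset|L_{\upsilon}$) is that the stated $\Upsilon$ omits the factor $L_{\upsilon}$ from \eqref{eq:lipschitz_stat}.
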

Proposition~\ref{prop:a4a5} verifies A\ref{ass:existence-poisson} and A\ref{ass:properties-poisson} for the policy gradient algorithm, while Proposition~\ref{prop:a1a2} implies A\ref{ass:SA2} [with $c_0 = (1-\lambda)^2 \Gamma^2$, $c_1 = 2$], A\ref{ass:SA2b} [with $d_0 = (1-\lambda) \Gamma$, $d_1 = 1$], A\ref{ass:L} [with $L = \Upsilon$]. As such, applying Theorem~\ref{thm:markov} gives
\begin{coro} \label{cor:pg}
Under A\ref{ass:bdd_feat}, A\ref{ass:markov} and set $\gamma_k = (2 c_1 L (1+C_h) \sqrt{k})^{-1}$. For any $n \in \NN$, let $N \in \{0,...,n\}$ be an independent discrete r.v.~distributed according to \eqref{eq:prob_stop}, the policy gradient algorithm \eqref{eq:policy_grad} finds a policy, $\prm_N$, with 
\beq
\EE \big[ \| \grd J( \prm_N ) \|^2 \big] = {\cal O} \Big( (1-\lambda)^2 \Gamma^2 + \big(  (1 - \max\{\rho, \lambda\} \sqrt{n} ) \big)^{-1} \log n \Big),
\eeq
where $J(\cdot)$ is defined in \eqref{eq:mdp_avg}. The expectation is taken \wrt $N$ and action-state pairs $(A_n,S_n)$. 
\end{coro}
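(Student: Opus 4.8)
The plan is to read Corollary~\ref{cor:pg} as a direct instantiation of Theorem~\ref{thm:markov}, so the substance of the argument is to certify that the policy gradient recursion \eqref{eq:policy_grad}, recast as the SA scheme \eqref{eq:sa} with drift $\HX{\prm_n}{\State_{n+1}} = G_{n+1}\Reward(S_{n+1},A_{n+1})$ and joint state $\State_n = (S_n,A_n,G_n)$, meets the entire hypothesis set A\ref{ass:SA2}--A\ref{ass:MC3}. First I would assemble the verifications already prepared upstream: Proposition~\ref{prop:bdd_grad} with the recursive bound \eqref{eq:bdd_Gn} gives $\|G_n\| = {\cal O}(\overline{b}/(1-\lambda))$, so with $|\Reward| \leq \Reward_{\max}$ the drift is uniformly bounded and A\ref{ass:MC3} holds with $\sigma = {\cal O}(\overline{b}\,\Reward_{\max}/(1-\lambda))$; Proposition~\ref{prop:a4a5} exhibits a solution of the Poisson equation and thereby certifies A\ref{ass:existence-poisson} and A\ref{ass:properties-poisson} with $L_{PH}^{(0)}, L_{PH}^{(1)} = {\cal O}(1/(1-\max\{\rho,\lambda\}))$; and Proposition~\ref{prop:a1a2} supplies A\ref{ass:SA2} with $c_0 = (1-\lambda)^2\Gamma^2$, $c_1 = 2$, A\ref{ass:SA2b} with $d_0 = (1-\lambda)\Gamma$, $d_1 = 1$, and A\ref{ass:L} with $L = \Upsilon$.

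With every assumption in force, the second step is to invoke Theorem~\ref{thm:markov} under the choice $\gamma_k = (2c_1 L(1+C_h)\sqrt{k})^{-1}$, which satisfies the step-size conditions \eqref{eq:stepsize_markov} by the remark following the theorem, yielding $\EE[\|h(\prm_N)\|^2] = {\cal O}(c_0 + \log n/\sqrt{n})$. The third step transfers this to the true gradient: A\ref{ass:SA2b} gives the pointwise inequality $\|\grd J(\prm)\|^2 \leq 2d_0^2 + 2d_1^2\|h(\prm)\|^2 = 2(1-\lambda)^2\Gamma^2 + 2\|h(\prm)\|^2$, so averaging over the randomized stopping index $N$ produces $\EE[\|\grd J(\prm_N)\|^2] \leq 2(1-\lambda)^2\Gamma^2 + 2\,\EE[\|h(\prm_N)\|^2]$. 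Substituting $c_0 = (1-\lambda)^2\Gamma^2$ immediately accounts for the asserted bias term $(1-\lambda)^2\Gamma^2$, which reflects the $O(1-\lambda)$ bias of the estimator \eqref{eq:policy_grad_approx}.

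The final and most delicate step is to render the $\log n/\sqrt{n}$ contribution explicit in the mixing and discount parameters. Here I would substitute $\sum_{k=0}^n \gamma_{k+1} = \Theta(c\sqrt{n})$ and $\sum_{k=0}^n \gamma_{k+1}^2 = \Theta(c^2\log n)$, with $c = (2c_1 L(1+C_h))^{-1}$, into the bound \eqref{eq:notsimple}, using that $V_{0,n} = \EE[J(\prm_0)-J(\prm_{n+1})]$ is uniformly bounded because $J$ takes values in $[0,\Reward_{\max}]$. The crux is to trace how the constants $C_h$, $C_\gamma$, $C_{0,n}$ of \eqref{eq:definition-Ch}--\eqref{eq:defintion-C0n} inherit the scaling $L_{PH}^{(0)}, L_{PH}^{(1)} = {\cal O}(1/(1-\max\{\rho,\lambda\}))$ from Proposition~\ref{prop:a4a5}, together with $\sigma, d_0 = {\cal O}(1/(1-\lambda))$; in particular $1+C_h = \Theta(1/(1-\max\{\rho,\lambda\}))$ forces the step-size scale $c = \Theta(1-\max\{\rho,\lambda\})$. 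I expect this bookkeeping to be the main obstacle: because the step size shrinks like $1-\max\{\rho,\lambda\}$ while the numerator constants grow like its reciprocal, the competing ratios $C_{0,n}/\sum_k\gamma_{k+1}$ and $(\sigma^2 L+C_\gamma)\sum_k\gamma_{k+1}^2/\sum_k\gamma_{k+1}$ must be combined carefully to confirm that the leading dependence on $1-\max\{\rho,\lambda\}$ and the $\log n/\sqrt{n}$ decay collapse to the stated rate, and I would double-check the precise power of $(1-\max\{\rho,\lambda\})^{-1}$ that survives in the optimization term.
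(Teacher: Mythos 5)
Your proposal is correct and follows essentially the same route as the paper: verify A\ref{ass:SA2}--A\ref{ass:L} and A\ref{ass:existence-poisson}--A\ref{ass:MC3} via Propositions~\ref{prop:bdd_grad}, \ref{prop:a4a5}, \ref{prop:a1a2} and then instantiate Theorem~\ref{thm:markov} with the prescribed step size. You in fact make explicit two points the paper leaves implicit --- the transfer from $\EE[\|h(\prm_N)\|^2]$ to $\EE[\|\grd J(\prm_N)\|^2]$ via $\|\grd J(\prm)\| \leq \|h(\prm)\| + (1-\lambda)\Gamma$, and the tracking of how $C_h$, $C_\gamma$, $C_{0,n}$ scale with $(1-\max\{\rho,\lambda\})^{-1}$ --- and your caution about the surviving power of $(1-\max\{\rho,\lambda\})^{-1}$ is well placed, since the paper does not carry out that bookkeeping in detail either.
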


\noindent Our result highlights the bias-variance tradeoff through the parameter $\lambda \in (0,1)$. In fact, $\lambda \uparrow 1$ reduces the bias but increases the number of iterations required to reach a quasi stationary point. 


\paragraph{Related Studies} The convergence of policy gradient method is typically studied for the \emph{episodic} setting where the goal is to maximize the total reward over a \emph{finite horizon}. The {\sf REINFORCE} algorithm \citep{williams1992simple} has been analyzed as an SG method with \emph{unbiased} gradient estimate in \citep{sutton2000policy}, which proved an asymptotic convergence condition. A recent work \citep{pmlr-v80-papini18a} combined the variance reduction technique with the {\sf REINFORCE} algorithm.

The \emph{infinite horizon} setting is more challenging. To our best knowledge, the first asymptotically convergent policy gradient method is the actor-critic algorithm by \citet{konda2003onactor} which is extended to off-policy learning in \citep{degris2012off}. The analysis are based on the theory of two time-scales SA, which relies on controlling the ratio between the two set of step sizes used \citep{borkar1997stochastic}. On the other hand, the algorithm which we have studied was a direct policy gradient method proposed by \citet{baxter2001infinite}, whose asymptotic convergence was proven only recently by \citet{tadic2017asymptotic}. 
In comparison, our Corollary~\ref{cor:pg} provides the first non-asymptotic convergence for the policy gradient method. Of related interest, it is worthwhile to mention that \citep{fazel2018global,abbasi2018regret} have studied the global convergence for average reward maximization under the linear quadratic regulator setting where the state transition can be characterized by a linear dynamics and the reward is a quadratic function.\vspace{-.2cm}



\section{Conclusion}

In this paper, we analyze under mild assumptions a general SA scheme with either \emph{zero-mean} [cf.~Case 1] or \emph{state-dependent/controlled Markovian} [cf.~Case 2] noise. 
We establish a novel \emph{non-asymptotic} convergence analysis of this procedure without assuming convexity of the Lyapunov function.
In both cases, our results highlight a convergence rate of order ${\cal O}(\log(n)/\sqrt{n})$ under conservative assumptions.
We verify our findings on two applications of growing interest: the online EM for learning  an exponential family distribution (e.g., Gaussian Mixture Model) and the policy gradient method for maximizing an average reward.



\newpage
\section*{Acknowledgement}
HTW's work is supported by the CUHK Direct Grant \#4055113. The authors would like to thank the anonymous reviewers for valuable feedback. 

\bibliography{references}

\newpage
\appendix

\section{Analysis of the SA Schemes} \label{app:sa}

\subsection{Proof of Lemma~\ref{prop:conv}}
\begin{Lemma*}
Assume A\ref{ass:SA2}, A\ref{ass:L}. Then, for all $n \geq 1$, it holds that:
\beq
\begin{split}
& \textstyle \sum_{k=0}^n \frac{\gamma_{k+1}}{c_1} \big( 1 - c_1 L \gamma_{k+1} \big) h_k
\\[.1cm] & \textstyle \leq V( \prm_0 ) - V(\prm_{n+1} ) + L  \sum_{k=0}^{n}
 \gamma_{k+1}^2 \| \bm{e}_{k+1} \|^2 + \sum_{k=0}^n \gamma_{k+1} \big( c_1^{-1} c_0 - \pscal{ \grd V( \prm_k ) }{ \bm{e}_{k+1} } \big) \eqsp.
\end{split}
\eeq
\end{Lemma*}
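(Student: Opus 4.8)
The statement is a purely deterministic (pathwise) inequality, so no expectations or martingale structure are needed here; the entire argument rests on combining the descent inequality for an $L$-smooth function with assumption A\ref{ass:SA2}. First I would invoke A\ref{ass:L}: since $V$ is $L$-smooth, it satisfies the standard quadratic upper bound, so applying it to the consecutive iterates $\prm_k$ and $\prm_{k+1}$ gives
\[
V(\prm_{k+1}) \leq V(\prm_k) + \pscal{\grd V(\prm_k)}{\prm_{k+1}-\prm_k} + \frac{L}{2}\|\prm_{k+1}-\prm_k\|^2 \eqsp.
\]
Substituting the recursion \eqref{eq:sa}, namely $\prm_{k+1}-\prm_k = -\gamma_{k+1}\HX{\prm_k}{\State_{k+1}} = -\gamma_{k+1}\big(h(\prm_k)+\bm{e}_{k+1}\big)$, turns the right-hand side into three pieces: a linear term $-\gamma_{k+1}\pscal{\grd V(\prm_k)}{h(\prm_k)}$, a linear noise term $-\gamma_{k+1}\pscal{\grd V(\prm_k)}{\bm{e}_{k+1}}$, and the quadratic term $\tfrac{L}{2}\gamma_{k+1}^2\|h(\prm_k)+\bm{e}_{k+1}\|^2$.

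The next step is to control the two $h(\prm_k)$-dependent pieces. For the linear inner product I would rewrite A\ref{ass:SA2} as $\pscal{\grd V(\prm_k)}{h(\prm_k)} \geq c_1^{-1}\|h(\prm_k)\|^2 - c_1^{-1}c_0$, which converts $-\gamma_{k+1}\pscal{\grd V(\prm_k)}{h(\prm_k)}$ into an upper bound $-\tfrac{\gamma_{k+1}}{c_1}h_k + \tfrac{\gamma_{k+1}}{c_1}c_0$ (recalling $h_k = \|h(\prm_k)\|^2$). For the quadratic term the key elementary estimate is $\|a+b\|^2 \leq 2\|a\|^2 + 2\|b\|^2$, giving $\tfrac{L}{2}\gamma_{k+1}^2\|h(\prm_k)+\bm{e}_{k+1}\|^2 \leq L\gamma_{k+1}^2 h_k + L\gamma_{k+1}^2\|\bm{e}_{k+1}\|^2$; this particular splitting is exactly what produces the factor-of-two that matches the $c_1 L\gamma_{k+1}$ coefficient appearing in the claimed bound, so it is the one bookkeeping choice that must be made deliberately rather than routinely.

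Collecting these estimates into the single-step inequality and moving the two $h_k$-contributions to the left yields
\[
\frac{\gamma_{k+1}}{c_1}\big(1-c_1 L\gamma_{k+1}\big)h_k \leq V(\prm_k)-V(\prm_{k+1}) + L\gamma_{k+1}^2\|\bm{e}_{k+1}\|^2 + \gamma_{k+1}\Big(c_1^{-1}c_0 - \pscal{\grd V(\prm_k)}{\bm{e}_{k+1}}\Big) \eqsp.
\]
Finally I would sum this over $k=0,\dots,n$; the $V$-differences telescope to $V(\prm_0)-V(\prm_{n+1})$, and the remaining sums reproduce exactly the right-hand side of the asserted inequality. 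There is no genuine obstacle: the only care required is the constant bookkeeping in the quadratic split noted above, and keeping the noise inner-product term intact (it is deliberately left unexpanded here, since it is handled differently in the two downstream cases—zeroed out in expectation under A\ref{ass:iid1} for Theorem~\ref{thm:iid}, and bounded via the Poisson-equation decomposition of Lemma~\ref{lem:prod} for Theorem~\ref{thm:markov}).
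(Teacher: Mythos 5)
Your proposal is correct and follows essentially the same route as the paper's proof: the $L$-smoothness descent inequality, the substitution $\HX{\prm_k}{\State_{k+1}} = h(\prm_k)+\bm{e}_{k+1}$ with the split $\|a+b\|^2 \le 2\|a\|^2+2\|b\|^2$, the lower bound $\pscal{\grd V(\prm_k)}{h(\prm_k)} \ge c_1^{-1}(h_k - c_0)$ from A\ref{ass:SA2}, and the telescoping sum. No gaps.
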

\begin{proof}
As the Lyapunov function $V(\prm)$ is $L$ smooth [cf.~A\ref{ass:L}], we obtain:
\beq
\begin{split}
V( \prm_{k+1} ) & \leq V( \prm_k ) - \gamma_{k+1} \pscal{ \grd V( \prm_k )}{  \HX{\prm_k}{\State_{k+1}} } + \frac{L \gamma_{k+1}^2}{2} \|  \HX{\prm_k}{\State_{k+1}} \|^2 \\
& \leq V( \prm_k ) - \gamma_{k+1} \pscal{ \grd V( \prm_k ) }{ h( \prm_k) + \bm{e}_{k+1} } + L \gamma_{k+1}^2 \big( \| h(\prm_{k}) \|^2 + \| \bm{e}_{k+1} \|^2 \big)\eqsp.
\end{split}
\eeq
The above implies that
\beq \label{eq:also_imply}
\begin{split}
\gamma_{k+1} \pscal{ \grd V( \prm_k ) }{ h( \prm_k ) } &
\leq V( \prm_k ) - V( \prm_{k+1} ) - \gamma_{k+1} \pscal{ \grd V( \prm_k ) }{ \bm{e}_{k+1} }\\
& + L \gamma_{k+1}^2 \big( \| h(\prm_{k}) \|^2 + \| \bm{e}_{k+1} \|^2 \big)\eqsp.
\end{split}
\eeq
Using A\ref{ass:SA2}, $\pscal{ \grd V( \prm_k ) }{ h( \prm_k ) } \geq \frac{1}{c_1} ( h_k - c_0)$ and rearranging terms, we obtain
\beq \label{eq:also_imply}
\begin{split}
\frac{ \gamma_{k+1} }{c_1} \big( 1 - c_1 L \gamma_{k+1} \big) h_k &
\leq V( \prm_k ) - V( \prm_{k+1} ) - \gamma_{k+1} \pscal{ \grd V( \prm_k )}{ \bm{e}_{k+1} }\\
& + L \gamma_{k+1}^2 \| \bm{e}_{k+1} \|^2 + \frac{c_0}{c_1} \gamma_{k+1}\eqsp.
\end{split}
\eeq
Summing up both sides from $k=0$ to $k=n$ gives the conclusion \eqref{eq:fin1}.
\end{proof}

\subsection{Proof of Lemma~\ref{lem:prod}}

\begin{Lemma*}
Assume A\ref{ass:SA2}--A\ref{ass:L},A\ref{ass:existence-poisson}--A\ref{ass:MC3}  and the step sizes satisfy \eqref{eq:stepsize_markov}. Then:
\beq
\EE\left[ - \sum_{k=0}^n \gamma_{k+1} \pscal{ \grd V( \prm_k ) }{ \bm{e}_{k+1} } \right] \leq
C_h \sum_{k=0}^n \gamma_{k+1}^2  \EE[\| h( \prm_k ) \|^2] + C_{\gamma} \sum_{k=0}^n \gamma_{k+1}^2 + C_{0,n}\eqsp,
\eeq
where $C_h$, $C_{\gamma}$ and $C_{0,n}$ are defined in \eqref{eq:definition-Ch}, \eqref{eq:defintion-Cgamma}, \eqref{eq:defintion-C0n}.
\end{Lemma*}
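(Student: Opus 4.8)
The difficulty specific to Case~2 is that, with state-dependent Markov noise, $\CPE{\bm{e}_{k+1}}{\mcf_k} \neq \bm{0}$ in general, so the inner-product sum on the left of \eqref{eq:mc_fin} cannot be annihilated by the martingale argument used in Theorem~\ref{thm:iid}. The plan is to use the Poisson equation of A\ref{ass:existence-poisson} to trade the non-vanishing conditional mean for a telescoping structure. Evaluating \eqref{eq:poisson-equation} at $\prm=\prm_k$, $\state=\State_{k+1}$ gives $\bm{e}_{k+1} = \hHX{\prm_k}{\State_{k+1}} - \PX{k}\hHX{\prm_k}{\State_{k+1}}$, and since $\CPE{\hHX{\prm_k}{\State_{k+1}}}{\mcf_k} = \PX{k}\hHX{\prm_k}{\State_k}$, writing $\mu_k \eqdef \PX{k}\hHX{\prm_k}{\State_k}$ I would split
\[
\bm{e}_{k+1} = \big( \hHX{\prm_k}{\State_{k+1}} - \mu_k \big) + \big( \mu_k - \mu_{k+1} \big) + \big( \mu_{k+1} - \PX{k}\hHX{\prm_k}{\State_{k+1}} \big) \eqsp.
\]
The first bracket $\mathcal{M}_{k+1}$ is an $\mcf_k$-martingale increment; the middle bracket is designed to telescope under summation by parts; and the last bracket $r_{k+1}$ satisfies $\| r_{k+1} \| \le L_{PH}^{(1)} \| \prm_{k+1} - \prm_k \| = L_{PH}^{(1)} \gamma_{k+1} \| \HX{\prm_k}{\State_{k+1}} \|$ by \eqref{eq:bound_a2}.

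Multiplying by $-\gamma_{k+1} \grd V(\prm_k)$ and summing over $k=0,\dots,n$ produces three contributions. Because $\grd V(\prm_k)$ is $\mcf_k$-measurable, the martingale sum $-\sum_k \gamma_{k+1} \pscal{\grd V(\prm_k)}{\mathcal{M}_{k+1}}$ has zero expectation. For the telescoping sum $-\sum_k \gamma_{k+1} \pscal{\grd V(\prm_k)}{\mu_k - \mu_{k+1}}$ I would apply Abel summation, which yields two boundary terms at $k=0$ and $k=n$ plus an increment term $\sum_k \pscal{\gamma_{k+1}\grd V(\prm_k) - \gamma_k \grd V(\prm_{k-1})}{\mu_k}$. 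Every occurrence of $\mu_k$ is bounded by $\|\mu_k\| \le L_{PH}^{(0)}$ (A\ref{ass:properties-poisson}), while the factors $\|\grd V(\prm_\bullet)\|$ are handled by A\ref{ass:SA2b} ($\|\grd V\| \le d_0 + d_1\|h\|$). In the increment I would write $\gamma_{k+1}\grd V(\prm_k) - \gamma_k\grd V(\prm_{k-1}) = \gamma_{k+1}(\grd V(\prm_k) - \grd V(\prm_{k-1})) + (\gamma_{k+1}-\gamma_k)\grd V(\prm_{k-1})$, bounding the first piece by A\ref{ass:L} together with $\|\prm_k - \prm_{k-1}\| = \gamma_k \|\HX{\prm_{k-1}}{\State_k}\|$, and the second using $\gamma_k - \gamma_{k+1} \le a'\gamma_k^2$ and the telescoping identity $\sum_k (\gamma_k - \gamma_{k+1}) = \gamma_1 - \gamma_{n+1}$. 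Finally the remainder sum is dominated by $\sum_k \gamma_{k+1}\|\grd V(\prm_k)\| \, \|r_{k+1}\|$, into which I substitute the bound on $\|r_{k+1}\|$, A\ref{ass:SA2b}, and the uniform drift bound A\ref{ass:MC3} in the form $\|\HX{\prm_k}{\State_{k+1}}\| \le \|h(\prm_k)\| + \sigma$.

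At this point every surviving term has the form $\gamma^2 \|h\|^2$, $\gamma^2 \|h\|$, $\gamma^2$, or a boundary term carrying a single $\|h\|$ or constant. Each term linear in $\|h(\prm_k)\|$ is converted by Young's inequality $\|h\| \le \frac{1}{2}(1 + \|h\|^2)$, its quadratic part feeding $C_h \sum_k \gamma_{k+1}^2 \|h(\prm_k)\|^2$ and its constant part feeding $C_\gamma \sum_k \gamma_{k+1}^2$; the comparisons $\gamma_{k+1} \le \gamma_k \le a\gamma_{k+1}$ from \eqref{eq:stepsize_markov} let me re-express all $\gamma_k$-weights as $\gamma_{k+1}$-weights. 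The two boundary terms, after the same Young splitting of their $\|h\|$ factors, contribute their single quadratic terms to $C_h$ (each bounded by the full sum $\sum_k \gamma_{k+1}^2 \|h(\prm_k)\|^2$) and their constants --- together with the $d_0$ boundary pieces and the telescoped $\gamma_1 - \gamma_{n+1}$ --- to $C_{0,n}$. Matching these against \eqref{eq:definition-Ch}--\eqref{eq:defintion-C0n} completes the argument.

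The single genuinely delicate step is the summation by parts, and the main effort is bookkeeping rather than conceptual: the decomposition and the Abel step together generate a sizable collection of boundary and cross terms, and matching their coefficients \emph{exactly} to the stated forms of $C_h$, $C_\gamma$ and $C_{0,n}$ requires tracking precisely which of the constants $L$, $L_{PH}^{(0)}$, $L_{PH}^{(1)}$, $\sigma$, $a$, $a'$, $d_0$, $d_1$ multiplies each $\gamma_{k+1}^2$-, $(\gamma_k-\gamma_{k+1})$- or boundary-weighted piece. The one inequality that must be applied with care is Young's, since the balance between its quadratic and constant parts is what allows the quadratic remainders to be absorbed into the \emph{single} term $C_h \sum_k \gamma_{k+1}^2\|h(\prm_k)\|^2$ without ever assuming that $\{\prm_n\}_{n\geq 0}$ is bounded.
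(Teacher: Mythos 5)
Your proposal takes essentially the same route as the paper's proof: the paper also invokes the Poisson equation A\ref{ass:existence-poisson} to rewrite $\bm{e}_{k+1}=\hHX{\prm_k}{\State_{k+1}}-\PX{k}\hHX{\prm_k}{\State_{k+1}}$ and then splits the sum into a martingale piece, a kernel/parameter-difference piece bounded via \eqref{eq:bound_a2}, a gradient-increment piece bounded via A\ref{ass:L}, a step-size-increment piece handled with $\gamma_k-\gamma_{k+1}\leq a'\gamma_k^2$, and two boundary terms bounded by $L_{PH}^{(0)}$ and A\ref{ass:SA2b} --- exactly the five terms your Abel summation produces, the only difference being that you peel the kernel-difference remainder off before the summation by parts while the paper folds it into the index-shifted sum. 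The argument is correct; the slight reorganization only shifts indices in the cross terms (e.g.\ $\gamma_{k+1}^2\|h(\prm_k)\|^2$ in your remainder versus $\gamma_{k+1}\gamma_k\|h(\prm_k)\|\|h(\prm_{k-1})\|$ in the paper's $A_2$), which changes the bookkeeping of the constants but not the form of the bound.
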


\begin{proof}
Under A\ref{ass:existence-poisson}, A\ref{ass:MC3},  for any $\prm \in \Prm$ there exists a bounded, measurable function $ \state  \to \hHX{\prm}{\state}$ such that the Poisson equation holds:
\beq
{\bm e}_{n+1} = \HX{\prm_n}{\State_{n+1}} - h( \prm_n ) = \hHX{\prm_n}{\State_{n+1}} - \PX{n} \hHX{\prm_n}{\State_{n+1}} \eqsp.
\eeq
The inner product on the left hand side of \eqref{eq:mc_fin} can thus be decomposed as
\begin{align}
\EE\left[ -\sum_{k=0}^n \gamma_{k+1} \pscal{ \grd V( \prm_k ) }{ {\bm e}_{k+1}} \right] = \EE[ A_1 + A_2 + A_3 + A_4 + A_5 ] \eqsp,
\end{align}
with
\begin{align*}
A_1 &\eqdef -\sum_{k=1}^n \gamma_{k+1} \pscal{ \grd V( \prm_k ) }{ \hHX{\prm_k}{\State_{k+1}} - \PX{k} \hHX{\prm_k}{\State_k} } \eqsp, \\
A_2 &\eqdef -\sum_{k=1}^n \gamma_{k+1} \pscal{ \grd V( \prm_k ) }{ \PX{k} \hHX{\prm_k}{\State_k} - \PX{k-1} \hHX{\prm_{k-1}}{\State_k} } \eqsp, \\
A_3 &\eqdef -\sum_{k=1}^n \gamma_{k+1} \pscal{ \grd V( \prm_k ) - \grd V(\prm_{k-1} ) }{ \PX{k-1} \hHX{\prm_{k-1}}{\State_k} } \eqsp, \\
A_4 &\eqdef -\sum_{k=1}^n \big( \gamma_{k+1} - \gamma_k \big) \pscal{ \grd V( \prm_{k-1} )}{\PX{k-1} \hHX{\prm_{k-1}}{\State_k} } \eqsp, \\
A_5 &\eqdef - \gamma_1 \pscal{\grd V(\prm_0)}{\hHX{\prm_0}{\State_1}} + \gamma_{n+1} \pscal{ \grd V( \prm_n) }{ \PX{n} \hHX{\prm_n}{\State_{n+1}} }\eqsp.
\end{align*}

For $A_1$, we note that $\hHX{\prm_k}{\State_{k+1}} - \PX{k} \hHX{\prm_k}{\State_k}$ is a martingale
difference sequence [cf.~\eqref{eq:markov}] and therefore we have $\EE [ A_1 ] = 0$
by taking the total expectation.

For $A_2$, applying the Cauchy-Schwarz inequality and \eqref{eq:bound_a2}, we have
\beq
\begin{split}
A_2 & \leq L_{PH}^{(1)} \sum_{k=1}^n  \gamma_{k+1} \| \grd V( \prm_k ) \| \| \prm_k - \prm_{k-1} \| \\
& = L_{PH}^{(1)} \sum_{k=1}^n  \gamma_{k+1} \gamma_k \| \grd V( \prm_k ) \| \| \HX{\prm_{k-1}}{\State_k} \| \\
& \overset{(a)}{\leq} L_{PH}^{(1)} \sum_{k=1}^n \gamma_{k+1} \gamma_k \big( d_0 + d_1 \| h(\prm_k) \| \big) \big( \| h( \prm_{k-1} ) \| + \sigma  \big) \\
& \overset{(b)}{\leq}  L_{PH}^{(1)} \sum_{k=1}^n \gamma_{k+1}\gamma_k
\Big( d_0 \sigma + d_0 \| h(\prm_{k-1} ) \| + d_1 \sigma \| h( \prm_k ) \| + d_1 \| h(\prm_k) \| \| h( \prm_{k-1}) \|  \Big) \eqsp,
\end{split}
\eeq
where (a) is due to A\ref{ass:SA2b} on the norm of $\grd V(\prm_k)$ and A\ref{ass:MC3}
on the norm of ${\bm e}_k$, (b) is obtained by expanding the scalar product.
Using the inequality $\| h( \prm_n ) \| \leq 1 + \| h( \prm_n ) \|^2$
and $2 \| h(\prm_k) \| \| h( \prm_{k-1}) \| \leq \| h( \prm_k) \|^2 + \| h( \prm_{k-1} \|^2$, we obtain:
\beq
A_2  \leq
L_{PH}^{(1)} \left( (d_0 + d_0\sigma + d_1\sigma) \sum_{k=1}^n \gamma_k^2
+ \big( d_0 + \frac{d_1}{2} + a d_1 \sigma + \frac{a d_1}{2} \big)
\sum_{k=0}^n \gamma_{k+1}^2 \| h( \prm_{k} ) \|^2 \right).
\eeq
For $A_3$, we obtain
\beq
\begin{split}
A_3 & \overset{(a)}{\leq} L \sum_{k=1}^n \gamma_{k+1} \gamma_k \| \HX{\prm_{k-1}}{\State_k} \| \|\PX{k-1} \hHX{\prm_{k-1}}{\State_k} \| \\
& \overset{(b)}{\leq} L L_{PH}^{(0)} \sum_{k=1}^n \gamma_{k+1} \gamma_k \big( \| h(\prm_{k-1}) \| + \sigma \big) \\
& \leq L L_{PH}^{(0)} \left( (1+\sigma) \sum_{k=1}^n \gamma_k^2 + \sum_{k=1}^n \gamma_k^2 \| h( \prm_{k-1} ) \|^2 \right) \eqsp,
\end{split}
\eeq
where (a) uses A\ref{ass:L}, (b) uses
$\HX{\prm_{k-1}}{\State_k} = h(\prm_{k-1}) + {\bm e}_k$ and A\ref{ass:properties-poisson}.

For $A_4$, we have
\beq
\begin{split}
A_4 & \leq \sum_{k=1}^n | \gamma_{k+1} - \gamma_k | \big( d_0 + d_1 \| h( \prm_{k-1} ) \| \big) \| \PX{k-1} \hHX{\prm_{k-1}}{\State_k} \| \\
& \overset{(a)}{\leq} L_{PH}^{(0)} \left( (d_0+1) \sum_{k=1}^n | \gamma_{k+1} - \gamma_k | + d_1 \sum_{k=1}^n | \gamma_{k+1} - \gamma_k | \| h( \prm_{k-1} ) \|^2 \right) \\
& \overset{(b)}{=} L_{PH}^{(0)} \left( (d_0+1) \big( \gamma_1 - \gamma_{n+1} \big)  + a' d_1 \sum_{k=1}^n \gamma_k^2 \| h( \prm_{k-1} ) \|^2 \right) \eqsp,
\end{split}
\eeq
where (a) is again an application of A\ref{ass:properties-poisson}, and (b) uses the assumptions on step size $\gamma_{k+1} \leq \gamma_k$, $\gamma_k - \gamma_{k+1} \leq a' \gamma_k^2$.
Finally, for $A_5$, we obtain
\beq
\begin{split}
A_5 & \overset{(a)}{\leq} \gamma_1 \big( d_0 + d_1 \| h( \prm_0 ) \| \big) L_{PH}^{(0)} + \gamma_{n+1} \big( d_0 + d_1 \| h( \prm_n) \| \big) L_{PH}^{(0)} \\
& \overset{(b)}{\leq}
L_{PH}^{(0)} \Big( d_0 \{ \gamma_1  + \gamma_{n+1} \} + 2 d_1 + d_1\{ \gamma^2_1 \| h(\eta_0) \|^2 + \gamma_{n+1}^2 \| h(\eta_n) \|^2 \} \Big) \\
& \leq L_{PH}^{(0)} \Big( d_0 \{ \gamma_1  + \gamma_{n+1} \} + 2 d_1 + d_1 \sum_{k=0}^n \gamma_{k+1}^2 \| h( \prm_k )\|^2 \Big) \eqsp,
\end{split}
\eeq
where (a) is an application of A\ref{ass:SA2b} and A\ref{ass:properties-poisson}, and (b) uses  $a \leq 1 + a^2$.
Gathering the relevant terms and taking expectations conclude the proof of this lemma.
\end{proof}

\subsection{Lower bound on the rate of SA scheme} \label{app:lowerbd}
We provide a lower bound on $\EE[ \| h( \prm_N ) \|^2]$ with the SA scheme \eqref{eq:sa} and \eqref{eq:prob_stop}:\vspace{-.1cm}
\begin{Lemma} \label{lem:lowerbd}
Consider the SA scheme \eqref{eq:sa} with $h(\prm) = \grd V(\prm)$. 
There exists a Lyapunov function $V(\prm)$ satisfying A\ref{ass:L} and a noise sequence $\{ {\bm e}_{n} \}_{n \geq 1}$ satisfying A\ref{ass:iid1}-A\ref{ass:MC3} such that for any $n \geq 1$, \vspace{-.2cm}
\beq \label{eq:lowerbd}
\EE[ \| h( \prm_N ) \|^2 ] \geq \frac{ \EE \big[ V( \prm_0 ) - V( \prm_{n+1} ) \big] + C_{\sf lb} \sum_{k=0}^n \gamma_{k+1}^2 }{ \sum_{k=0}^n \gamma_{k+1} }\vspace{-.1cm}
\eeq
where $N$ is distributed according to \eqref{eq:prob_stop}, 
and $C_{\sf lb} > 0$ is some constant independent of $n$.\vspace{-.1cm}
\end{Lemma}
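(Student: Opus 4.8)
The plan is to exhibit a single, deliberately simple instance of \eqref{eq:sa} for which the descent relation underlying Lemma~\ref{prop:conv} holds \emph{with equality}, so that the upper bound of Theorem~\ref{thm:iid} is matched up to the harmless factor $1-\tfrac{L\gamma_{k+1}}{2}$. Concretely, I would take the quadratic Lyapunov function $V(\prm)\eqdef\tfrac{L}{2}\|\prm\|^2$, so that $h(\prm)=\grd V(\prm)=L\prm$ and A\ref{ass:L} holds with equality, and I would let $\{\State_n\}$ be i.i.d.\ from a fixed distribution $\pi$ (hence a degenerate, state-independent Markov chain with $\PX{}(\state,\cdot)=\pi(\cdot)$ for every $\prm$), setting $\HX{\prm}{\State_{n+1}}=L\prm+\bm{e}_{n+1}$ where $\bm{e}_{n+1}$ is zero-mean, bounded by $\sigma$, and of \emph{exact} variance $\sigma_0^2>0$ (e.g.\ a scaled Rademacher vector in one coordinate). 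With this choice A\ref{ass:iid1} holds with $\sigma_1^2=0$; the Poisson solution of A\ref{ass:existence-poisson} is simply $\hHX{\prm}{\state}=\HX{\prm}{\state}-h(\prm)=\bm{e}$, for which $\PX{}\hHX{\prm}{\state}\equiv{\bm 0}$, so A\ref{ass:properties-poisson} holds with $L_{PH}^{(0)}=\sigma$, $L_{PH}^{(1)}=0$ (the bound \eqref{eq:bound_a2} being trivially $0\le L_{PH}^{(1)}\|\prm-\prm'\|$), and A\ref{ass:MC3} holds with the same $\sigma$.

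The engine of the argument is that, because $V$ is exactly quadratic, its second-order Taylor expansion along \eqref{eq:sa} is an identity rather than an inequality. I would therefore write
\beq \notag
V(\prm_{k+1}) = V(\prm_k) - \gamma_{k+1}\pscal{\grd V(\prm_k)}{\grd V(\prm_k)+\bm{e}_{k+1}} + \tfrac{L\gamma_{k+1}^2}{2}\|\grd V(\prm_k)+\bm{e}_{k+1}\|^2 \eqsp,
\eeq
and take $\CPE{\cdot}{\mcf_k}$: the martingale-difference property kills the cross term $\pscal{\grd V(\prm_k)}{\bm{e}_{k+1}}$, while $\CPE{\|\bm{e}_{k+1}\|^2}{\mcf_k}=\sigma_0^2$ exactly. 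Taking total expectation and rearranging yields the exact identity
\beq \notag
\gamma_{k+1}\big(1-\tfrac{L\gamma_{k+1}}{2}\big)\EE[h_k] = \EE[V(\prm_k)-V(\prm_{k+1})] + \tfrac{L\sigma_0^2}{2}\gamma_{k+1}^2 \eqsp,
\eeq
with $h_k\eqdef\|h(\prm_k)\|^2=\|\grd V(\prm_k)\|^2$.

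To conclude I would sum this identity over $k=0,\dots,n$, telescoping the Lyapunov differences into $\EE[V(\prm_0)-V(\prm_{n+1})]$. Since $1-\tfrac{L\gamma_{k+1}}{2}\le 1$ and $\EE[h_k]\ge 0$, the left-hand side is dominated by $\sum_{k=0}^n\gamma_{k+1}\EE[h_k]$, giving $\sum_{k=0}^n\gamma_{k+1}\EE[h_k]\ge\EE[V(\prm_0)-V(\prm_{n+1})]+\tfrac{L\sigma_0^2}{2}\sum_{k=0}^n\gamma_{k+1}^2$; dividing by $\sum_{k=0}^n\gamma_{k+1}$ and recalling $\EE[h_N]=\big(\sum_k\gamma_{k+1}\big)^{-1}\sum_k\gamma_{k+1}\EE[h_k]$ under \eqref{eq:prob_stop} delivers \eqref{eq:lowerbd} with $C_{\sf lb}=\tfrac{L\sigma_0^2}{2}>0$.

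There is no deep obstacle once the quadratic model is fixed; the real content is the observation that matching the Theorem~\ref{thm:iid} rate forces the smoothness step to be tight, which is exactly what a quadratic $V$ achieves. The only point requiring care is to keep the noise variance bounded \emph{away from zero} (so that $C_{\sf lb}>0$) while remaining bounded \emph{above} to honour A\ref{ass:MC3}, which the scaled Rademacher noise guarantees. The $\Omega(\log n/\sqrt n)$ reading quoted in the text then follows by inserting $\gamma_k=\Theta(1/\sqrt k)$, for which $\sum_k\gamma_{k+1}^2=\Theta(\log n)$ and $\sum_k\gamma_{k+1}=\Theta(\sqrt n)$, after noting that $\EE[V(\prm_0)-V(\prm_{n+1})]$ stays bounded because the stable linear recursion $\prm_{k+1}=(1-L\gamma_{k+1})\prm_k-\gamma_{k+1}\bm{e}_{k+1}$ keeps the iterates bounded.
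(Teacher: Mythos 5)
Your proposal is correct and follows essentially the same route as the paper: the paper also constructs a strongly convex example with $h=\grd V$ and i.i.d.\ mean-zero noise of variance bounded away from zero, and reverses the descent inequality via the second-order (strong-convexity) lower bound on $V(\prm_{k+1})$ before summing and invoking \eqref{eq:prob_stop}. The only cosmetic difference is that the paper works with a general $\mu$-strongly convex, $L$-smooth scalar $V$ and uniform noise on $[-\varepsilon,\varepsilon]$ (yielding $C_{\sf lb}=\mu\varepsilon^2/6$), whereas you specialize to the exactly quadratic $V(\prm)=\tfrac{L}{2}\|\prm\|^2$ so the expansion is an identity, and you verify A\ref{ass:existence-poisson}--A\ref{ass:MC3} more explicitly than the paper does.
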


For large $n$, setting $\gamma_k = c / \sqrt{k}$ minimizes the right hand side of \eqref{eq:lowerbd}, yielding $\EE[ \| h( \prm_N ) \|^2 ] = \Omega( \log(n) / \sqrt{n})$. The considered SA scheme satisfies assumptions A\ref{ass:SA2}-A\ref{ass:MC3}, and the lower bound \eqref{eq:lowerbd} matches the upper bounds in Theorem~\ref{thm:iid} \& \ref{thm:markov} (when $c_0=0$). The upper bounds are therefore tight.

We remark that our proof in Appendix~\ref{app:lowerbd} uses the construction with a strongly convex Lyapunov function. It does not violate the known $\EE[ \|h ( \frac{1}{n+1} \sum_{k=0}^n \prm_k)\|^2] = {\cal O}(1/n)$ rate in \citep{moulines2011non} as the latter uses  SA with a Polyak-Ruppert average estimator.
To our best knowledge, it remains an open problem to lower bound the convergence rate of SA for smooth but non-convex Lyapunov function. We mention here a recent work \citep[Remark 1]{fang2018spider} which shows $\EE[ \| h( \prm_n ) \|^2 ] = \Omega( 1/ \sqrt{n})$ under different conditions than those satisfied in this paper.\vspace{.2cm}


\begin{proof}
Our proof is achieved through constructing the Lyapunov and mean field function below. Consider a scalar parameter $\eta \in \rset$ and set $V(\eta)$ to be a $\mu$-strongly convex and $L$-smooth function, where $0 < \mu \leq L < \infty$. Also, the mean field is set as 
\beq
h(\eta) = V'(\eta) \eqs.
\eeq
Consider the following SA scheme \eqref{eq:sa} defined on the mean field $h$ as:
\beq \label{eq:sa_sp}
 \eta_{k+1}=\eta_k - \gamma_{k+1} \big(h(\eta_k) + e_{k+1} \big) \eqs,
\eeq
where $e_k$ is i.i.d.~and uniformly distributed on $[-\varepsilon,\varepsilon]$.

Clearly, the SA scheme \eqref{eq:sa_sp} satisfies A\ref{ass:SA2}-A\ref{ass:L} as we have set $V'(\eta) = h(\eta)$. The noise sequence is i.i.d.~satisfying A\ref{ass:iid1}-A\ref{ass:MC3}.
As $V$ is $\mu$-strongly convex, it can be shown
\beq
V(\eta_{k+1}) \geq V(\eta_k) - \gamma_{k+1} V'(\eta_k) \big( h(\eta_k) + e_{k+1} \big) + \gamma_{k+1}^2 \frac{\mu}{2} \big( h(\eta_k) + e_{k+1} \big)^2 \eqs.
\eeq
Now by construction, we have $\EE[ e_{k+1} V'(\eta_k) | {\cal F}_k ]=0$, $\EE[ \big( h(\eta_k) + e_{k+1} \big)^2 | {\cal F}_k ] \geq \frac{1}{3}\varepsilon^2$. Taking the total expectation on both sides gives
\beq
\EE [V(\eta_{k+1})] \geq \EE [V(\eta_k)] - \gamma_{k+1} h^2(\eta_k) + \gamma_{k+1}^2 \frac{\mu \varepsilon^2}{6} \eqs.
\eeq
Denote $C_{\sf lb} \eqdef \frac{\mu \varepsilon^2}{6}$. Using \eqref{eq:prob_stop}, we observe
\beq
\hspace{-.1cm}\EE[ | h(\eta_N)|^2 ] = \frac{1}{\sum_{k=0}^n \gamma_{k+1} } \sum_{k=0}^n \gamma_{k+1} \EE[ |h(\eta_k)|^2 ] \geq
\frac{\EE[ V(\eta_0) - V(\eta_{n+1}) ] +  C_{\sf lb} \sum_{k=0}^n \gamma_{k+1}^2}{\sum_{k=0}^n \gamma_{k+1}}.
\eeq
This completes the proof of the lower bound.
\end{proof}

\section{Analysis of the ro-EM method} \label{app:roem}

\subsection{Proof of Proposition~\ref{prop:hstar}}
\begin{Prop*}
Assume A\ref{ass:reg}. Then
\begin{itemize}
\item If $h( {\bm s}^\star ) = {\bm 0}$ for some $\bm{s}^\star \in \Sset$, then $\grd_{\param} \KL{\pi}{g_{\param^\star}}  + \grd_{ \param } \Pen( \param^\star )  = {\bm 0}$ with  $\param^\star = \mstep {\bm{s}^\star}$.
\item If $\grd_{\param} \KL{\pi}{g_{\param^\star}}  + \grd_{ \param } \Pen( \param^\star )  = {\bm 0}$ for some $\param^\star \in \Param$ then $\bm{s}^\star= \PE_\pi[S(Y,\param^\star)]$.
\end{itemize}
\end{Prop*}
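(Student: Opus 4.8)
The plan is to reduce both implications to a single gradient identity that rewrites the stationarity condition for the regularized KL divergence \eqref{eq:lya} in terms of the complete-data quantities appearing in the M-step characterization \eqref{eq:a9_cond}. The engine of the proof is Fisher's identity, which states that the gradient of the observed log-likelihood is the conditional expectation of the gradient of the complete log-likelihood, namely $\grd_\param \log g(y;\param) = \CPE[\param]{\grd_\param \log f(X;\param)}{Y=y}$. Once this identity is combined with the exponential-family form \eqref{eq:exp_fam}, both bullets become a direct comparison between two first-order conditions.

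First I would compute $\grd_\param \log f(\state;\param)$ from \eqref{eq:exp_fam}: since $\log f(\state;\param) = \log h(\state) + \pscal{S(\state)}{\phi(\param)} - \psi(\param)$, differentiating in $\param$ gives $\grd_\param \log f(\state;\param) = \jacob{\phi}{\param}{\param}^\top S(\state) - \grd\psi(\param)$. Because only $S(\state)$ depends on $\state$, applying Fisher's identity together with the definition \eqref{eq:estep} of $\overline{\bm s}$ yields $\grd_\param \log g(y;\param) = \jacob{\phi}{\param}{\param}^\top \overline{\bm s}(y;\param) - \grd\psi(\param)$. Taking $-\EE_\pi[\,\cdot\,]$ and noting that the $\log \pi(Y)$ term in \eqref{eq:lya} is independent of $\param$, I obtain the key identity
\beq \label{eq:key_grad}
\grd_\param \KL{\pi}{g(\cdot;\param)} + \grd_\param \Pen(\param) = \grd\psi(\param) + \grd\Pen(\param) - \jacob{\phi}{\param}{\param}^\top \EE_\pi[\overline{\bm s}(Y;\param)] \eqsp,
\eeq
whose right-hand side coincides with the left-hand side of \eqref{eq:a9_cond} evaluated at $\param$ and with $\bm s$ replaced by $\EE_\pi[\overline{\bm s}(Y;\param)]$.

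With \eqref{eq:key_grad} in hand, both claims follow by comparison with \eqref{eq:a9_cond}. For the first bullet, if $h(\bm s^\star)=\bm 0$ then by \eqref{eq:hs} we have $\bm s^\star = \EE_\pi[\overline{\bm s}(Y;\param^\star)]$ with $\param^\star = \mstep{\bm s^\star}$; substituting $\bm s^\star$ into \eqref{eq:a9_cond} makes the right-hand side of \eqref{eq:key_grad} vanish, so $\grd_\param \KL{\pi}{g_{\param^\star}} + \grd_\param \Pen(\param^\star) = \bm 0$. For the second bullet, given a stationary $\param^\star$, I would \emph{define} $\bm s^\star \eqdef \EE_\pi[\overline{\bm s}(Y;\param^\star)]$; then \eqref{eq:key_grad} shows $\param^\star$ satisfies the defining equation \eqref{eq:a9_cond} of $\mstep{\bm s^\star}$, so the uniqueness in A\ref{ass:reg} forces $\mstep{\bm s^\star} = \param^\star$, whence $h(\bm s^\star) = \bm s^\star - \EE_\pi[\overline{\bm s}(Y;\mstep{\bm s^\star})] = \bm 0$ and $\bm s^\star = \EE_\pi[\overline{\bm s}(Y;\param^\star)]$, which is the asserted identity.

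The main obstacle is the rigorous justification of Fisher's identity and of \eqref{eq:key_grad}, which requires interchanging differentiation with integration both over the latent variable (in forming $g$ from $f$) and over the observation law $\pi$, together with the mild domination/regularity needed for the gradients to exist; the algebra downstream is routine once \eqref{eq:key_grad} is established. A minor subtlety to handle cleanly is that $\overline{\bm s}(y;\param)$ itself depends on $\param$, but this dependence never needs to be differentiated, since Fisher's identity conveniently bypasses the derivative of the conditional expectation.
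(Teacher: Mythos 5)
Your proposal is correct and follows essentially the same route as the paper: Fisher's identity plus the exponential-family form gives the gradient of the regularized KL, which is then matched against the M-step first-order condition \eqref{eq:a9_cond}; the paper packages this as the single identity $\grd_{\param} \KL{\pi}{g(\cdot;\mstep{\bm s})} + \grd_{\param}\Pen(\mstep{\bm s}) = \jacob{\phi}{\param}{\mstep{\bm s}}\, h({\bm s})$ and invokes invertibility of the Jacobian, while you substitute \eqref{eq:a9_cond} bullet by bullet and use the uniqueness of the minimizer for the converse, which is only a cosmetic difference.
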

\begin{proof}
We have
\beq
\grd_{\param} \KL{\pi}{g(\cdot; \param)}  = - \grd_{ \param} \EE_{\pi} \big[ \log g( Y; \param ) \big] =
- \EE_{ \pi } \big[ \grd_{ \param } \log g( Y; \param )  \big] \eqsp,
\eeq
where the last equality assumes that we can exchange integration with differentiation. Furthermore,
using the Fisher's identity \citep{douc2014nonlinear}, it holds for any $y \in {\sf Y}$ that
\beq
\grd_{ \param } \log g( y; \param ) = - \grd_{\param} \psi( \param ) + J_{ \phi }^{\param } ( \param ) \!~ \overline{\bm s}( y; \param ) = - \grd_{\param} \psi( \param ) + J_{ \phi }^{\param } ( \param ) \!~ \EE_{\param} \big[ S ({\bm X} ) | Y = y \big] \eqsp.
\eeq
Therefore, for any ${\bm s}$, it holds that
\begin{align}
& \grd_{\param} \KL{\pi}{g(\cdot; \overline{\param} ( {\bm s} ) )} + \grd_{\param} \Pen( \overline{\param} ( {\bm s} ) )
= \grd_{\param} \psi( \overline{\param} ( {\bm s} ) ) + \grd_{\param} \Pen( \overline{\param} ( {\bm s} ) ) - J_{\phi}^{\param} ( \overline{\param} ( {\bm s} ) ) \!~ \EE_{\pi} \big[ \overline{\bm s} ( Y; \overline{\param} ( {\bm s} ) ) \big]  \notag\\
& \overset{(a)}{=} J_{\phi}^{\param} ( \overline{\param} ( {\bm s} ) ) \Big( {\bm s} - \EE_{\pi} \big[ \overline{\bm s} ( Y; \overline{\param} ( {\bm s} ) ) \big] \Big)
\overset{(b)}{=} J_{\phi}^{\param} ( \overline{\param} ( {\bm s} ) ) \!~ h( {\bm s} ) \eqsp. \label{eq:KLgrad}
\end{align}
where we have used the assumption A\ref{ass:reg} in (a) and the definition of $h( {\bm s})$ in (b).
The conclusion  follows directly from the identity \eqref{eq:KLgrad} since $J_{\phi}^{\param} ( \overline{\param} ( {\bm s} ) )$ is full rank.
\end{proof}

\subsection{Proof of Proposition~\ref{prop:asat}}

\begin{Prop*}
Assume A\ref{ass:reg}. Then, for $\bm{s} \in \Sset$,
\beq
\grd_{ {\bm s} } V( {\bm s} ) = \jacob{ \phi }{ \param }{ \mstep{\bm s}} \Big( \hess{\ell}{\param}( {\bm s}; \param )  \Big)^{-1} \jacob{ \phi }{ \param }{ \mstep{\bm s}}^\top \!~ h ( {\bm s}) \eqsp.
\eeq
\end{Prop*}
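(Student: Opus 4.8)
The plan is to combine the chain rule with the implicit function theorem applied to the M-step optimality condition \eqref{eq:a9_cond}. First I would write $V(\bm s) = W(\mstep{\bm s})$, where $W(\param) \eqdef \KL{\pi}{g(\cdot;\param)} + \Pen(\param)$, so that the chain rule gives $\grd_{\bm s} V(\bm s) = \jacob{\overline{\param}}{\bm s}{\bm s}^\top \, \grd_\param W(\mstep{\bm s})$, with $\jacob{\overline{\param}}{\bm s}{\bm s} \in \rset^{d \times m}$ denoting the Jacobian of the map $\bm s \mapsto \mstep{\bm s}$. The factor $\grd_\param W(\mstep{\bm s})$ has in fact already been identified in the proof of Proposition~\ref{prop:hstar}: identity \eqref{eq:KLgrad} yields $\grd_\param W(\mstep{\bm s}) = \jacob{\phi}{\param}{\mstep{\bm s}} \, h(\bm s)$. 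It then remains only to compute the Jacobian $\jacob{\overline{\param}}{\bm s}{\bm s}$.

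To this end I would differentiate the first-order characterization \eqref{eq:a9_cond} of $\mstep{\bm s}$. Setting $F(\bm s,\param) \eqdef \grd\psi(\param) + \grd\Pen(\param) - \jacob{\phi}{\param}{\param}^\top \bm s = \grd_\param \ell(\bm s;\param)$, assumption A\ref{ass:reg} guarantees that $F(\bm s,\mstep{\bm s}) = \bm 0$ for all $\bm s \in \Sset$ and that $\bm s \mapsto \mstep{\bm s}$ is differentiable. Differentiating this identity in $\bm s$ and applying the implicit function theorem gives $-\jacob{\phi}{\param}{\mstep{\bm s}}^\top + \hess{\ell}{\param}(\bm s;\mstep{\bm s}) \, \jacob{\overline{\param}}{\bm s}{\bm s} = \bm 0$, since the partial derivative of $F$ in $\param$ is $\hess{\ell}{\param}$ while its partial derivative in $\bm s$ is the constant matrix $-\jacob{\phi}{\param}{\param}^\top$, evaluated at $\param = \mstep{\bm s}$. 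Solving yields $\jacob{\overline{\param}}{\bm s}{\bm s} = \big(\hess{\ell}{\param}(\bm s;\mstep{\bm s})\big)^{-1} \jacob{\phi}{\param}{\mstep{\bm s}}^\top$.

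Finally I would substitute this back, using that $\hess{\ell}{\param}$ is symmetric (so that its inverse is symmetric and $\jacob{\overline{\param}}{\bm s}{\bm s}^\top = \jacob{\phi}{\param}{\mstep{\bm s}} \big(\hess{\ell}{\param}(\bm s;\mstep{\bm s})\big)^{-1}$), to arrive at $\grd_{\bm s} V(\bm s) = \jacob{\phi}{\param}{\mstep{\bm s}} \big(\hess{\ell}{\param}(\bm s;\mstep{\bm s})\big)^{-1} \jacob{\phi}{\param}{\mstep{\bm s}}^\top h(\bm s)$, which is the claimed identity. The main obstacle is the careful application of the implicit function theorem: one must invoke nonsingularity of $\hess{\ell}{\param}(\bm s;\mstep{\bm s})$ (legitimate because $\mstep{\bm s}$ is a strict interior minimizer of the twice-differentiable $\ell(\bm s;\cdot)$, and the inverse already appears in the statement) and keep the transpose bookkeeping between the two Jacobian conventions consistent. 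The interchange of differentiation and integration underlying \eqref{eq:KLgrad} is already justified in the preceding proof, so it need not be revisited here.
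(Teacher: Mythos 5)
Your proposal is correct and follows essentially the same route as the paper: chain rule through $\bm s \mapsto \mstep{\bm s}$ combined with the identity \eqref{eq:KLgrad}, then implicit differentiation of the stationarity condition \eqref{eq:a9_cond} to obtain $\jacob{\overline{\param}}{\bm s}{\bm s} = \big(\hess{\ell}{\param}(\bm s;\mstep{\bm s})\big)^{-1}\jacob{\phi}{\param}{\mstep{\bm s}}^\top$. Your transpose bookkeeping (and the explicit appeal to symmetry of $\hess{\ell}{\param}$) is in fact slightly more careful than the paper's own write-up, which elides a transpose in the intermediate step; no gap.
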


\begin{proof}
Using chain rule and A\ref{ass:reg}, we obtain
\beq \label{eq:grd_v}
\begin{split}
\grd_{ \bm s} V( {\bm s} ) & = \jacob{ \overline{\param} }{ \bm s }{\bm s}^\top
\Big( \grd_{\param} \KL{\pi}{g(\cdot; \mstep{\bm s} )} + \grd_{\param} \Pen( \mstep{\bm s} ) \Big) \\
& =  \jacob{ \overline{\param} }{ \bm s }{\bm s}^\top \jacob{\phi}{\param}{ \mstep{\bm s} }^\top \!~ h( {\bm s} ) \eqsp,
\end{split}
\eeq
where the last equality uses the identity in \eqref{eq:KLgrad}. Consider the following vector map:
\beq
{\bm s} \to \grd_{\param} \psi ( \mstep{\bm s} ) + \grd_{ \param} \Pen( \mstep{\bm s} ) - \jacob{ \phi }{ \param }{\mstep{\bm s} }^\top \!~{\bm s} \eqsp.
\eeq
Taking the gradient of the above map \wrt ${\bm s}$ and note that the map is constant for all ${\bm s} \in \Sset$, we show that:
\beq
{\bm 0} = - \jacob{\phi}{\param}{\mstep{\bm s}} + \Big( \underbrace{ \grd_{\param}^2 \big( \psi( \param ) + \Pen( \param ) - \pscal{ \phi( \param ) }{ {\bm s} } \big)}_{= \hess{\ell}{\param} ( {\bm s}; \param )} \big|_{\param = \mstep{\bm s} } \Big) \jacob{ \overline{\param} }{\bm s}{\bm s} \eqsp.
\eeq
This implies $\jacob{ \overline{\param} }{\bm s}{\bm s} = \big( \hess{\ell}{\param} ( {\bm s}; \mstep{\bm s} ) \big)^{-1} \jacob{\phi}{\param}{\mstep{\bm s}}$. Substituting into \eqref{eq:grd_v} yields the conclusion.
\end{proof}

\subsection{Proof of Proposition~\ref{prop:variance}}
\begin{Prop*}
Under A\ref{ass:bdd},
it holds that $\EE[ \| \overline{s} ( Y_{n+1} ; \hat{\param}_n ) - \hat{\bm s}_n \|^2 | {\cal F}_n ] \leq 2M \overline{Y}^2$ for all $n \geq 0$.
\end{Prop*}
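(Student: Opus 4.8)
The plan is to bound the conditional second moment of the drift term $H_{\hat{\bm s}_n}(Y_{n+1}) = \hat{\bm s}_n - \overline{s}(Y_{n+1};\hat{\param}_n)$ appearing in \eqref{eq:hs} by exploiting that, under A\ref{ass:bdd}, every coordinate entering both $\overline{s}(Y_{n+1};\hat{\param}_n)$ and $\hat{\bm s}_n$ lives in a compact set whose diameter is controlled by $\overline{Y}$. Since $\hat{\bm s}_n$ is ${\cal F}_n$-measurable, the conditional expectation only averages over the fresh sample $Y_{n+1}$, so it suffices to produce a \emph{deterministic} bound that is valid for every realization with $|Y_{n+1}| \leq \overline{Y}$; the conditional expectation then leaves that bound unchanged.

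First I would read off the explicit form of the E-step map from \eqref{eq:estep_gmm}. Its first block collects the posterior weights $\widetilde{\omega}_m(Y_{n+1};\hat{\param}_n)$, which by \eqref{eq:cexp} satisfy $\widetilde{\omega}_m \in [0,1]$ and $\sum_{m=1}^{M-1} \widetilde{\omega}_m \leq 1$; its second block is $Y_{n+1}\widetilde{\omega}_m$ and its last entry is $Y_{n+1}$. Under A\ref{ass:bdd} these give the blockwise estimates $\|\overline{\bm s}^{(1)}\|^2 \leq \sum_{m} \widetilde{\omega}_m \leq 1$, then $\|\overline{\bm s}^{(2)}\|^2 = Y_{n+1}^2 \sum_m \widetilde{\omega}_m^2 \leq \overline{Y}^2$, and $(\overline{s}^{(3)})^2 \leq \overline{Y}^2$, so in particular $\overline{s}(Y_{n+1};\hat{\param}_n) \in \Sset$.

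Next I would use the fact, already recorded in the main text, that the iterate $\hat{\bm s}_n$ remains in the compact set $\Sset = \Delta_{M-1} \times [-\overline{Y},\overline{Y}]^M$ for all $n \geq 1$. This follows by induction: the recursion \eqref{eq:oem} writes $\hat{\bm s}_{n+1} = (1-\gamma_{n+1})\hat{\bm s}_n + \gamma_{n+1}\,\overline{s}(Y_{n+1};\hat{\param}_n)$ as a convex combination of two points of the convex set $\Sset$, starting from $\hat{\bm s}_1 = ({\bm 0},{\bm 0},0) \in \Sset$. Consequently $\|\hat{\bm s}_n^{(1)}\|^2 \leq 1$, $\|\hat{\bm s}_n^{(2)}\|^2 \leq (M-1)\overline{Y}^2$, and $(\hat s_n^{(3)})^2 \leq \overline{Y}^2$.

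Finally I would combine the two estimates through the elementary inequality $\|a-b\|^2 \leq 2\|a\|^2 + 2\|b\|^2$, applied to the full vectors, and sum the per-block contributions to obtain a deterministic bound of order $M\overline{Y}^2$; taking $\EE[\cdot \mid {\cal F}_n]$ then yields the claim. The only genuinely substantive point is the invariance of $\Sset$ under the recursion, which is what guarantees that the $Y$-weighted coordinates of $\hat{\bm s}_n$ stay bounded by $\overline{Y}$ uniformly in $n$; everything else is a routine norm estimate, and the stated constant $2M\overline{Y}^2$ is a convenient, non-tight upper bound that absorbs the $\mathcal{O}(1)$ contributions coming from the simplex coordinates.
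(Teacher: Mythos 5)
Your argument is correct in substance but follows a genuinely different route from the paper's. The paper never bounds $\|\overline{s}(Y_{n+1};\hat{\param}_n)-\hat{\bm s}_n\|^2$ directly: it observes from \eqref{eq:hs} that in the centered noise ${\bm e}_{n+1}=H_{\hat{\bm s}_n}(Y_{n+1})-h(\hat{\bm s}_n)$ the iterate $\hat{\bm s}_n$ cancels, leaving ${\bm e}_{n+1}=\EE_\pi[\overline{\bm s}_n\mid{\cal F}_n]-\overline{\bm s}_n$, and then bounds the conditional \emph{variance} coordinate-by-coordinate via $\mathrm{Var}(X)\le\EE[X^2]$, arriving at $\EE[\|{\bm e}_{n+1}\|^2\mid{\cal F}_n]\le M-1+M\overline{Y}^2$. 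That route needs no control of $\hat{\bm s}_n$ at all, whereas your route bounds the uncentered second moment of the drift and therefore must additionally establish that $\hat{\bm s}_n$ stays in the compact set $\Sset$ — your convex-combination induction on \eqref{eq:oem} is the right way to do this (it implicitly needs $\gamma_{n+1}\le 1$, which the paper also takes for granted when it asserts $\hat{\bm s}_n\in\Sset$). Your version is actually closer to what the proposition literally states, and since $\EE[\|{\bm e}_{n+1}\|^2\mid{\cal F}_n]\le\EE[\|H_{\hat{\bm s}_n}(Y_{n+1})\|^2\mid{\cal F}_n]$ it still delivers the bound needed to verify A\ref{ass:iid1} downstream. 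One caveat: your final constant does not come out to $2M\overline{Y}^2$ — the inequality $\|a-b\|^2\le 2\|a\|^2+2\|b\|^2$ together with your blockwise estimates gives roughly $4+(2M+4)\overline{Y}^2$, and the $O(1)$ simplex contributions cannot be ``absorbed'' into $2M\overline{Y}^2$ when $\overline{Y}$ is small. This is a minor quantitative slippage rather than a gap in the argument, and the paper's own proof has the same issue (its bound $M-1+M\overline{Y}^2$ also exceeds $2M\overline{Y}^2$ unless $\overline{Y}^2\ge (M-1)/M$); all that matters for Corollary~\ref{cor:roem} is that $\sigma_0^2$ is finite and of order $M(1+\overline{Y}^2)$.
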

\begin{proof}
From \eqref{eq:hs}, we note that the error term is given by
\beq
{\bm e}_{n+1} = H_{ \hat{\bm s}_n } (Y_{n+1}) - h(\hat{\bm s}_n)
= \left(
\begin{array}{c}
\EE_{Y_{n+1} \sim \pi } [ \overline{\bm s}_n^{(1)} | {\cal F}_n ] - \overline{\bm s}_n^{(1)} \\
\EE_{Y_{n+1} \sim \pi }[ \overline{\bm s}_n^{(2)} | {\cal F}_n ] - \overline{\bm s}_n^{(2)} \\
\EE_{Y_{n+1} \sim \pi }[ \overline{s}_n^{(3)} | {\cal F}_n ] - \overline{s}_n^{(3)}
\end{array}
\right).
\eeq
Obviously, it holds that $\EE[ {\bm e}_{n+1} | {\cal F}_n ] = {\bm 0}$.
Furthermore, for all $m \in \{1,\dots,M-1\}$, the $m$th element of the first block in ${\bm e}_{n+1}$ has a bounded conditional variance
\beq
\EE \Big[ \big| \EE_{Y_{n+1}\sim \pi} [ \omega_m( Y_{n+1}; \hat{\param}_n ) ] - \omega_m( Y_{n+1}; \hat{\param}_n ) \big|^2 \Big] \leq 1 \eqsp.
\eeq
For the second block in ${\bm e}_{n+1}$, the conditional variance of its $m$th element is
\beq
\begin{split}
& \EE \Big[ \big| \EE_{Y_{n+1}\sim \pi} [ Y_{n+1} \omega_m( Y_{n+1}; \hat{\param}_n ) ] - Y_{n+1} \omega_m( Y_{n+1}; \hat{\param}_n ) \big|^2 \Big] \\
& = \EE \Big[ \big| Y_{n+1} \omega_m( Y_{n+1}; \hat{\param}_n ) \big|^2 \Big] - \big| \EE_{Y_{n+1}\sim \pi} [ Y_{n+1} \omega_m( Y_{n+1}; \hat{\param}_n ) ] \big|^2 \\
& \leq \EE \Big[ \big| Y_{n+1} \omega_m( Y_{n+1}; \hat{\param}_n ) \big|^2 \Big] \leq
\EE \big[ (Y_{n+1})^2 \big] \leq  \overline{Y}^2 .
\end{split}
\eeq
Lastly, we also have $\EE[ | \EE_{Y_{n+1} \sim \pi }[ \overline{s}_n^{(3)} | {\cal F}_n ] - \overline{s}_n^{(3)} |^2 ] \leq \overline{Y}^2$.
Therefore, we conclude that $\EE[ \| {\bm e}_{n+1} \|^2 | {\cal F}_n ] \leq M-1 + M \overline{Y}^2 < \infty$.
\end{proof}

\subsection{Proof of Proposition~\ref{prop:big}}
\begin{Prop*}
Under A\ref{ass:bdd} and the regularizer \eqref{eq:regu} set with $\epsilon > 0$, then for all $({\bm s},{\bm s}') \in \Sset^2$, there exists positive constants $\upsilon, \Upsilon, \Psi$ such that:
\beq
\pscal{ \grd V( {\bm s} ) }{ h( {\bm s} ) } \geq \upsilon \!~ \| h( {\bm s} ) \|^2,~~\| \grd V( {\bm s}) \| \leq \Upsilon \| h( {\bm s} ) \|,~~
\| \grd V( {\bm s} ) - \grd V({\bm s}') \| \leq \Psi \| {\bm s} - {\bm s}' \|.
\eeq
\end{Prop*}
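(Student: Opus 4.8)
The plan is to build everything on Proposition~\ref{prop:asat}, which writes the gradient of the Lyapunov function as a linear image of the mean field. Setting $\bm{M}({\bm s}) \eqdef \jacob{\phi}{\param}{\mstep{\bm s}} \big(\hess{\ell}{\param}({\bm s};\mstep{\bm s})\big)^{-1} \jacob{\phi}{\param}{\mstep{\bm s}}^\top$, that proposition reads $\grd V({\bm s}) = \bm{M}({\bm s})\, h({\bm s})$, so all three claims reduce to uniformly controlling the spectrum of $\bm{M}({\bm s})$ over ${\bm s}\in\Sset$. The decisive observation is that, because $\epsilon>0$, the closed-form M-step \eqref{eq:mstep_gmm} keeps each weight in $[\,\epsilon/(1+\epsilon M),\,(1+\epsilon)/(1+\epsilon M)\,]$ and each mean bounded (using A\ref{ass:bdd}); hence $\mstep{\cdot}$ maps the compact set $\Sset$ into a fixed compact subset $K$ of the interior of $\Param$, bounded away from the boundary where $\phi,\psi$ degenerate. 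First I would record that, by A\ref{ass:reg}, $\jacob{\phi}{\param}{\mstep{\bm s}}$ is invertible, and that with $\epsilon>0$ the map $\param\mapsto\ell({\bm s};\param)$ is uniformly strongly convex on $K$ (the $-\epsilon\log\omega_m$ and $\epsilon\mu_m^2/2$ terms of \eqref{eq:regu} contribute a uniformly positive-definite Hessian, as one reads off from the block-diagonal structure implied by \eqref{eq:gmm_exp}). Therefore I may factor $\bm{M}({\bm s}) = \bm{\mathcal{J}}({\bm s})\bm{\mathcal{J}}({\bm s})^\top$ with $\bm{\mathcal{J}}({\bm s}) \eqdef \jacob{\phi}{\param}{\mstep{\bm s}}\big(\hess{\ell}{\param}\big)^{-1/2}$ a square invertible matrix whose singular values are bounded above and below uniformly on $\Sset$ by continuity and compactness; set $\upsilon \eqdef \inf_{{\bm s}\in\Sset}\lambda_{\sf min}(\bm{M}({\bm s}))>0$ and $\Upsilon\eqdef\sup_{{\bm s}\in\Sset}\lambda_{\sf max}(\bm{M}({\bm s}))<\infty$.

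With these spectral bounds the first two inequalities are immediate: $\pscal{\grd V({\bm s})}{h({\bm s})} = h({\bm s})^\top \bm{M}({\bm s})\, h({\bm s}) = \|\bm{\mathcal{J}}({\bm s})^\top h({\bm s})\|^2 \geq \upsilon\,\|h({\bm s})\|^2$, and $\|\grd V({\bm s})\| = \|\bm{M}({\bm s})\, h({\bm s})\| \leq \Upsilon\,\|h({\bm s})\|$. For the Lipschitz bound I would use the decomposition $\grd V({\bm s})-\grd V({\bm s}') = \bm{M}({\bm s})\big(h({\bm s})-h({\bm s}')\big) + \big(\bm{M}({\bm s})-\bm{M}({\bm s}')\big)h({\bm s}')$ and estimate the two terms with: $\|\bm{M}({\bm s})\|\leq\Upsilon$; the Lipschitz continuity of $h$ on $\Sset$ (the identity part is $1$-Lipschitz, and ${\bm s}\mapsto \EE_\pi[\overline{\bm s}(Y;\mstep{\bm s})]$ is Lipschitz since $\overline{\bm s}(Y;\cdot)$ is smooth with $Y$-uniformly bounded derivatives on the compact range and $\mstep{\cdot}$ is differentiable by A\ref{ass:reg}); the uniform bound $\|h({\bm s}')\|\leq C$ on the compact $\Sset$; and the Lipschitz continuity of $\bm{M}(\cdot)$, which holds because each entry is a smooth function of ${\bm s}$ on $K$ (a composition of $\mstep{\cdot}$, $\jacob{\phi}{\param}{\cdot}$ and matrix inversion) with derivative bounded over the compact set. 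Collecting the constants yields $\Psi$.

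The hard part will be the two uniform spectral bounds, and in particular the strictly positive lower bound $\upsilon$, which relies entirely on the regularizer. Without $\epsilon>0$ the M-step could push a weight $\omega_m$ to the boundary $\{0\}$, where the corresponding block of $\jacob{\phi}{\param}{\cdot}$ and of $\hess{\ell}{\param}$ blows up, destroying both invertibility and uniformity; the role of $\epsilon>0$ is precisely to confine $\mstep{\Sset}$ to a fixed compact subset of the interior of $\Param$ and thereby restore uniform control. Carrying this out explicitly from \eqref{eq:gmm_exp}, \eqref{eq:regu}, \eqref{eq:mstep_gmm} — exhibiting the block-diagonal forms of $\jacob{\phi}{\param}{\cdot}$ and $\hess{\ell}{\param}$ and verifying that $\bm{\mathcal{J}}({\bm s})$ is bounded and full rank — is the technical core, after which the Lipschitz estimate for $\bm{M}(\cdot)$ is routine (e.g. by bounding the Hessian with respect to ${\bm s}$ of each of its entries on the compact set).
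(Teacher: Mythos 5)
Your proposal is correct and follows essentially the same route as the paper's proof: both start from Proposition~\ref{prop:asat}, factor $\jacob{\phi}{\param}{\mstep{\bm s}}\big(\hess{\ell}{\param}\big)^{-1}\jacob{\phi}{\param}{\mstep{\bm s}}^\top$ as $\bm{\mathcal{J}}({\bm s})\bm{\mathcal{J}}({\bm s})^\top$, obtain $\upsilon$ and $\Upsilon$ from uniform boundedness and full rank of $\bm{\mathcal{J}}({\bm s})$ on the compact $\Sset$ (with $\epsilon>0$ playing exactly the role you identify), and prove the Lipschitz bound via the same two-term decomposition combined with Lipschitz continuity and boundedness of both $\bm{\mathcal{J}}\bm{\mathcal{J}}^\top$ and $h$. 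The only difference is that the paper writes out the block forms of the Jacobian and Hessian explicitly before asserting the spectral bounds, whereas you argue them by continuity and compactness; both are at the same level of rigor.
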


\begin{proof}
We first check that A\ref{ass:reg} is satisfied under A\ref{ass:bdd}. In particular, one observes that when ${\bm s} \in \Sset= \Delta_{M-1} \times [-\overline{Y}, \overline{Y}]^M$, the {\sf M-step} update \eqref{eq:mstep_gmm} is the unique solution satisfying the stationary condition of the minimization problem \eqref{eq:oem} and   $\mstep{\bm s} \in \Cset$.

As A\ref{ass:reg} is satisfied, applying Proposition~\ref{prop:asat} shows that the gradient of the Lyapunov function is
\beq \label{eq:grad_s_pf}
\grd V( {\bm s} ) = \jacob{ \phi }{ \param }{ \mstep{\bm s}} \Big( \hess{\ell}{\param} ( {\bm s}; \param ) \big\}  \Big)^{-1} \jacob{ \phi }{ \param }{ \mstep{\bm s}}^\top \!~ h ( {\bm s}) \eqsp.
\eeq
Using \eqref{eq:gmm_exp}, we observe that
for any given $\param \in \Cset$, the Jacobian of $\phi$ and the Hessian of $\ell( {\bm s}, \param )$ are given by
\beq
\jacob{\phi}{\param}{\param} = \left(
\begin{array}{ccc}
\frac{1}{1 - \sum_{m=1}^{M-1} \omega_m} {\bf 1}{\bf 1}^\top + {\rm Diag}( \frac{\bf 1}{\bm{\omega}} ) & - {\rm Diag}( \bm{\mu} ) & \mu_M {\bf 1} \\
{\bm 0} & {\bm I} & {\bm 0} \\
{\bm 0} & {\bm 0} & 1
\end{array}
\right) \eqsp,
\eeq
\beq \notag
\hess{\ell}{\param} ( {\bm s}, \param ) =  \left(
\begin{array}{ccc}
\frac{1 + \epsilon - \sum_{m=1}^{M-1} s_m^{(1)} }{(1 - \sum_{m=1}^{M-1} \omega_m)^2} {\bf 1}{\bf 1}^\top + {\rm Diag}( \frac{ {\bm s}^{(1)} + \epsilon {\bf 1}}{\bm{\omega}^2} ) & {\bm 0} & {\bm 0} \\
{\bm 0} & {\rm Diag}( {\bm s}^{(1)} + \epsilon {\bf 1} ) & {\bm 0} \\
{\bm 0} & {\bm 0} & 1 + \epsilon - \sum_{m=1}^{M-1} s_m^{(1)}
\end{array}
\right) \eqsp,
\eeq
where we have denoted $\frac{ {\bm s}^{(1)} + \epsilon {\bf 1}}{\bm{\omega}^2} $ as the $(M-1)$-vector
$\big( \frac{ s_1^{(1)} + \epsilon }{ \omega_1^2 }, \ldots, \frac{ s_{M-1}^{(1)} + \epsilon }{ \omega_{M-1}^2 } \big)$.
Let us define ${\bm J}_{11}, {\bm H}_{11}$ as the top-left matrices in the above, evaluated at $\mstep{\bm s}$, as follows
\beq
{\bm J}_{11} \eqdef \frac{1}{1 - \frac{ {\bf 1}^\top ( {\bm s}^{(1)} + \epsilon {\bf 1} ) }{1+ \epsilon M} } {\bf 1}{\bf 1}^\top + {\rm Diag}( \frac{1 + \epsilon M}{{\bm s}^{(1)} + \epsilon {\bf 1}} )
\eeq
\beq
{\bm H}_{11} \eqdef
\frac{1 + \epsilon - \sum_{m=1}^{M-1} s_m^{(1)} }{(1 -  \frac{ {\bf 1}^\top ( {\bm s}^{(1)} + \epsilon {\bf 1} ) }{1+ \epsilon M} )^2} {\bf 1}{\bf 1}^\top + {\rm Diag}( \frac{ (1+\epsilon M)^2 }{ {\bm s}^{(1)} + \epsilon {\bf 1} } ).
\eeq
When $\epsilon > 0$, the above matrices, ${\bm J}_{11}$ and ${\bm H}_{11}$, are full rank and bounded if ${\bm s} \in \Sset$.

The matrix product $\jacob{\phi}{\param}{\mstep{\bm s} } \big( \hess{\ell}{\param} ( {\bm s}, \mstep{\bm s} ) \big)^{-1} \jacob{\phi}{\param}{\mstep{\bm s} }^\top$ can hence be expressed as an outer product
\beq
\jacob{\phi}{\param}{ \mstep{\bm s} } \big( \hess{\ell}{\param} ( {\bm s}, \mstep{\bm s} )  \big)^{-1} \jacob{\phi}{\param}{ \mstep{\bm s} }^\top = \bm{\mathcal{J}} ( {\bm s} ) \bm{\mathcal{J}} ( {\bm s} )^\top \eqsp,
\eeq
with
\beq \label{eq:Jdef}
\begin{split}
\bm{\mathcal{J}} ( {\bm s}  ) & \eqdef \jacob{\phi}{\param}{ \mstep{\bm s} } \left(
\begin{array}{ccc}
{\bm H}_{11}^{-\frac{1}{2}} & {\bm 0} & {\bm 0} \\
{\bm 0} & {\rm Diag}( \frac{ \bm 1}{ \sqrt{ {\bm s}^{(1)} + \epsilon {\bf 1} }} ) & {\bm 0} \\
{\bm 0} & {\bm 0} & \frac{1}{\sqrt{1 + \epsilon - \sum_{m=1}^{M-1} s_m^{(1)}}}
\end{array}
\right) \\
& = \left(
\begin{array}{ccc}
{\bm J}_{11} {\bm H}_{11}^{-\frac{1}{2}} &  -{\rm Diag} \big( \frac{ {\bm s}^{(2)} }{ ( {\bm s}^{(1)} + \epsilon {\bf 1} )^{\frac{3}{2}} }  \big)  & \frac{ s^{(3)} - {\bm 1}^\top {\bm s}^{(2)} }{ (1+\epsilon-\sum_{m=1}^{M-1} s_m^{(1)})^{\frac{3}{2}} } {\bf 1} \\
{\bm 0} & {\rm Diag}( \frac{ \bm 1}{ \sqrt{ {\bm s}^{(1)} + \epsilon {\bf 1} }} ) & {\bm 0} \\
{\bm 0} & {\bm 0} & \frac{1}{\sqrt{1 + \epsilon - \sum_{m=1}^{M-1} s_m^{(1)}}}
\end{array}
\right) \eqsp.
\end{split}
\eeq
Under A\ref{ass:bdd} and using the above structured form, it can be verified that $\bm{\mathcal{J}} ( {\bm s}  )$ is a bounded and full rank matrix. As such, for all ${\bm s} \in \Sset$, there exists $\upsilon > 0$ such that
\beq
\pscal{ \grd V( {\bm s}  ) }{ h( {\bm s} ) } = \pscal{ \bm{\mathcal{J}} ( {\bm s}  ) \bm{\mathcal{J}} ( {\bm s}  )^\top h ({\bm s}) }{ h( {\bm s} ) } \geq \upsilon \!~ \| h( {\bm s} ) \|^2 \eqsp.
\eeq
The second part in \eqref{eq:big_res} can be verified by observing that
$\jacob{ \phi }{ \param }{ \mstep{\bm s}} \Big( \hess{\ell}{\param} ( {\bm s}; \param ) \big\}  \Big)^{-1} \jacob{ \phi }{ \param }{ \mstep{\bm s}}^\top$ is bounded due to A\ref{ass:bdd}.

For the third part in \eqref{eq:big_res}, again from \eqref{eq:grad_s_pf} we obtain:
\beq
\grd V({\bm s}) = \bm{\mathcal{J}} ( {\bm s} ) \bm{\mathcal{J}} ( {\bm s} )^\top h( {\bm s} ) \eqsp.
\eeq
From \eqref{eq:Jdef}, it can be seen that $\bm{\mathcal{J}} ( {\bm s} ) \bm{\mathcal{J}} ( {\bm s} )^\top$ is Lipschitz continuous in ${\bm s}$ and bounded, \ie there exists constants $L_J, C_J < \infty$ such that
\beq
\| \bm{\mathcal{J}} ( {\bm s} ) \bm{\mathcal{J}} ( {\bm s} )^\top - \bm{\mathcal{J}} ( {\bm s}' ) \bm{\mathcal{J}} ( {\bm s}' )^\top \| \leq L_J \| {\bm s} - {\bm s}' \|,~~\| \bm{\mathcal{J}} ( {\bm s} ) \bm{\mathcal{J}} ( {\bm s} )^\top \| \leq C_J,~\forall~{\bm s}, {\bm s}' \in \Sset \eqsp.
\eeq
For example, the above can be checked by observing that the Hessian (\wrt ${\bm s}$) of each entry in $\bm{\mathcal{J}} ( {\bm s} ) \bm{\mathcal{J}} ( {\bm s} )^\top$ is bounded for ${\bm s} \in \Sset$. On the other hand, the mean field $h( {\bm s} )$ satisfies,
\beq \label{eq:h_lips}
\begin{split}
\| h( {\bm s} ) - h( {\bm s}' ) \| & = \| {\bm s} - {\bm s}' + \EE_{Y \sim \pi} \big[ \overline{\bm s} (Y; \mstep{ {\bm s}' }) - \overline{\bm s} (Y; \mstep{ {\bm s} })  \big] \| \\
& \overset{(a)}{\leq} \| {\bm s} - {\bm s}'  \| + \EE_{Y \sim \pi} \big[ \| \overline{\bm s} (Y; \mstep{ {\bm s}' }) - \overline{\bm s} (Y; \mstep{ {\bm s} }) \| \big] \eqsp,
\end{split}
\eeq
where (a) uses the triangular inequality and the Jensen's inequality. Moreover, we observe
\beq \label{eq:individual_omega}
\overline{\bm s} (Y; \mstep{ {\bm s}' }) - \overline{\bm s} (Y; \mstep{ {\bm s} })
= \left(
\begin{array}{c}
\widetilde{\bm{\omega}} ( Y; \mstep{{\bm s}'} ) - \widetilde{\bm{\omega}} ( Y; \mstep{\bm s} ) \\
Y \big( \widetilde{\bm{\omega}} ( Y; \mstep{{\bm s}'} ) - \widetilde{\bm{\omega}} ( Y; \mstep{\bm s} ) \big) \\
0
\end{array}
\right) \eqsp,
\eeq
where $\widetilde{\bm{\omega}}( Y; \mstep{\bm s} )$ is a collection of the $M-1$ terms $\widetilde{{\omega}}_m( Y; \mstep{\bm s} )$, $m=1,\dots,M-1$ [cf.~\eqref{eq:cexp}]. Observe that
\beq
\widetilde{{\omega}}_m( Y; \mstep{\bm s} ) = \frac{ \frac{ s_m^{(1)} + \epsilon}{1+\epsilon M}  \!~ {\rm exp}(-\frac{1}{2}( Y - \frac{ s_m^{(2)} }{ s_m^{(1)} + \epsilon } )^2) }{  \sum_{j=1}^{M} \frac{ s_j^{(1)} + \epsilon}{1+\epsilon M} \!~ \exp(-\frac{1}{2}( Y - \frac{ s_j^{(2)} }{ s_j^{(1)} + \epsilon } )^2) } \eqsp.
\eeq
Under A\ref{ass:bdd} and the condition that ${\bm s} \in \Sset$, \ie a compact set, there exists $L_{\omega}  <\infty$ such that
\beq
|\widetilde{{\omega}}_m( Y; \mstep{\bm s} ) - \widetilde{{\omega}}_m( Y; \mstep{ {\bm s}' } ) | ^2 \leq L_{\omega}^2 \| {\bm s} - {\bm s}' \|^2 \eqsp,
\eeq
for all $m=1,\dots,M-1$. Consequently, again using A\ref{ass:bdd}, we have
\beq
\| \overline{\bm s} (Y; \mstep{ {\bm s}' }) - \overline{\bm s} (Y; \mstep{ {\bm s} }) \|
\leq (M-1)( 1 +  \overline{Y} ) L_\omega \| {\bm s} - {\bm s}' \| \eqsp,
\eeq
and we have $\| h( {\bm s} ) - h( {\bm s}' ) \|  \leq L_h \| {\bm s} - {\bm s}' \|$
for some $L_h < \infty$. It can also be shown easily that $\| h( {\bm s} ) \| \leq C_h$
for all ${\bm s} \in \Sset$.
Finally, we observe the following chain:\vspace{-.2cm}
\beq
\begin{split}
& \| \grd V( {\bm s} )  - \grd V ({\bm s}') \| = \| \bm{\mathcal{J}} ( {\bm s} ) \bm{\mathcal{J}} ( {\bm s} )^\top h( {\bm s} ) - \bm{\mathcal{J}} ( {\bm s}' ) \bm{\mathcal{J}} ( {\bm s}' )^\top h( {\bm s}' ) \| \\
& =  \| \bm{\mathcal{J}} ( {\bm s} ) \bm{\mathcal{J}} ( {\bm s} )^\top ( h( {\bm s} ) - h( {\bm s}' ) ) + \big( \bm{\mathcal{J}} ( {\bm s} ) \bm{\mathcal{J}} ( {\bm s} )^\top  - \bm{\mathcal{J}} ( {\bm s}' ) \bm{\mathcal{J}} ( {\bm s}' )^\top \big) h( {\bm s}' ) \| \\
& \leq \big( L_h C_J + L_J C_h \big) \| {\bm s} - {\bm s}' \| ,
\end{split}
\eeq
which concludes our proof.
\end{proof}

\section{Analysis on the Policy Gradient Algorithm} \label{app:pg}

This section proves a few key lemmas that are modified from \citep{tadic2017asymptotic} which leads to the convergence of the policy gradient algorithm analyzed in Section~\ref{sec:policy_grad}.

Let $\tilde{\bm Q}_{\prm} \eqdef {\bm Q}_{\prm} - {\bf 1} {\bm{\upsilon}}_{\prm}^\top$ and denote $\tilde{Q}_{\prm}^t ( (s,a) ; (s',a') )$ to be the $( (s,a), (s',a') )$th element of the $t$th power of $\tilde{\bm Q}_{\prm}^t$. Under A\ref{ass:markov}, we observe that $\| \tilde{\bm Q}_{\prm}^t \| \leq \rho^t K_R$ for any $t \geq 0$.
For $i=1,...,d$, we also define the $(s,a)$th element of the $|{\cal S}| |{\cal A}|$-dimensional gradient vector $\grd_i \bm{\Pi}_{\prm}$, and reward vector ${\bm r}$, respectively as:
\beq
\grd_i \bm{\Pi}_{\prm} ( s, a ) \eqdef \frac{ \partial \log \Pi ( a ; s , \prm ) }{ \partial \eta_i },~~
r(s,a) \eqdef {\cal R}(s,a).
\eeq
Using the above notations, the mean field in \eqref{eq:SA_policy} can be evaluated as
\beq
h ( \prm ) = \sum_{t=0}^\infty \sum_{ (s,a) ,( s', a') \in {\cal S} \times {\cal A} } \lambda^t {\cal R}(s',a') \tilde{Q}_{\prm}^t ( (s,a) ; (s',a') ) \grd \log \Pi( a ; s, \prm ) \upsilon_{\prm} (  s,a ).
\eeq
In particular, its $i$th element can be expressed as
\beq
h_i ( \prm ) = \sum_{t=0}^\infty \lambda^t \bm{\upsilon}_{\prm}^\top {\rm Diag}( \grd_i \bm{\Pi}_{\prm} ) \tilde{\bm Q}_{\prm}^t  {\bm r} \eqsp.
\eeq
We also define the difference between $h(\prm)$ and $\grd J(\prm)$ as
\beq
\Delta( \prm ) \eqdef h(\prm) - \grd J(\prm).
\eeq
\subsection{Useful Lemmas}
\begin{Lemma} \label{lem:lemma81}
Let A\ref{ass:bdd_feat}, A\ref{ass:markov} hold. For any $(\prm, \prm') \in \Prm^2$ and $t \geq 0$, one has
\beq
\| {\bm Q}_{\prm}^t - {\bm Q}_{\prm'}^t \| \leq C_1 \| \prm - \prm' \|,~~\| \tilde{\bm Q}_{\prm}^t - \tilde{\bm Q}_{\prm'}^t \| \leq C_1 \big( t \rho^t \big) \| \prm - \prm' \| \eqsp,
\eeq
where we have set $C_1 \eqdef \rho K_R^2 \big( 2 \overline{b} +L_Q \big) + L_Q$ in the above.
\end{Lemma}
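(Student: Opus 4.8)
The plan is to reduce both inequalities to one telescoping estimate for the powers of the \emph{contracted} kernel $\tilde{\bm Q}_{\prm}$, which (unlike $\bm Q_{\prm}$ itself) contracts geometrically by A\ref{ass:markov}. The linchpin is the identity
\[
\tilde{\bm Q}_{\prm}^t = \bm Q_{\prm}^t - {\bf 1}\,\bm{\upsilon}_{\prm}^\top,\qquad t\ge 1,
\]
which I would prove by induction: the case $t=1$ is the definition of $\tilde{\bm Q}_{\prm}$, and the step uses row-stochasticity $\bm Q_{\prm}{\bf 1}={\bf 1}$, stationarity $\bm{\upsilon}_{\prm}^\top\bm Q_{\prm}=\bm{\upsilon}_{\prm}^\top$, and $\bm{\upsilon}_{\prm}^\top{\bf 1}=1$, so that all cross terms collapse. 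Subtracting the same identity at $\prm'$ gives
\[
\bm Q_{\prm}^t-\bm Q_{\prm'}^t=\big(\tilde{\bm Q}_{\prm}^t-\tilde{\bm Q}_{\prm'}^t\big)+{\bf 1}(\bm{\upsilon}_{\prm}-\bm{\upsilon}_{\prm'})^\top,
\]
so the first bound will follow from the second by controlling the rank-one remainder with the Lipschitz estimate $\|\bm{\upsilon}_{\prm}-\bm{\upsilon}_{\prm'}\|\le L_Q\|\prm-\prm'\|$ of A\ref{ass:markov}, and by noting that $t\mapsto t\rho^t$ is bounded, so the decaying factor in the second bound still yields a $t$-independent constant.

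Next I would establish the one-step Lipschitz bound $\|\bm Q_{\prm}-\bm Q_{\prm'}\|\le 2\overline{b}\,\|\prm-\prm'\|$. Since $\bm Q_{\prm}$ depends on $\prm$ only through the soft-max policy, entrywise $\bm Q_{\prm}((s,a);(s',a'))-\bm Q_{\prm'}((s,a);(s',a'))=(\Policy_{\prm}(a';s')-\Policy_{\prm'}(a';s'))\,\transMDP_{s,s'}^a$, and $\grd_{\prm}\Policy_{\prm}(a';s')=\Policy_{\prm}(a';s')\,\grd\log\Policy_{\prm}(a';s')$; integrating along the segment joining $\prm'$ to $\prm$, and using the score bound $\|\grd\log\Policy_{\prm}(a';s')\|\le 2\overline{b}$ from Proposition~\ref{prop:bdd_grad} together with $\sum_{a'}\Policy_{\prm}(a';s')=1$ and $\sum_{s'}\transMDP_{s,s'}^a=1$, controls the row sums and hence the operator norm. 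Combined with the $\bm{\upsilon}$-Lipschitz bound this yields $\|\tilde{\bm Q}_{\prm}-\tilde{\bm Q}_{\prm'}\|\le(2\overline{b}+L_Q)\|\prm-\prm'\|$, which is also the $t=1$ instance of the first claimed inequality.

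The core step is the telescoping identity
\[
\tilde{\bm Q}_{\prm}^t-\tilde{\bm Q}_{\prm'}^t=\sum_{k=0}^{t-1}\tilde{\bm Q}_{\prm}^{\,t-1-k}\big(\tilde{\bm Q}_{\prm}-\tilde{\bm Q}_{\prm'}\big)\tilde{\bm Q}_{\prm'}^{\,k}.
\]
Bounding each of the two outer factors by the uniform ergodicity estimate $\|\tilde{\bm Q}_{\prm}^{\,j}\|\le\rho^{\,j}K_R$ of A\ref{ass:markov}, every summand is at most $\rho^{\,t-1}K_R^2(2\overline{b}+L_Q)\|\prm-\prm'\|$; since there are $t$ summands, the sum is of order $t\rho^{t}$, which is exactly the decay claimed in the second inequality. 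Collecting $K_R^2$, the factor $2\overline{b}+L_Q$, and the geometric constant into $C_1$ gives the second bound, and feeding it back through the identity of the first paragraph (with the $L_Q$-remainder) gives the first.

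The main obstacle is obtaining the correct decay rather than any single estimate: telescoping $\bm Q_{\prm}^t-\bm Q_{\prm'}^t$ directly in the kernels stalls, because powers of a row-stochastic matrix need not contract in operator norm, so a $t$-term sum would only give a useless $O(t)$ bound. Routing everything through $\tilde{\bm Q}_{\prm}$, whose powers decay as $\rho^t K_R$, is precisely what converts the $t$-term telescoping sum into the controllable $t\rho^t$ factor and simultaneously isolates the non-decaying rank-one piece ${\bf 1}(\bm{\upsilon}_{\prm}-\bm{\upsilon}_{\prm'})^\top$ responsible for the uniform-in-$t$ first bound. The remaining work — matching the exact powers of $\rho$ and folding the rank-one correction into the single constant $C_1$ — is routine bookkeeping.
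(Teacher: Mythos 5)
Your proposal is correct and follows essentially the same route as the paper's proof: a one-step Lipschitz bound on $\bm Q_{\prm}$ via the score-function bound $\|\grd\log\Policy_{\prm}\|\le 2\overline b$, the telescoping decomposition of $\tilde{\bm Q}_{\prm}^t-\tilde{\bm Q}_{\prm'}^t$ with each outer factor controlled by the uniform ergodicity bound $\|\tilde{\bm Q}_{\prm}^j\|\le \rho^j K_R$, and the rank-one correction ${\bf 1}(\bm\upsilon_{\prm}-\bm\upsilon_{\prm'})^\top$ handled by the $L_Q$-Lipschitz assumption. The only (harmless) differences are that you make the identity $\tilde{\bm Q}_{\prm}^t=\bm Q_{\prm}^t-{\bf 1}\bm\upsilon_{\prm}^\top$ explicit by induction and are, if anything, more careful than the paper about absorbing $\sup_t t\rho^t$ into the uniform-in-$t$ constant.
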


\begin{proof}
For part 1), we observe that each entry of ${\bm Q}_{\prm}$ is given by [cf.~\eqref{eq:rprm}]:
\[
Q_{\prm} ( (s,a) ; (s',a') ) \eqdef \Pi( a' ; s', \prm ) {P}_{s,s'}^a  \eqsp,
\]
which is Lipschitz continuous \wrt $\prm$ since
\beq \notag
\begin{split}
& \grd \Pi( a | s, \prm ) = \\
& - \big( \sum_{a' \in {\cal A}} \exp \big( \pscal{\prm}{ {\bm x}(s,a') - {\bm x}(s,a) } \big) \big)^{-2} \sum_{a' \in {\cal A}} \exp \big( \pscal{\prm}{ {\bm x}(s,a') - {\bm x}(s,a) } \big) ( {\bm x}(s,a') - {\bm x}(s,a) )
\end{split}
\eeq
is bounded by $\max_{s,a,a'} \| {\bm x}(s,a') - {\bm x}(s,a) \| \leq 2 \overline{b}$ [cf.~A\ref{ass:bdd_feat}]. This implies
\beq
| Q_{\prm} ( (s,a) ; (s',a') ) - Q_{\prm'} ( (s,a) ; (s',a') ) | \leq 2 \overline{b} |{P}_{s,s'}^a| \!~ \| \prm - \prm' \| \eqsp.
\eeq
Since $|{P}_{s,s'}^a| \leq 1$ for any $s,s',a$, we have $\| {\bm Q}_{\prm} - {\bm Q}_{\prm'} \| \leq 2 \overline{b} \| \prm - \prm' \|$.

For any $\prm \in \Prm$
and any $t \geq 0$, we have:
\beq
\begin{split}
\tilde{\bm Q}_{\prm}^{t+1} - \tilde{\bm Q}_{\prm'}^{t+1} & = \sum_{\tau=0}^t \tilde{\bm Q}_{\prm}^\tau \big( \tilde{\bm Q}_{\prm} - \tilde{\bm Q}_{\prm'} \big) \tilde{\bm Q}_{\prm'}^{t-\tau} \\
& = \sum_{\tau=0}^t \tilde{\bm Q}_{\prm}^\tau \big( {\bm Q}_{\prm} - {\bm Q}_{\prm'} - {\bf 1} (\bm{\upsilon}_{\prm} - \bm{\upsilon}_{\prm'} )^\top \big) \tilde{\bm Q}_{\prm'}^{t-\tau} \eqsp.
\end{split}
\eeq
As such,
\beq
\begin{split}
\| \tilde{\bm Q}_{\prm}^{t+1} - \tilde{\bm Q}_{\prm'}^{t+1} \| & \leq \sum_{\tau=0}^t \| \tilde{\bm Q}_{\prm}^\tau \| \big\| {\bm Q}_{\prm} - {\bm Q}_{\prm'} - {\bf 1} (\bm{\upsilon}_{\prm} - \bm{\upsilon}_{\prm'} )^\top \big\| \| \tilde{\bm Q}_{\prm'}^{t-\tau} \| \\
& \leq K_R^2 \sum_{\tau=0}^t \rho^\tau \rho^{t-\tau} \big( \| {\bm Q}_{\prm} - {\bm Q}_{\prm'} \| + \| \bm{\upsilon}_{\prm} - \bm{\upsilon}_{\prm'} \| \big) \\
& \leq K_R^2 \big( 2 \overline{b} + L_Q \big) \big( t \!~ \rho^t \big)  \| \prm - \prm' \| \eqsp.
\end{split}
\eeq
Consequently,
\beq
\begin{split}
\| {\bm Q}_{\prm}^{t+1} - {\bm Q}_{\prm'}^{t+1} \| & \leq \| \tilde{\bm Q}_{\prm}^{t+1} - \tilde{\bm Q}_{\prm'}^{t+1} \|  + \| \bm{\upsilon}_{\prm} - \bm{\upsilon}_{\prm'} \| \\
& \leq \big(  K_R^2 \big( t \!~ \rho^t \big) \big( 2 \overline{b} + L_Q  \big) + L_Q \big) \| \prm - \prm' \| \eqsp.
\end{split}
\eeq
Setting $C_1 = \rho K_R^2 \big( 2 \overline{b} +L_Q \big) + L_Q$ completes the proof.
\end{proof}

\begin{Lemma} \label{lem:lemma82}
Let A\ref{ass:bdd_feat}, A\ref{ass:markov} hold. The following statements are true:
\begin{enumerate}
\item The average reward $J(\prm)$ is differentiable and for any $(\prm, \prm') \in \Prm^2$, one has
\beq \label{eq:L_J}
\| \grd J( \prm ) - \grd J(\prm' ) \| \leq \Reward_{\max} \!~ |{\cal S}| |{\cal A}| \!~ L_{\upsilon} \| \prm - \prm' \| \eqsp.
\eeq
\item For any $\prm \in \Prm$, one has
\beq
\| \Delta ( \prm ) \| \leq 2 \overline{b} \!~ \Reward_{\max} K_R \frac{ 1 - \lambda }{ (1 - \rho)^2} \eqsp.
\eeq
\end{enumerate}
\end{Lemma}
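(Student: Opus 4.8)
The plan is to reduce both statements to the stationary distribution $\bm\upsilon_\prm$, the reward vector $\bm r$ (with $r(s,a)=\Reward(s,a)$), and the ergodicity bound $\|\tilde{\bm Q}_\prm^t\|\le\rho^tK_R$ supplied by A\ref{ass:markov}, treating the two parts in turn. For part~1, I would first note that the average reward is the bilinear form $J(\prm)=\bm\upsilon_\prm^\top\bm r$ with $\bm r$ independent of $\prm$; since A\ref{ass:markov} posits that $\bm\upsilon_\prm$ is differentiable with Jacobian $\jacob{\bm\upsilon_\prm}{\prm}{\prm}$, the chain rule gives $\grd J(\prm)=\jacob{\bm\upsilon_\prm}{\prm}{\prm}^\top\bm r$, which also settles differentiability. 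The Lipschitz estimate then follows from
\[
\|\grd J(\prm)-\grd J(\prm')\|=\big\|\big(\jacob{\bm\upsilon_\prm}{\prm}{\prm}-\jacob{\bm\upsilon_\prm}{\prm}{\prm'}\big)^\top\bm r\big\|\le\big\|\jacob{\bm\upsilon_\prm}{\prm}{\prm}-\jacob{\bm\upsilon_\prm}{\prm}{\prm'}\big\|\,\|\bm r\|,
\]
combined with the Jacobian-Lipschitz bound in \eqref{eq:lipschitz_stat} and the crude estimate $\|\bm r\|\le\Reward_{\max}|{\cal S}||{\cal A}|$ (valid under A\ref{ass:bdd_feat}, as $\|\bm r\|_2\le\Reward_{\max}\sqrt{|{\cal S}||{\cal A}|}$), yielding the constant $\Upsilon=\Reward_{\max}|{\cal S}||{\cal A}|L_{\upsilon}$.

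For part~2, the key is to write $\grd J(\prm)$ in the same series form as $h(\prm)$ but with $\lambda=1$. I would obtain this by differentiating the stationarity identity $\bm\upsilon_\prm^\top\bm Q_\prm=\bm\upsilon_\prm^\top$: using $\grd_i\bm Q_\prm=\bm Q_\prm\,\mathrm{Diag}(\grd_i\bm\Pi_\prm)$ entrywise [cf.~\eqref{eq:rprm}] and stationarity one finds $(\grd_i\bm\upsilon_\prm)^\top(\bm I-\bm Q_\prm)=\bm\upsilon_\prm^\top\mathrm{Diag}(\grd_i\bm\Pi_\prm)$, whose right-hand side is orthogonal to $\bm 1$ because $\bm\upsilon_\prm$ factorizes as a state marginal times the policy and $\sum_{a'}\Policy_\prm(a';s')\,\grd_i\log\Policy_\prm(a';s')=0$. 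Inverting $\bm I-\bm Q_\prm$ on the complement of the constants through the convergent series $\sum_{t\ge0}\tilde{\bm Q}_\prm^t$ then gives $\grd_iJ(\prm)=\sum_{t\ge0}\bm\upsilon_\prm^\top\mathrm{Diag}(\grd_i\bm\Pi_\prm)\tilde{\bm Q}_\prm^t\bm r$, i.e.\ exactly the $\lambda=1$ specialization of the formula recorded for $h_i(\prm)$. Subtracting termwise yields $\Delta(\prm)=\sum_{t\ge0}(\lambda^t-1)\sum_{(s,a)}\bm\upsilon_\prm(s,a)\grd\log\Policy_\prm(a;s)(\tilde{\bm Q}_\prm^t\bm r)(s,a)$, with the $t=0$ term vanishing. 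Bounding each summand with $\|\grd\log\Policy_\prm(a;s)\|\le2\overline b$ (Proposition~\ref{prop:bdd_grad}), $\sum_{(s,a)}\bm\upsilon_\prm(s,a)=1$, and $\|\tilde{\bm Q}_\prm^t\bm r\|\le\rho^tK_R\Reward_{\max}$ (A\ref{ass:markov} with $\bm r$ bounded in sup-norm under A\ref{ass:bdd_feat}), each summand is at most $(1-\lambda^t)\,2\overline b\,\rho^tK_R\Reward_{\max}$; summing the geometric-type series,
\[
\sum_{t=0}^\infty(1-\lambda^t)\rho^t=\frac{1}{1-\rho}-\frac{1}{1-\lambda\rho}=\frac{\rho(1-\lambda)}{(1-\rho)(1-\lambda\rho)}\le\frac{1-\lambda}{(1-\rho)^2},
\]
where the last step uses $\rho<1$ and $1-\lambda\rho\ge1-\rho$, produces the claimed bound.

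The main obstacle I anticipate is part~2, specifically the rigorous justification of the $\lambda=1$ series for $\grd J(\prm)$: one must legitimately interchange the $T\to\infty$ limit with the expectation in the definition \eqref{eq:policy_exact}, argue that $\sum_{t\ge0}\tilde{\bm Q}_\prm^t$ converges and acts as the appropriate group inverse of $\bm I-\bm Q_\prm$ on the subspace orthogonal to $\bm 1$, and confirm that $\bm\upsilon_\prm^\top\mathrm{Diag}(\grd_i\bm\Pi_\prm)$ lies in that subspace. A secondary delicate point is the norm bookkeeping: the dimension-free constant in the statement forces one to measure $\bm r$ in sup-norm and exploit that $\bm\upsilon_\prm$ is a probability vector, rather than passing through $\|\bm r\|_2$, which would introduce a spurious $\sqrt{|{\cal S}||{\cal A}|}$ factor. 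Both issues are resolved by the uniform ergodicity \eqref{eq:uniform_ergodic} together with standard fundamental-matrix arguments.
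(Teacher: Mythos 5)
Your proposal is correct, and Part 1 coincides with the paper's argument (the paper expands $\grd J(\prm)-\grd J(\prm')$ entrywise over $(s,a)$ and applies the Jacobian-Lipschitz bound of A\ref{ass:markov}; your matrix form $\grd J(\prm)=\jacob{\bm{\upsilon}_{\prm}}{\prm}{\prm}^\top \bm r$ is the same computation and yields the same constant). Part 2, however, takes a genuinely different route at the one nontrivial step: the paper does \emph{not} derive the $\lambda=1$ series representation of $\grd J(\prm)$ itself --- it defines $J_T(\prm,(s,a))$ and $g(\prm)$ and simply cites \citep[Lemma~8.2]{tadic2017asymptotic} for the identity $\lim_{T\to\infty}\grd_\prm J_T(\prm,(s,a))=g(\prm)$, after which it subtracts $h$ and $g$ termwise exactly as you do. You instead obtain $\grd_i J(\prm)=\sum_{t\ge 0}\bm{\upsilon}_{\prm}^\top\mathrm{Diag}(\grd_i\bm{\Pi}_{\prm})\tilde{\bm Q}_{\prm}^t\bm r$ self-containedly by differentiating $\bm{\upsilon}_{\prm}^\top\bm Q_{\prm}=\bm{\upsilon}_{\prm}^\top$, checking $\bm{\upsilon}_{\prm}^\top\mathrm{Diag}(\grd_i\bm{\Pi}_{\prm})\bm 1=0$ (an identity the paper also uses, but only inside Lemma~\ref{lem:lemma83}), and inverting $\bm I-\bm Q_{\prm}$ on the complement of constants via $\sum_{t\ge0}\tilde{\bm Q}_{\prm}^t$; the null-space/normalization argument you sketch does pin down the solution uniquely, so this is sound. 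What your route buys is independence from the external reference and an explicit justification of the interchange of limit and gradient that the paper outsources; what the paper's route buys is brevity. The remaining estimates agree --- you even evaluate $\sum_{t\ge0}(1-\lambda^t)\rho^t$ in closed form where the paper only asserts the bound $\tfrac{1-\lambda}{(1-\rho)^2}$ --- and your remark about measuring $\bm r$ in sup-norm to keep the constant dimension-free is a point the paper itself glosses over (it silently uses $\|\bm r\|\le\Reward_{\max}$ here while using $\|\tilde{\bm Q}_{\prm}^i\bm r\|\le\overline R K_R\sqrt{|\Sset||\Aset|}\rho^i$ in Lemma~\ref{lem:lemma83}), so you are if anything more careful on the bookkeeping.
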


\begin{proof}
For part 1), we observe that
\beq
J( \prm ) = \EE_{(S,A) \sim \bm{\upsilon}_{\prm} } \big[ {\cal R}( S, A)  \big] = \sum_{ (s,a) \in {\cal S} \times {\cal A} } \upsilon_{\prm} ( s,a ) {\cal R}(s,a) \eqsp.
\eeq
It follows from the Lipschitz continuity of $\jacob{\bm{\upsilon}_{\prm}}{\prm}{\prm}$ [cf.~A\ref{ass:markov}] that
\beq
\begin{split}
\| \grd J( \prm ) - \grd J( \prm' ) \| & \leq \sum_{ (s,a) \in {\cal S} \times {\cal A} } | {\cal R} ( s,a ) | \| \grd \upsilon_{\prm} ( s,a ) - \grd \upsilon_{\prm'} ( s,a ) \| \\
& \leq \Reward_{\max} \!~ |{\cal S}| |{\cal A}| \!~ L_{\upsilon} \!~ \| \prm - \prm' \| \eqsp.
\end{split}
\eeq
The above verifies \eqref{eq:L_J}.

For part 2), we define
\beq
J_T( \prm, (s,a) ) \eqdef \sum_{ (s',a') \in {\cal S} \times {\cal A} } {\cal R}( s',a') Q_{\prm}^T ( (s,a)  ; ( s',a' ) ) \eqsp,
\eeq
\beq
g( \prm ) \eqdef \sum_{t=0}^\infty \sum_{ (s,a), (s',a') \in {\cal S} \times {\cal A}} {\cal R}( s,a) \tilde{Q}_{\prm}^t ( ( s,a) ; (s',a') ) \grd \log \Pi( a ; s, \prm ) \upsilon_{\prm} (  s,a )\eqsp.
\eeq
As shown in \citep[Lemma 8.2]{tadic2017asymptotic}, we have $\lim_{T \rightarrow \infty} \grd_{\prm} J_T( \prm, (s,a) ) = g( \prm )$ for all $\prm \in \Prm$ and $(s,a) \in {\cal S} \times {\cal A}$. As such
\beq
\begin{split}
& \Delta( \prm ) = h( \prm ) - g( \prm ) \\
& = \sum_{t=0}^\infty \sum_{ (s,a), (s',a') \in {\cal S} \times {\cal A}} ( \lambda^t - 1 ) {\cal R}( s,a) \tilde{Q}_{\prm}^t ( ( s,a) ; (s',a') ) \grd \log \Pi( a ; s, \prm ) \upsilon_{\prm} (  s,a ) \eqsp.
\end{split}
\eeq
and in particular, the $i$th element is given by
\beq
\Delta_i( \prm) = \sum_{t=0}^\infty \sum_{ (s,a) ,( s', a') \in {\cal S} \times {\cal A} } \big( \lambda^t - 1\big) \bm{\upsilon}_{\prm}^\top {\rm Diag}( \grd_i \bm{\Pi}_{\prm} )  \tilde{\bm Q}_{\prm}^t {\bm r} \eqsp,
\eeq
which can be bounded as
\beq
\begin{split}
| \Delta_i ( \prm ) | & \leq \sum_{t=0}^\infty ( 1 - \lambda^t  ) \| \bm{\upsilon}_{\prm}\|   \| \grd_i \bm{\Pi}_{\prm} \|_\infty  \| \tilde{\bm Q}_{\prm}^t \| \| {\bm r} \| \\
& \overset{(a)}{\leq} 2 \overline{b} \!~ \Reward_{\max} K_R \sum_{t=0}^\infty  (1 - \lambda^t) \rho^t \leq 2 \overline{b} \!~ \Reward_{\max} K_R \frac{ 1 - \lambda }{ (1 - \rho)^2} \eqsp,
\end{split}
\eeq
where (a) uses A\ref{ass:markov}, A\ref{ass:bdd_feat}, and Proposition~\ref{prop:bdd_grad}. The above implies that $\| \Delta ( \prm) \| \leq
2 \overline{b} \!~ \Reward_{\max} K_R \frac{ 1 - \lambda }{ (1 - \rho)^2}$.
\end{proof}

\begin{Lemma} \label{lem:lemma83}
Let A\ref{ass:bdd_feat}, A\ref{ass:markov} hold. Denote the joint state $x$ as $x = (s,a,g) \in {\cal S} \times {\cal A} \times \rset^d$. There exists $\delta \in [0,1)$, $C_2 \in [1,\infty)$ such that for any $t \geq 0$, 
\beq \label{eq:lemma83}
\begin{array}{l}
\| \PPX{t}{} \HX{\prm}{x} - h( \prm ) \| \leq C_2 \!~ t \!~ \delta^t ( 1 + \| g \| ) \vspace{.2cm} \eqsp,\\
\Big\| \big( \PPX{t}{} \HX{\prm}{x} - h( \prm ) \big) - \big( P_{\prm'}^t \HX{\prm'}{x} - h( \prm' ) \big) \Big\| \leq C_2 \!~ t \!~ \delta^t  \!~ \| \prm - \prm' \| ( 1 + \| g \| ) \eqsp.
\end{array}
\eeq
Moreover, we have $\delta = \max\{ \rho, \lambda \}$. 
\end{Lemma}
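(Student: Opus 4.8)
The plan is to unroll the joint chain $\State_n=(S_n,A_n,G_n)$ into an explicit formula for $\PPX{t}{}\HX{\prm}{x}$, recognise $h(\prm)$ inside it as a stationary partial sum, and bound the residual using the geometric ergodicity $\|\tilde{\bm Q}_\prm^t\|\le\rho^t K_R$ together with the discount $\lambda$. First I would note that, starting from $x=(s,a,g)$ and running the $(S,A)$-chain under $\bm Q_\prm$, the recursion \eqref{eq:pg_G} gives $G_t=\lambda^t g+\sum_{j=1}^t\lambda^{t-j}\grd\log\Policy_\prm(A_j;S_j)$; since $\HX{\prm}{(s',a',g')}=g'\Reward(s',a')$, taking expectation over the $(S,A)$-chain yields
\[
\PPX{t}{}\HX{\prm}{x}=\lambda^t g\,(\bm Q_\prm^t\bm r)(s,a)+\sum_{j=1}^t\lambda^{t-j}\sum_{(s',a')}Q_\prm^j((s,a);(s',a'))\,\grd\log\Policy_\prm(a';s')\,(\bm Q_\prm^{t-j}\bm r)(s',a')\eqsp,
\]
where $\bm r$ is the reward vector and $(\bm Q_\prm^\tau\bm r)(\cdot)$ is the $\tau$-step expected reward.

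Next I would identify $h(\prm)$. Substituting $\bm Q_\prm^\tau=\tilde{\bm Q}_\prm^\tau+\bm 1\bm\upsilon_\prm^\top$ in both occurrences and using the key identity $\sum_{(s,a)}\upsilon_\prm(s,a)\grd\log\Policy_\prm(a;s)=\bm 0$ — which holds because $\upsilon_\prm(s,a)=\mu_\prm(s)\Policy_\prm(a;s)$ for the state-marginal $\mu_\prm$ and $\sum_a\Policy_\prm(a;s)\grd\log\Policy_\prm(a;s)=\grd\sum_a\Policy_\prm(a;s)=\bm 0$ — the cross terms carrying the constant $J(\prm)$ and the stationary vector $\bm\upsilon_\prm$ cancel. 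The surviving piece matches the formula for $h(\prm)$ stated in the text, written as a partial sum $\sum_{\tau=0}^{t-1}\lambda^\tau\,\bm\upsilon_\prm^\top{\rm Diag}(\grd\log\Policy_\prm)\tilde{\bm Q}_\prm^\tau\bm r$ whose tail is $\sum_{\tau\ge t}\lambda^\tau\,O(\rho^\tau K_R)=O((\lambda\rho)^t)$.

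For the first inequality I would write $\PPX{t}{}\HX{\prm}{x}-h(\prm)$ as four residuals and bound them with $\delta=\max\{\rho,\lambda\}$: the initial-condition term is $\le\lambda^t\|g\|\Reward_{\max}\le\delta^t\|g\|\Reward_{\max}$ (supplying the $\|g\|$ factor); the tail of $h(\prm)$ is $O((\lambda\rho)^t)\le O(\delta^t)$; the term with two factors $\tilde{\bm Q}_\prm^j,\tilde{\bm Q}_\prm^{t-j}$ is $\le 2\overline{b}\,K_R^2\Reward_{\max}|\Sset|\,|\Aset|\sum_{j=1}^t\lambda^{t-j}\rho^t=O(\rho^t/(1-\lambda))$; and the term pairing $\tilde{\bm Q}_\prm^j$ with $J(\prm)$ is $O\big(\sum_{j=1}^t\lambda^{t-j}\rho^j\big)=O(t\delta^t)$, since each summand is $\le\delta^t$. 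Collecting these gives $\|\PPX{t}{}\HX{\prm}{x}-h(\prm)\|\le C_2\,t\,\delta^t(1+\|g\|)$.

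The hard part is the second (Lipschitz-in-$\prm$) bound, obtained by applying the product rule to each of the four residuals and invoking the available estimates: $\|\bm Q_\prm^t-\bm Q_{\prm'}^t\|\le C_1\|\prm-\prm'\|$ and $\|\tilde{\bm Q}_\prm^t-\tilde{\bm Q}_{\prm'}^t\|\le C_1(t\rho^t)\|\prm-\prm'\|$ from Lemma~\ref{lem:lemma81}, the Lipschitz continuity of $\grd\log\Policy_\prm$ from Proposition~\ref{prop:bdd_grad}, and the Lipschitz continuity of $\bm\upsilon_\prm$ (hence of $J$) from A\ref{ass:markov}. The initial-condition term contributes $\lambda^t\|g\|\,C_1\Reward_{\max}|\Sset|\,|\Aset|\,\|\prm-\prm'\|$, supplying the $\|g\|$ factor, while the remaining pieces furnish the $1$ in $(1+\|g\|)$. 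The main obstacle is the bookkeeping of the resulting double geometric sums: because the Lipschitz constant of the matrix power $\tilde{\bm Q}_\prm^j$ itself carries a factor $j\rho^j$, differentiating that power in the $J(\prm)$-residual produces $\sum_{j=1}^t\lambda^{t-j}(j\rho^j)\|\prm-\prm'\|$, which I would control by splitting on the sign of $\rho-\lambda$ (a single dominant term when $\rho\neq\lambda$, a convergent series otherwise) so as to keep the rate at $C_2\,t\,\delta^t$, absorbing a residual polynomial-in-$t$ factor into $\delta^t$ by enlarging $\delta$ marginally in the borderline case $\rho=\lambda$; the two-$\tilde{\bm Q}$ residual instead yields $\sum_j\lambda^{t-j}(j\rho^t)=O(t\rho^t/(1-\lambda))$ as before. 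Assembling the four bounds gives $\|[\PPX{t}{}\HX{\prm}{x}-h(\prm)]-[P_{\prm'}^t\HX{\prm'}{x}-h(\prm')]\|\le C_2\,t\,\delta^t\|\prm-\prm'\|(1+\|g\|)$, with $C_2$ collecting $K_R,\overline{b},\Reward_{\max},L_Q,L_{\upsilon},C_1,|\Sset|,|\Aset|$ and a factor $(1-\delta)^{-1}$.
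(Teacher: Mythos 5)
Your proposal is correct and follows essentially the same route as the paper's proof: unroll the joint chain to get the explicit formula for $\PPX{t}{}\HX{\prm}{x}$, invoke the zero-mean score identity $\bm{\upsilon}_{\prm}^\top{\rm Diag}(\grd_j\bm{\Pi}_{\prm})\mathbf{1}=0$ to identify the partial sum of $h(\prm)$, bound the residuals via $\|\tilde{\bm Q}_{\prm}^t\|\le\rho^tK_R$, and use Lemma~\ref{lem:lemma81} together with the Lipschitz bounds on $\grd\log\Policy_{\prm}$ and $\bm{\upsilon}_{\prm}$ for the second inequality. The only differences are cosmetic: you substitute $\bm Q_{\prm}^\tau=\tilde{\bm Q}_{\prm}^\tau+\mathbf{1}\bm{\upsilon}_{\prm}^\top$ in both matrix powers (four residuals instead of the paper's three, and note that the $J(\prm)$-pairing term is bounded rather than cancelled), and you are in fact more careful than the paper about the $\sum_i\lambda^i(t-i)\rho^{t-i}$ bookkeeping, which in the borderline case $\rho=\lambda$ yields $t^2\delta^t$ and genuinely requires the marginal enlargement of $\delta$ that you propose.
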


\begin{proof}
Denote the joint state as $x = (s,a,g)$, we observe that
\beq \notag
\begin{split}
& \PPX{t}{} \HX{\prm}{x} = \EE_{\Pi_{\prm}} \big[ {\cal R}( S_t, A_t ) G_t \!~ | \!~ (S_0,A_0) = (s,a), G_0 = g \big] \\
& = \EE_{\Pi_{\prm}} \left[ {\cal R}( S_t, A_t ) \Big( \lambda^t g + \sum_{i=1}^{t-1} \lambda^i  \grd \log \Pi( A_i ; S_i , \prm ) \Big) \!~ \big| \!~ (S_0,A_0) = (s,a) \right] \\
& = \sum_{i=0}^{t-1} \sum_{ (s',a'), (s'', a'') \in {\cal S} \times {\cal A}} \hspace{-.6cm} \lambda^i {\cal R}( s'', a'' ) Q_{\prm}^i ( (s',a') ; (s'',a'') ) \grd \log \Pi( a' ; s' , \prm ) Q_{\prm}^{t-i} ( (s,a) ; (s',a') ) \\
& \hspace{.4cm}+ \lambda^t g \sum_{(s',a') \in {\cal S} \times {\cal A}} {\cal R}(s',a') Q_{\prm}^t ( (s,a) ; (s',a') ) \eqsp.
\end{split}
\eeq
The $j$th element of the above is thus given by
\beq
\big[ \PPX{t}{} \HX{\prm}{x} \big]_j = \sum_{i=0}^{t-i} \lambda^i  {\bm e}_{(s,a)}^\top {\bm Q}_{\prm}^{t-i}  {\rm Diag}( \grd_j \bm{\Pi}_{\prm} ) {\bm Q}_{\prm}^i {\bm r}
+ \lambda^t g_j {\bf 1}^\top {\bm Q}_{\prm}^t {\bm r} \eqsp,
\eeq
where $g_j$ is the $j$th element of $g$ and ${\bm e}_{(s,a)}$ is the $(s,a)$th coordinate vector. Moreover, we recall that
\beq
h_j(\prm) = \sum_{t=0}^\infty \lambda^t \bm{\upsilon}_{\prm}^\top {\rm Diag}( \grd_j \bm{\Pi}_{\prm} ) \tilde{\bm Q}_{\prm}^t {\bm r} \eqsp.
\eeq
Note that
\beq
\begin{split}
& \bm{\upsilon}_{\prm}^\top {\rm Diag}( \grd_j \bm{\Pi}_{\prm} ) {\bm 1}  = \sum_{ (s,a) \in {\cal S} \times {\cal A}} \upsilon_{\prm} ( s, a ) \grd_j \log \Pi ( a ; s , \prm ) \\
& = \sum_{s \in {\cal S}} \Big( \sum_{a \in {\cal A}} \underbrace{ \Pi( a ; s, \prm )  \grd_j \log \Pi ( a ; s , \prm )}_{= \grd_j \Pi( a ; s, \prm )} \Big) \overline{\Pi}_{\prm} (s) = 0 \eqsp.
\end{split}
\eeq
where we recalled that $\overline{\Pi}_{\prm} (s)$ is the stationary distribution for the MDP on the state.
Using the decomposition $\tilde{\bm Q}_{\prm}^t = {\bm Q}_{\prm}^t - {\bm 1} \bm{\upsilon}_{\prm}^\top$, we observe
\beq \notag
\begin{split}
h_j(\prm) & = \sum_{i=0}^{t-1} \lambda^i \Big\{  \bm{\upsilon}_{\prm}^\top {\rm Diag}( \grd_j \bm{\Pi}_{\prm} ) {\bm Q}_{\prm}^i  {\bm r} - \underbrace{\bm{\upsilon}_{\prm}^\top {\rm Diag}( \grd_j \bm{\Pi}_{\prm} ) {\bf 1}}_{=0} \bm{\upsilon}_{\prm}^\top {\bm r} \Big\} + \sum_{i=t}^\infty \lambda^i \bm{\upsilon}_{\prm}^\top {\rm Diag}( \grd_j \bm{\Pi}_{\prm} ) \tilde{\bm Q}_{\prm}^i  {\bm r}\\
& = \sum_{i=0}^{t-1} \lambda^i \bm{\upsilon}_{\prm}^\top {\rm Diag}( \grd_j \bm{\Pi}_{\prm} ) {\bm Q}_{\prm}^i  {\bm r} + \sum_{i=t}^\infty \lambda^i \bm{\upsilon}_{\prm}^\top {\rm Diag}( \grd_j \bm{\Pi}_{\prm} ) \tilde{\bm Q}_{\prm}^i  {\bm r} \eqsp.
\end{split}
\eeq
Therefore,
\beq
\begin{split}
& \big[ \PPX{t}{} \HX{\prm}{x} \big]_j - h_j(\prm) \\
& = \sum_{i=0}^{t-1} \lambda^i \Big\{ {\bm e}_{(s,a)}^\top ( \tilde{\bm Q}_{\prm}^{t-i} + {\bf 1} \bm{\upsilon}_{\prm} )  {\rm Diag}( \grd_j \bm{\Pi}_{\prm} ) {\bm Q}_{\prm}^i {\bm r} - \bm{\upsilon}_{\prm}^\top {\rm Diag}( \grd_j \bm{\Pi}_{\prm} ) {\bm Q}_{\prm}^i  {\bm r} \Big\} \\
& \hspace{.4cm} + \lambda^t g_j {\bf 1}^\top {\bm Q}_{\prm}^t {\bm r} - \sum_{i=t}^\infty \lambda^i \bm{\upsilon}_{\prm}^\top {\rm Diag}( \grd_j \bm{\Pi}_{\prm} ) \tilde{\bm Q}_{\prm}^i  {\bm r} \\
& = \sum_{i=0}^{t-1} \lambda^i {\bm e}_{(s,a)}^\top  \tilde{\bm Q}_{\prm}^{t-i} {\rm Diag}( \grd_j \bm{\Pi}_{\prm} ) {\bm Q}_{\prm}^i {\bm r} + \lambda^t g_j {\bf 1}^\top {\bm Q}_{\prm}^t {\bm r} - \sum_{i=t}^\infty \lambda^i \bm{\upsilon}_{\prm}^\top {\rm Diag}( \grd_j \bm{\Pi}_{\prm} ) \tilde{\bm Q}_{\prm}^i  {\bm r} \eqsp.
\end{split}
\eeq
Consequently, we obtain the upper bound as
\beq
\begin{split}
\big|  \big[ \PPX{t}{} \HX{\prm}{x} \big]_j - h_j(\prm)  \big| & \leq
\sum_{i=0}^{t-1} \lambda^i \| \tilde{\bm Q}_{\prm}^{t-i} \| \| \grd_j \bm{\Pi}_{\prm} \|_\infty \| {\bm Q}_{\prm}^i  {\bm r} \| + \lambda^t |g_j| \| {\bm Q}_{\prm}^t {\bm r} \| \\
& \hspace{.4cm}+ \sum_{i=t}^\infty \lambda^i \| \bm{\upsilon}_{\prm} \| \| \grd_j \bm{\Pi}_{\prm}  \|_\infty \| \tilde{\bm Q}_{\prm}^i {\bm r} \| \eqsp.
\end{split}
\eeq
Using A\ref{ass:bdd_feat}, A\ref{ass:markov} and notice that $ \| \grd_j \bm{\Pi}_{\prm} \|_\infty \leq 2\overline{b}$, $\| {\bm Q}_{\prm}^i {\bm r} \| \leq \overline{R}$, $\| \tilde{\bm Q}_{\prm}^i {\bm r} \| \leq \overline{R} K_R \sqrt{ |{\cal S}| | {\cal A} |} \rho^i$, we obtain
\beq
\begin{split}
\big|  \big[ \PPX{t}{} \HX{\prm}{x} \big]_j - h_j(\prm)  \big| & \leq 2 \overline{b} \!~ \overline{R} \!~ K_R \sum_{i=0}^{t-1} \lambda^i \rho^{t-i} + \lambda^t |g_j| \overline{R} + 2\overline{b}   \!~ \overline{R} K_R  \sqrt{|{\cal S}| | {\cal A} |} \sum_{i=t}^\infty \lambda^i \rho^i \eqsp.
\end{split}
\eeq
Observe that each of the above term decays geometrically with $t$ at the rate $\max\{ \rho, \lambda \}$, as such there exists $C_2' \in [1,\infty)$, $\delta = \max\{ \rho, \lambda \}  \in [0,1)$ such that\footnote{Note that an exact characterization for $C_2'$ is also possible.}
\beq
\big|  \big[ \PPX{t}{} \HX{\prm}{x} \big]_j - h_j(\prm)  \big| \leq C_2' \big(t \delta^t \big) \big( 1 + \| g \| \big) \eqsp,
\eeq
which naturally implies the first equation in \eqref{eq:lemma83}.

For the second equation in \eqref{eq:lemma83},
\beq
\begin{split}
& \big[ \PPX{t}{} \HX{\prm}{x} \big]_j - h_j(\prm) - \Big\{ \big[ P_{\prm'}^t \HX{\prm'}{x} \big]_j - h_j(\prm') \Big\} \\
& = \sum_{i=0}^{t-1} \lambda^i {\bm e}_{(s,a)}^\top \big\{ \tilde{\bm Q}_{\prm}^{t-i} {\rm Diag}( \grd_j \bm{\Pi}_{\prm} ) {\bm Q}_{\prm}^i - \tilde{\bm Q}_{\prm'}^{t-i} {\rm Diag}( \grd_j \bm{\Pi}_{\prm'} ) {\bm Q}_{\prm'}^i \big\} {\bm r} \\
& \hspace{.4cm} + \lambda^t g_j {\bf 1}^\top \big( {\bm Q}_{\prm}^t - {\bm Q}_{\prm'}^t \big) {\bm r} + \sum_{i=t}^\infty \lambda^i \big\{ \bm{\upsilon}_{\prm'}^\top {\rm Diag}( \grd_j \bm{\Pi}_{\prm'} ) \tilde{\bm Q}_{\prm'}^i - \bm{\upsilon}_{\prm}^\top {\rm Diag}( \grd_j \bm{\Pi}_{\prm} ) \tilde{\bm Q}_{\prm}^i \big\} {\bm r} \eqsp.
\end{split}
\eeq
This leads to the upper bound:
\beq \label{eq:upper_bd_lemma83}
\begin{split}
& \Big| \big[ \PPX{t}{} \HX{\prm}{x} \big]_j - h_j(\prm) - \Big\{ \big[ P_{\prm'}^t \HX{\prm'}{x} \big]_j - h_j(\prm') \Big\} \Big| \\
& \leq \sqrt{ |{\cal S}| |{\cal A}| } \overline{R} \!~\sum_{i=0}^{t-i} \lambda^i \big\| \tilde{\bm Q}_{\prm}^{t-i} {\rm Diag}( \grd_j \bm{\Pi}_{\prm} ) {\bm Q}_{\prm}^i - \tilde{\bm Q}_{\prm'}^{t-i} {\rm Diag}( \grd_j \bm{\Pi}_{\prm'} ) {\bm Q}_{\prm'}^i \big\| \\
& \hspace{.4cm} + \lambda^t |{\cal S}||{\cal A}| \| {\bm Q}_{\prm}^t - {\bm Q}_{\prm'}^t  \|
+ \sqrt{ |{\cal S}| |{\cal A}| } \overline{R} \!~ \sum_{i=t}^\infty \lambda^i \| \bm{\upsilon}_{\prm'}^\top {\rm Diag}( \grd_j \bm{\Pi}_{\prm'} ) \tilde{\bm Q}_{\prm'}^i - \bm{\upsilon}_{\prm}^\top {\rm Diag}( \grd_j \bm{\Pi}_{\prm} ) \tilde{\bm Q}_{\prm}^i \| \eqsp.
\end{split}
\eeq
Using the boundedness and Lipschitz continuity of $\grd_j \bm{\Pi}_{\prm}$, $\bm{\upsilon}_{\prm}$, ${\bm Q}_{\prm}^t$, $\tilde{\bm Q}_{\prm}^t$ [cf.~Lemma~\ref{lem:lemma81}], let $C_{2,1}, C_{2,2} \in [1,\infty)$, the norms in the above can be bounded as
\beq
\begin{split}
& \big\| \tilde{\bm Q}_{\prm}^{t-i} {\rm Diag}( \grd_j \bm{\Pi}_{\prm} ) {\bm Q}_{\prm}^i - \tilde{\bm Q}_{\prm'}^{t-i} {\rm Diag}( \grd_j \bm{\Pi}_{\prm'} ) {\bm Q}_{\prm'}^i \big\| \leq C_{2,1} \big( (t-i) \rho^{t-i}  \big)\| \prm - \prm' \| \\[.2cm]
& \big\| \bm{\upsilon}_{\prm'}^\top {\rm Diag}( \grd_j \bm{\Pi}_{\prm'} ) \tilde{\bm Q}_{\prm'}^i - \bm{\upsilon}_{\prm}^\top {\rm Diag}( \grd_j \bm{\Pi}_{\prm} ) \tilde{\bm Q}_{\prm}^i \big\| \leq C_{2,2} \big( i \rho^i \big) \| \prm - \prm' \| \\[.2cm]
& \big\| {\bm Q}_{\prm}^t - {\bm Q}_{\prm'}^t \big\| \leq C_1 \| \prm - \prm' \| \eqsp.
\end{split}
\eeq
The above shows that the three terms in the right hand side of \eqref{eq:upper_bd_lemma83} are proportional to $(1 + \| g \|) \| \prm - \prm' \|$ and decay geometrically with $t$ at the rate $\max\{ \rho, \lambda \}$. This implies there exists $C_2'' \in [1,\infty)$, $\delta = \max\{ \rho, \lambda \} \in [0,1)$ such that
\beq
\begin{split}
& \Big\| \PPX{t}{} \HX{\prm}{x} - h(\prm) - \Big\{ P_{\prm'}^t \HX{\prm'}{x} - h(\prm') \Big\} \Big\|  \leq C_2'' \big( t \delta^t \big) (1 + \| g \|) \| \prm - \prm' \| \eqsp.
\end{split}
\eeq
Setting $C_2 = \max\{ C_2', C_2'' \}$ concludes the proof of the current lemma.
\end{proof}

\subsection{Proof of Proposition~\ref{prop:bdd_grad}}

\begin{Prop*}
Under A\ref{ass:bdd_feat}, it holds for any $(\prm, \prm') \in \Prm^2$, $(s,a) \in \Sset \times \Aset$,
\beq
\| \grd \log \Policy_{\prm}( a ; s) \| \leq 2 \overline{b},~~\| \grd \log \Policy_{\prm}( a ; s) - \grd \log \Policy_{\prm'}( a ; s) \| \leq 8 \overline{b}^2 \| \prm - \prm' \| \eqsp.
\eeq
\end{Prop*}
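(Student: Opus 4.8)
The plan is to compute $\grd \log \Policy_{\prm}(a;s)$ in closed form from the soft-max parameterization \eqref{eq:softmax_policy} and read off both bounds directly. Writing $\log \Policy_{\prm}(a;s) = -\log \sum_{a' \in \Aset} \exp\big( \pscal{\prm}{{\bm x}(s,a') - {\bm x}(s,a)} \big)$ and differentiating gives
\[
\grd \log \Policy_{\prm}(a;s) = {\bm x}(s,a) - \sum_{a' \in \Aset} \Policy_{\prm}(a';s)\, {\bm x}(s,a') \eqsp,
\]
which is the feature at $(s,a)$ minus its expectation under $\Policy_{\prm}(\cdot\,;s)$. The first bound is then immediate: by the triangle inequality, $\sum_{a'} \Policy_{\prm}(a';s) = 1$, and A\ref{ass:bdd_feat}, we get $\| \grd \log \Policy_{\prm}(a;s) \| \leq \|{\bm x}(s,a)\| + \sum_{a'} \Policy_{\prm}(a';s)\,\|{\bm x}(s,a')\| \leq 2\overline{b}$.

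For the Lipschitz bound I would control the Hessian of $\prm \mapsto \log \Policy_{\prm}(a;s)$ uniformly in $(\prm,s,a)$ and conclude by a mean-value inequality along the segment joining $\prm$ and $\prm'$. Since $\log \Policy_{\prm}(a;s)$ is, up to sign, the log-sum-exp of the linear forms $\prm \mapsto \pscal{\prm}{{\bm v}_{a'}}$ with ${\bm v}_{a'} \eqdef {\bm x}(s,a') - {\bm x}(s,a)$, its Hessian is the negated covariance matrix
\[
\grd^2 \log \Policy_{\prm}(a;s) = -\Big( \sum_{a'} \Policy_{\prm}(a';s)\, {\bm v}_{a'} {\bm v}_{a'}^\top - \big( \textstyle\sum_{a'} \Policy_{\prm}(a';s)\, {\bm v}_{a'} \big)\big( \textstyle\sum_{a'} \Policy_{\prm}(a';s)\, {\bm v}_{a'} \big)^\top \Big) \eqsp.
\]
Bounding the two terms separately and using $\| {\bm v}_{a'} \| = \| {\bm x}(s,a') - {\bm x}(s,a) \| \leq 2\overline{b}$ gives $\| \sum_{a'} \Policy_{\prm}(a';s)\, {\bm v}_{a'} {\bm v}_{a'}^\top \| \leq \sum_{a'} \Policy_{\prm}(a';s)\,\|{\bm v}_{a'}\|^2 \leq 4\overline{b}^2$ and likewise $\| \sum_{a'} \Policy_{\prm}(a';s)\, {\bm v}_{a'} \|^2 \leq 4\overline{b}^2$, whence $\| \grd^2 \log \Policy_{\prm}(a;s) \| \leq 8 \overline{b}^2$. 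The mean-value inequality then delivers $\| \grd \log \Policy_{\prm}(a;s) - \grd \log \Policy_{\prm'}(a;s) \| \leq 8\overline{b}^2 \| \prm - \prm' \|$.

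Every step is a routine computation; the only part that benefits from care is recognizing that the Hessian of the soft-max log-likelihood is exactly a covariance matrix of the feature differences, which is what lets the feature bound $\|{\bm x}(s,a)\| \leq \overline{b}$ translate cleanly into the constant $8\overline{b}^2$. I note in passing that a sharper constant is available --- bounding the covariance operator norm by its trace gives $\overline{b}^2$ --- but the two-term split above already yields the stated $8\overline{b}^2$.
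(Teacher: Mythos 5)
Your proposal is correct and follows essentially the same route as the paper: an explicit closed-form for $\grd \log \Policy_{\prm}(a;s)$ as a convex combination of feature differences (giving the $2\overline{b}$ bound), followed by identifying the Hessian with the negated covariance of the feature differences and bounding its operator norm by $4\overline{b}^2 + 4\overline{b}^2 = 8\overline{b}^2$. Your closing aside that the covariance trace bound would sharpen the constant is also accurate, but it is not needed for the stated result.
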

\begin{proof}
To simplify notations, let us define $\Delta {\bm x} (a,b) \eqdef {\bm x}(s,a) - {\bm x}(s,b)$ as the difference between two features.
The proof is straightforward as we observe that
\beq
\begin{split}
 \grd \log \Policy_{\prm}( a ; s) = \frac{ 1 }{ \sum_{a' \in \Aset} \exp \big( \pscal{\prm}{ \Delta {\bm x} (a',a) } \big) } \sum_{b \in \Aset} \exp \big( \pscal{\prm}{ \Delta {\bm x} (b,a) } \big) \Delta {\bm x} (a,b) \eqsp.
\end{split}
\eeq
Observe that
\beq
\| \grd \log \Policy_{\prm}(a;s)  \| \leq \max_{a,b \in \Aset} \| {\bm x}(s,a) - {\bm x}(s,b) \| \leq 2 \overline{b} \eqsp.
\eeq
Moreover, the Hessian of the log policy can be evaluated as:
\beq
\begin{split}
& \grd^2 \log \Policy_{\prm}( a ; s) = \\
& \frac{ 1 }{ \sum_{a' \in \Aset} \exp \big( \pscal{\prm}{ \Delta {\bm x} (a',a) } \big) } \sum_{b \in \Aset} \exp \big( \pscal{\prm}{ \Delta {\bm x} (b,a) } \big) \Delta {\bm x} (a,b) \Delta {\bm x} (b,a)^\top - \\
&  \big( \sum_{b \in \Aset} \frac{ \exp \big( \pscal{\prm}{ \Delta {\bm x} (b,a) } \big) }{ \sum_{a' \in \Aset} \exp \big( \pscal{\prm}{ \Delta {\bm x} (a',a) } \big) }  \Delta {\bm x} (a,b) \big) \big( \frac{ \exp \big( \pscal{\prm}{ \Delta {\bm x} (b,a) } \big) }{ \sum_{a' \in \Aset} \exp \big( \pscal{\prm}{ \Delta {\bm x} (a',a) } \big) } \Delta {\bm x} (a,b) \big)^\top \eqsp.
\end{split}
\eeq
It can be checked that
\beq
\| \grd^2 \log \Policy_{\prm}( a ; s) \| \leq \max_{a,b \in \Aset } \big\| \Delta {\bm x} (a,b) \Delta {\bm x} (b,a)^\top \big\| + \big( \max_{a,b \in \Aset} \| \Delta {\bm x}(a,b) \| \big)^2 \leq 8 \overline{b}^2 \eqsp.
\eeq
This implies smoothness condition in \eqref{eq:bdd_policy_grad}.
\end{proof}

\subsection{Proof of Proposition~\ref{prop:a4a5}}
\begin{Prop*} 
Under A\ref{ass:bdd_feat}, A\ref{ass:markov}, the function
\beq \textstyle
\hHX{\prm}{x} = \sum_{t=0}^\infty \big\{ \PPX{t}{} \HX{\prm}{x} - h(\prm) \big\} \eqsp,
\eeq
is  well defined and satisfies the Poisson equation \eqref{eq:poisson-equation}.
For all $\state \in \Xset$, $(\prm, \prm') \in \Prm^2$, there exists constants $L_{PH}^{(0)}$, $L_{PH}^{(1)}$ such that
\beq
\max\{ \| \PX{} \hHX{\prm}{x} \|, \| \hHX{\prm}{x} \| \} \leq L_{PH}^{(0)},~~ \| \PX{} \hHX{\prm}{\state} - {P}_{\prm'} \hHX{\prm'}{\state} \Big\|  \leq  L_{PH}^{(1)} \| \prm - \prm' \| \eqsp.
\end{equation}
Moreover, the constants are in the order of $L_{PH}^{(0)} = {\cal O}( \frac{1}{1- \max\{\rho,\lambda\}} )$, $L_{PH}^{(1)} = {\cal O}( \frac{1}{1- \max\{\rho,\lambda\}} )$.
\end{Prop*}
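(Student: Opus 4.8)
The plan is to obtain this proposition as an essentially immediate consequence of the geometric decay estimates of Lemma~\ref{lem:lemma83}, combined with a telescoping identity for the resolvent-type series. Throughout I would use that on the compact state space $\Xset = \Sset \times \Aset \times {\sf G}$ the last coordinate obeys $\| g \| \leq C_0 \overline{b}/(1-\lambda)$, so that $1 + \| g \|$ is bounded above by a finite constant uniformly in $\state \in \Xset$, and that $\delta \eqdef \max\{\rho,\lambda\} \in [0,1)$.

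First I would establish absolute convergence of the defining series. Applying the first inequality of \eqref{eq:lemma83} termwise and summing yields
\[
\textstyle \sum_{t=0}^\infty \| \PPX{t}{} \HX{\prm}{\state} - h(\prm) \| \leq C_2 (1 + \| g \|) \sum_{t=0}^\infty t \delta^t = C_2 (1 + \| g \|) \tfrac{\delta}{(1-\delta)^2} < \infty \eqsp,
\]
which simultaneously shows that $\hHX{\prm}{\state}$ is well defined and gives the uniform bound $\| \hHX{\prm}{\state} \| \leq L_{PH}^{(0)}$ upon setting $L_{PH}^{(0)}$ equal to (the supremum over $\state$ of) this constant. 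Reading off the dependence on $\delta$ from the geometric series $\sum_t t\delta^t$ and absorbing the bounded factor $1+\|g\|$ gives the advertised order $L_{PH}^{(0)} = {\cal O}(1/(1-\max\{\rho,\lambda\}))$.

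Next I would verify the Poisson equation by telescoping. Since $\PX{}$ is a Markov kernel it fixes the constant $h(\prm)$, while $\PX{} \PPX{t}{} \HX{\prm}{\state} = \PPX{t+1}{} \HX{\prm}{\state}$; the absolute convergence just established licenses interchanging $\PX{}$ with the infinite sum. Hence
\[
\PX{} \hHX{\prm}{\state} = \textstyle \sum_{t=0}^\infty \{ \PPX{t+1}{} \HX{\prm}{\state} - h(\prm) \} = \sum_{t=1}^\infty \{ \PPX{t}{} \HX{\prm}{\state} - h(\prm) \} \eqsp,
\]
so subtracting this from $\hHX{\prm}{\state}$ cancels every term with $t \geq 1$ and leaves the $t=0$ term $\PPX{0}{} \HX{\prm}{\state} - h(\prm) = \HX{\prm}{\state} - h(\prm)$, which is exactly \eqref{eq:poisson-equation}. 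This same representation of $\PX{} \hHX{\prm}{\state}$ as a sum over $t\geq 1$ yields the bound $\| \PX{} \hHX{\prm}{\state} \| \leq L_{PH}^{(0)}$. For the Lipschitz estimate I would then write $\PX{} \hHX{\prm}{\state} - {P}_{\prm'} \hHX{\prm'}{\state}$ as the sum over $t \geq 1$ of the bracketed differences in the second inequality of \eqref{eq:lemma83} and bound termwise, obtaining
\[
\| \PX{} \hHX{\prm}{\state} - {P}_{\prm'} \hHX{\prm'}{\state} \| \leq C_2 (1 + \| g \|) \big( \textstyle \sum_{t=1}^\infty t \delta^t \big) \| \prm - \prm' \| \eqsp,
\]
so that $L_{PH}^{(1)}$ is the resulting constant, again of order ${\cal O}(1/(1-\max\{\rho,\lambda\}))$.

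I expect the only genuinely delicate point to be the justification of exchanging the kernel $\PX{}$ with the infinite summation in the Poisson-equation step: this requires \emph{absolute} convergence of the defining series rather than mere convergence, which is precisely what the summable envelope $\sum_t t\delta^t < \infty$ supplied by Lemma~\ref{lem:lemma83} guarantees. Everything else reduces to bookkeeping on geometric sums, so the substance of the proof has effectively been pushed into Lemma~\ref{lem:lemma83}.
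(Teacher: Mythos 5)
Your proposal is correct and follows essentially the same route as the paper: both reduce the entire proposition to the geometric decay estimates of Lemma~\ref{lem:lemma83}, sum the envelope $\sum_t t\delta^t$ with $1+\|g\|$ bounded on the compact state space to get well-definedness and $L_{PH}^{(0)}$, and obtain $L_{PH}^{(1)}$ by bounding the $t\geq 1$ tail of the difference of the two series termwise via the second inequality of \eqref{eq:lemma83}. You are in fact somewhat more explicit than the paper about the telescoping verification of the Poisson equation and the interchange of $\PX{}$ with the infinite sum, which the paper leaves implicit.
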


\begin{proof}
From Lemma~\ref{lem:lemma83}, there exists $C_2 \in [1,\infty)$, $\delta \in [0,1)$ such that
\beq
\| \PPX{t}{} \HX{\prm}{x} - h(\prm) \| \leq C_2 \!~ t \!~ \delta^t (1 + \| g \| ),~\forall~t \geq 1,~\forall~x \in \Xset \eqsp,
\eeq
where we recall that $\delta = \max\{\rho, \lambda\}$. 
It follows that the solution to the Poisson equation $\hHX{\prm}{x}$ in \eqref{eq:poisson_def_policy} is well defined.

Moreover, it satisfies \eqref{eq:poisson-equation} and
\beq
\max\{ \| \hHX{\prm}{x} \| , \| \PPX{}{} \hHX{\prm}{x} \| \} \leq L_{PH}^{(0)} \eqsp,
\eeq
for some $L_{PH}^{(0)} = {\cal O}( \frac{1}{1- \max\{\rho,\lambda\}} ) < \infty$ (note that $g$ is bounded as specified by the state space $\Xset$). As such, the first equation in \eqref{eq:lipschitz_policy} of the proposition is proven.
Finally,
applying the definition of $\hHX{\prm}{x}$ shows that
\beq
\PPX{}{} \hHX{\prm}{x} - \PPX{}{} \hHX{\prm'}{x} = \sum_{t=1}^\infty \Big\{ \big( \PPX{t}{} \HX{\prm}{x} - h( \prm) \big) - \big( P_{\prm'}^t \HX{\prm'}{x} - h( \prm') \big) \Big\} \eqsp.
\eeq
Using Lemma~\ref{lem:lemma83}, this implies
\beq
\begin{split}
\| \PPX{}{} \hHX{\prm}{x} - \PPX{}{} \hHX{\prm'}{x} \| & \leq \sum_{t=1}^\infty \Big\| \big( \PPX{t}{} \HX{\prm}{x} - h( \prm) \big) - \big( P_{\prm'}^t \HX{\prm'}{x} - h( \prm') \big) \Big\| \\
& \leq \sum_{t=1}^\infty \Big\{ C_2 \big(t \delta^t \big) \big( 1 + \| g \| \big) \| \prm - \prm' \| \Big\} \eqsp.
\end{split}
\eeq
As such, there exists $L_{PH}^{(1)} = {\cal O}( \frac{1}{1- \max\{\rho,\lambda\}} )  \in [1,\infty)$ such that
\beq
\| \PPX{}{} \hHX{\prm}{x} - \PPX{}{} \hHX{\prm'}{x} \| \leq L_{PH}^{(1)} \| \prm - \prm' \| \eqsp,
\eeq
for all $x \in \Xset$. This proves the second equation in \eqref{eq:lipschitz_policy} of the proposition.
\end{proof}

\subsection{Proof of Proposition~\ref{prop:a1a2}}

\begin{Prop*}
Under A\ref{ass:bdd_feat}, A\ref{ass:markov}, the gradient $\grd J(\prm)$ is $\Reward_{\max} \!~ |{\cal S}| |{\cal A}|$-Lipschitz continuous. Moreover, for any $\prm \in \Prm$, it holds that
\beq
 (1 - \lambda)^2 \Gamma^2 + 2 \pscal{ \grd J(\prm) }{ h( \prm )} \geq \| h( \prm ) \|^2,~\| \grd J( \prm ) \| \leq \| h( \prm ) \| + (1-\lambda) \Gamma \eqsp,
\eeq
where $\Gamma \eqdef 2 \overline{b} \!~ \Reward_{\max} K_R \frac{ 1 }{ (1 - \rho)^2}$.
\end{Prop*}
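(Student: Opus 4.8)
The plan is to reduce the whole proposition to the single decomposition $\grd J(\prm) = h(\prm) - \Delta(\prm)$, where $\Delta(\prm) \eqdef h(\prm) - \grd J(\prm)$ is precisely the bias term already isolated in Lemma~\ref{lem:lemma82}. The essential input is part~2 of that lemma, which bounds this bias uniformly in $\prm$ by $\| \Delta(\prm) \| \leq 2\overline{b}\,\Reward_{\max} K_R (1-\lambda)/(1-\rho)^2 = (1-\lambda)\Gamma$. Once this is available, all three assertions follow from elementary vector inequalities, so the genuine analytic content (controlling the discount-induced bias as $\lambda \uparrow 1$) lives in Lemma~\ref{lem:lemma82}, not here.

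The Lipschitz continuity of $\grd J(\prm)$ is immediate: it is exactly part~1 of Lemma~\ref{lem:lemma82}. For the norm bound I would simply apply the triangle inequality to the decomposition, giving $\| \grd J(\prm) \| = \| h(\prm) - \Delta(\prm)\| \leq \|h(\prm)\| + \|\Delta(\prm)\| \leq \|h(\prm)\| + (1-\lambda)\Gamma$, which is the second claimed inequality.

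For the remaining inequality I would expand the inner product using the same decomposition:
\[
2\pscal{\grd J(\prm)}{h(\prm)} = 2\pscal{h(\prm) - \Delta(\prm)}{h(\prm)} = 2\|h(\prm)\|^2 - 2\pscal{\Delta(\prm)}{h(\prm)}.
\]
Thus $(1-\lambda)^2\Gamma^2 + 2\pscal{\grd J(\prm)}{h(\prm)} \geq \|h(\prm)\|^2$ is equivalent to $(1-\lambda)^2\Gamma^2 + \|h(\prm)\|^2 - 2\pscal{\Delta(\prm)}{h(\prm)} \geq 0$. I would bound the cross term by Cauchy--Schwarz, $-2\pscal{\Delta(\prm)}{h(\prm)} \geq -2\|\Delta(\prm)\|\,\|h(\prm)\|$, and then use $\|\Delta(\prm)\| \leq (1-\lambda)\Gamma$ to complete the square: writing $B \eqdef (1-\lambda)\Gamma$ and $H \eqdef \|h(\prm)\|$, the left-hand side is at least $B^2 + H^2 - 2BH = (B-H)^2 \geq 0$, which closes the argument.

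The main obstacle is therefore not in this proposition but upstream, in establishing the uniform bias bound of Lemma~\ref{lem:lemma82} (which itself rests on the geometric ergodicity A\ref{ass:markov}, the per-step gradient bound of Proposition~\ref{prop:bdd_grad}, and the telescoping estimates of Lemma~\ref{lem:lemma83}). The only mild subtlety here is to apply Cauchy--Schwarz in the direction that, combined with $\|\Delta(\prm)\| \leq B$ and $H \geq 0$, yields a nonnegative perfect square rather than an unusable lower bound.
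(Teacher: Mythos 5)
Your proposal is correct and follows essentially the same route as the paper: both reduce everything to the decomposition $\grd J(\prm) = h(\prm) - \Delta(\prm)$ with the uniform bias bound $\|\Delta(\prm)\| \le (1-\lambda)\Gamma$ from Lemma~\ref{lem:lemma82}, then use the triangle inequality for the norm bound and a quadratic estimate of the cross term for the inner-product bound. The only cosmetic difference is that you complete the square via Cauchy--Schwarz while the paper invokes Young's inequality $\pscal{\Delta}{h} \le \tfrac{1}{2}(\|\Delta\|^2 + \|h\|^2)$ directly; these are the same estimate.
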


\begin{proof}
The first statement is a direct application of part 1) in Lemma~\ref{lem:lemma82} which holds under A\ref{ass:bdd_feat}, A\ref{ass:markov}.
To prove the second statement, let us define the error vector as
\beq
\Delta( \prm ) \eqdef h( \prm ) - \grd J( \prm )
\eeq
Applying Lemma~\ref{lem:lemma82} shows that $\sup_{ \prm \in \Prm} \| \Delta( \prm ) \|^2 \leq \Gamma^2 (1 - \lambda)^2$. We observe that
\beq
\begin{split}
\pscal{ \grd J(\prm) }{ h( \prm )} & = \pscal{ h(\prm) - \Delta(\prm) }{ h( \prm )}
= \| h(\prm) \|^2 - \pscal{ \Delta(\prm) }{ h( \prm )} \\
& \geq \| h(\prm) \|^2 - \frac{1}{2} \big( \| h(\prm) \|^2 + \| \Delta ( \prm) \|^2 \big) \eqsp.
\end{split}
\eeq
This implies
\beq
\frac{\Gamma^2}{2} (1 - \lambda)^2 + \pscal{ \grd J(\prm) }{ h( \prm )} \geq \frac{1}{2} \| h( \prm ) \|^2 \eqsp.
\eeq
Furthermore, it is straightforward to show that
\beq
\| \grd J( \prm ) \| \leq \| h( \prm ) \| + \| \Delta ( \prm ) \| \leq \| h( \prm ) \| + \Gamma (1-\lambda) \eqsp,
\eeq
which concludes the proof.
\end{proof}

\section{Existence and regularity of the solutions of Poisson equations}
\label{sec:existence-properties-Poisson}
Consider the following   assumptions:
\begin{assumption} \label{ass:MC2}
For any $\prm, \prm' \in \rset^d$, we have $\sup_{\state \in \Xset} \| \PX{} ( \state, \cdot) - {P}_{\prm'} (\state, \cdot) \|_{\TV} \leq L_P \| \prm -\prm' \|$.
\end{assumption}
\begin{assumption} \label{ass:MC3a}
For any $\prm, \prm' \in \rset^d$, we have $\sup_{\state \in \Xset} \| \HX{\prm}{\state} - H_{ \prm' }( \state ) \| \leq L_H \| \prm - \prm' \|$.
\end{assumption}
\begin{assumption} \label{ass:MC1}
There exists $\rho < 1$, $K_P < \infty$ such that
\beq
\sup_{ \prm \in \rset^d , \state \in \Xset } \| \PX[n]{} ( \state, \cdot ) - \pi_{\prm} (\cdot) \|_{\TV} \leq  \rho^n  K_P,
\eeq
\end{assumption}
\begin{Lemma}
\label{lem:properties-poisson}
Assume A\ref{ass:MC2}--\ref{ass:MC1}. Then, for any $\prm \in \Prm$ and $\state \in \Xset$,
\begin{align}
\label{eq:px_bd0-poisson}
\| \hHX{\prm}{\state} \|  &\leq \frac{ \sigma K_P }{1 - \rho} \eqsp,\\
\label{eq:px_bd-poisson}
\| \PX{} \hHX{\prm}{\state} \| &\leq \frac{ \sigma \rho  K_P }{1 - \rho} \eqsp.
\end{align}
Moreover, for $\prm, \prm' \in \Prm$ and $\state \in \Xset$,
\beq \label{eq:bound_a2-poisson}
\| \PX{} \hHX{\prm}{\state} - {P}_{\prm'} \hHX{\prm'}{\state} \Big\|  \leq  L_{PH}^{(1)} \| \prm - \prm' \| \eqsp,
\eeq
where
\beq
L_{PH}^{(1)}= \frac{K_P^2 \sigma L_P}{(1-\rho)^2} \big( 2+K_P \big) + \frac{K_P}{1-\rho}  L_H \eqsp.
\eeq
\end{Lemma}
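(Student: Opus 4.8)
The plan is to build the Poisson solution explicitly as in Proposition~\ref{prop:a4a5}, namely $\hHX{\prm}{\state} \eqdef \sum_{t=0}^\infty \{ \PX[t]{} \HX{\prm}{\state} - h(\prm) \}$, and to reduce every bound to the geometric decay of centred kernel iterates. Writing $\bar{H}_{\prm} \eqdef \HX{\prm}{\cdot} - h(\prm)$, we have $\pi_{\prm} \bar{H}_{\prm} = 0$ and, since $h(\prm)$ is constant in $\state$ and $\PX[t]{}$ preserves constants, $\PX[t]{}\HX{\prm}{\state} - h(\prm) = \int \bar{H}_{\prm}\,\rmd(\PX[t]{}(\state,\cdot) - \pi_{\prm})$. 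Projecting onto an arbitrary unit vector and using A\ref{ass:MC3} ($\sup_{\state}\| \bar{H}_{\prm}(\state)\| \le \sigma$) together with A\ref{ass:MC1} gives the pointwise estimate $\| \PX[t]{}\HX{\prm}{\state} - h(\prm) \| \le \sigma \rho^t K_P$. This makes the series absolutely convergent; summing it yields \eqref{eq:px_bd0-poisson}, and since $\PX{}\hHX{\prm}{\state} = \sum_{t\ge 1}\{\PX[t]{}\HX{\prm}{\state} - h(\prm)\}$, summing from $t=1$ yields \eqref{eq:px_bd-poisson}. The Poisson equation \eqref{eq:poisson-equation} follows by re-indexing the telescoping series.

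Before the Lipschitz estimate I would record two elementary facts about signed measures. (i) \emph{Contraction on mass-zero measures}: if $\nu(\Xset)=0$ then, decomposing $\nu$ into its Jordan parts (which have equal mass) and applying A\ref{ass:MC1} to each normalised part, $\| \nu \PX[t]{} \|_{\TV} \le K_P \rho^t \| \nu \|_{\TV}$. (ii) \emph{Kernel Lipschitzness}: A\ref{ass:MC2} gives $\| \nu(\PX{} - {P}_{\prm'}) \|_{\TV} \le L_P \|\prm - \prm'\| \, \|\nu\|_{\TV}$. Combining (i)--(ii) with the telescoping identity $\PX[t]{} - {P}_{\prm'}^t = \sum_{k=0}^{t-1} \PX[k]{}(\PX{}-{P}_{\prm'}){P}_{\prm'}^{t-1-k}$, evaluated at $\pi_{\prm}$ and passed to the limit in $\pi_{\prm} - \pi_{\prm'} = \lim_t \pi_{\prm}(\PX[t]{} - {P}_{\prm'}^t)$, I obtain $\| \pi_{\prm} - \pi_{\prm'} \|_{\TV} \le \frac{K_P L_P}{1-\rho}\|\prm-\prm'\|$. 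A\ref{ass:MC3a} and centring then give $\| h(\prm) - h(\prm') \| \le L_h \|\prm-\prm'\|$ with $L_h = L_H + \sigma K_P L_P/(1-\rho)$, whence $\sup_\state\| \bar{H}_{\prm}(\state) - \bar{H}_{\prm'}(\state) \| \le (L_H + L_h)\|\prm-\prm'\|$.

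For the Lipschitz bound I write $\PX{}\hHX{\prm}{\state} - {P}_{\prm'}\hHX{\prm'}{\state} = \sum_{t\ge 1}(\PX[t]{}\bar{H}_{\prm}(\state) - {P}_{\prm'}^t \bar{H}_{\prm'}(\state))$ and split each summand, with $A_t \eqdef \PX[t]{}(\state,\cdot) - \pi_{\prm}$, $B_t \eqdef {P}_{\prm'}^t(\state,\cdot) - \pi_{\prm'}$ and $\phi \eqdef \bar{H}_{\prm'}$, as $\int(\bar{H}_{\prm} - \bar{H}_{\prm'})\,\rmd A_t + (A_t - B_t)(\phi)$. The first piece is $\le (L_H+L_h)\|\prm-\prm'\| \, \rho^t K_P$ by the previous paragraph and is summable. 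For the second piece the crucial observation is the first-order recursion $A_t - B_t = A_{t-1}(\PX{} - {P}_{\prm'}) + (A_{t-1}-B_{t-1}){P}_{\prm'}$, which unrolls (using $A_0 - B_0 = \pi_{\prm'} - \pi_{\prm}$) into $A_t - B_t = \sum_{k=0}^{t-1} A_k (\PX{}-{P}_{\prm'}){P}_{\prm'}^{t-1-k} + (\pi_{\prm'} - \pi_{\prm}){P}_{\prm'}^t$. Applying (i)--(ii) and $\|A_k\|_{\TV}\le K_P\rho^k$, each term in the sum is bounded by $K_P^2 L_P \rho^{t-1}\|\prm-\prm'\|$ and the last term by $K_P\rho^t \|\pi_{\prm}-\pi_{\prm'}\|_{\TV}$, so $\|(A_t-B_t)(\phi)\| \le \sigma(t K_P^2 L_P \rho^{t-1} + \frac{K_P^2 L_P}{1-\rho}\rho^t)\|\prm-\prm'\|$. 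Summing over $t\ge1$ with $\sum_t t\rho^{t-1} = (1-\rho)^{-2}$, and adding the first piece, yields a finite multiple of $\|\prm-\prm'\|$; careful bookkeeping of the constants gives exactly $L_{PH}^{(1)}$.

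The step I expect to be the main obstacle is the second piece $(A_t - B_t)(\phi)$. Individually $A_t$ and $B_t$ decay like $\rho^t$, so a naive split into a kernel-difference term $\PX[t]{}-{P}_{\prm'}^t$ and a stationary-difference term $\pi_{\prm}-\pi_{\prm'}$ fails: the former carries the factor $\|\prm-\prm'\|$ but is only bounded (not summable) in $t$, while the latter is constant in $t$. Keeping $A_t - B_t$ together and only then invoking the mass-zero contraction (i) after the telescoping is what simultaneously produces the Lipschitz factor and the extra geometric factor $\rho^{t-1-k}$, turning the estimate into the summable $t\rho^{t-1}$ form; this is precisely where the $(1-\rho)^{-2}$ in $L_{PH}^{(1)}$ originates.
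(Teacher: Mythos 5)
Your proof is correct in substance and reaches all three conclusions, but it takes a genuinely different route from the paper. The paper's proof establishes \eqref{eq:px_bd0-poisson}--\eqref{eq:px_bd-poisson} exactly as you do (geometric decay of the centred kernel iterates under A\ref{ass:MC1}), but for the Lipschitz estimate it simply invokes ``a simple modification of'' Lemma~4.2 of \citet{fort2011convergence}, once for $\| \pi_{\prm} - \pi_{\prm'} \|_{\TV}$ and once for $\| \PX{} \hHX{\prm}{\state} - {P}_{\prm'} \hHX{\prm'}{\state} \|$, and reads off the constant. You instead prove everything from first principles: the mass-zero contraction $\| \nu \PX[t]{} \|_{\TV} \leq K_P \rho^t \| \nu \|_{\TV}$, the telescoping identity for $\PX[t]{} - P_{\prm'}^t$, and the first-order recursion for $A_t - B_t$ whose unrolled form produces the summable $t\rho^{t-1}$ term. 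This is more self-contained (no external lemma), and your intermediate bound $\| \pi_{\prm} - \pi_{\prm'} \|_{\TV} \leq \frac{K_P L_P}{1-\rho} \| \prm - \prm' \|$ is in fact tighter than the paper's $\frac{K_P(1+K_P)L_P}{1-\rho}$. Your closing remark correctly identifies the crux: splitting $\PX[t]{}(\state,\cdot) - \pi_{\prm}$ against $P_{\prm'}^t(\state,\cdot) - \pi_{\prm'}$ naively loses summability, and the recursion is what restores the geometric factor.

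One bookkeeping caveat: your final constant is not literally the stated $L_{PH}^{(1)}$. Tracking your estimates gives roughly $\frac{\sigma K_P^2 L_P (1+2\rho)}{(1-\rho)^2} + \frac{2 K_P \rho}{1-\rho} L_H$, and the coefficient $\frac{2K_P\rho}{1-\rho}$ of $L_H$ can exceed the paper's $\frac{K_P}{1-\rho}$ when $\rho > 1/2$. The fix is easy and you already have the tool: in the first piece $\int (\bar{H}_{\prm} - \bar{H}_{\prm'})\,\rmd A_t$, the measure $A_t$ has zero mass, so the constant shift $h(\prm) - h(\prm')$ integrates to zero and you may replace $L_H + L_h$ by $L_H$ alone, which brings the $L_H$ coefficient down to $\frac{K_P \rho}{1-\rho} \leq \frac{K_P}{1-\rho}$. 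With that adjustment your constant is dominated by the stated $L_{PH}^{(1)}$ (using $1+2\rho \leq 3 \leq 2+K_P$ for $K_P \geq 1$), so the lemma as stated follows.
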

\begin{proof}
Note that, under  A\ref{ass:MC1},
\beq
\label{eq:key-properties-poisson}
\begin{split}
& \sum_{i=0}^\infty \Big\| \PX[i]{}{} (\HX{\prm}{\state} - h( \prm )) - \pi_{\prm} \big( \HX{\prm}{\cdot} - h( \prm ) \big)  \Big\| \\ 
& \leq \| \HX{\prm}{\cdot} - h( \prm ) \|_\infty \; K_P \sum_{i=0}^\infty \rho^i \leq \frac{ \sigma K_P }{1 - \rho}\eqsp.
\end{split}
\eeq
Therefore, for all $\prm \in \Prm$ and $\state \in \Xset$, the series
\beq
\sum_{i=0}^\infty  \PX[i]{}{} (\HX{\prm}{\state} - h( \prm )) - \pi_{\prm} \big( \HX{\prm}{\cdot} - h( \prm ) \big)
\eeq
is uniformly converging and is a solution of the  Poisson equation \eqref{eq:poisson-equation}.
In addition, \eqref{eq:px_bd0-poisson} and \eqref{eq:px_bd-poisson} follow directly from \eqref{eq:key-properties-poisson}.
Under A\ref{ass:MC1}, applying a simple modification\footnote{We note that under A\ref{ass:MC1}, the constants $\rho_{\theta}, \rho_{\theta'}$ are the same in \citep[Lemma 4.2]{fort2011convergence} which simplifies the derivation and yields a tighter bound.} of \citep[Lemma 4.2, 1st statement]{fort2011convergence} shows\footnote{Note that we take the measurable function as $V = 1$ therein.} that for any $\prm, \prm' \in \Prm$, we have
\beq \label{eq:pi_diff}
\| \pi_{\prm} - \pi_{\prm'} \|_{\TV} \leq \frac{K_P (1 + K_P )}{1-\rho}~
\sup_{\state \in \Xset} \| \PX{} ( \state, \cdot) - {P}_{\prm'} (\state, \cdot) \|_{\TV} \eqsp.
\eeq
Again using a simple modification of \citep[Lemma 4.2, 2nd statement]{fort2011convergence} shows that for any $\State \in \Xset$, $\prm, \prm' \in \rset^d$, it holds
\begin{align}
& \Big\| \PX{} \hHX{\prm}{\state} - {P}_{\prm'} \hHX{\prm'}{\state} \Big\| \\
& \notag \leq \frac{K_P^2}{(1-\rho)^2} \Big( \sup_{ \prm \in \Prm, \state \in \Xset } \| \HX{\prm}{\state} - {h(\prm)}\| \Big) \Big( \sup_{\state \in \Xset} \| \PX{}
( \state, \cdot) - {P}_{\prm'} (\state, \cdot) \|_{\TV} \Big) \\
& \notag \quad+ \frac{K_P}{1-\rho} \Big( \sup_{ \prm \in \Prm, \state \in \Xset } \| \HX{\prm}{\state} -{h(\prm)} \| \Big) \| \pi_{\prm} - \pi_{\prm'} \|_{\TV} + \frac{K_P}{1-\rho} \sup_{\state \in \Xset} \| \HX{\prm}{\state} - H_{ \prm' }( \state ) \| \\[.1cm]
& \notag \leq \left( \frac{K_P^2 {\sigma} L_P}{(1-\rho)^2} \big( 2+K_P \big) + \frac{K_P}{1-\rho}  L_H \right) \| \prm - \prm' \| = L_{PH}^{(1)} \| \prm - \prm' \| \eqsp,
\end{align}
where the last inequality is due to A\ref{ass:MC2}, A\ref{ass:MC3a}, A\ref{ass:MC3} and \eqref{eq:pi_diff}.
\end{proof}

\end{document}